\documentclass[sn-aps]{sn-jnl}

\usepackage{graphicx}%
\usepackage{multirow}%
\usepackage{amsmath,amssymb,amsfonts}%
\usepackage{amsthm}%
\usepackage{mathrsfs}%
\usepackage[title]{appendix}%
\usepackage{xcolor}%
\usepackage{textcomp}%
\usepackage{manyfoot}%
\usepackage{booktabs}%
\usepackage{algorithm}%
\usepackage{algorithmicx}%
\usepackage{algpseudocode}%
\usepackage{listings}%
\usepackage{subcaption}%
\usepackage{wrapfig}%
\usepackage{adjustbox}%
\usepackage{tabularx}%
\usepackage{thmtools}%
\usepackage{thm-restate}%

\theoremstyle{thmstyleone}%
\newtheorem{theorem}{Theorem}
\newtheorem{lemma}[theorem]{Lemma}%
\newtheorem{corollary}{Corollary}[theorem]%

\theoremstyle{thmstyletwo}%
\newtheorem{remark}{Remark}%

\theoremstyle{thmstylethree}%

\newcommand\independent{\protect\mathpalette{\protect\independenT}{\perp}}
\def\independenT#1#2{\mathrel{\rlap{$#1#2$}\mkern2mu{#1#2}}}

\makeatletter
\newcommand{\multiline}[1]{%
  \begin{tabularx}{\dimexpr\linewidth-\ALG@thistlm}[t]{@{}X@{}}
    #1
  \end{tabularx}
}

\begin{document}

\title[Article Title]{DAGAF: A Directed Acyclic Generative Adversarial Framework for Joint Structure Learning and Tabular Data Synthesis}


\author*[1]{\fnm{Hristo} \sur{Petkov}}\email{hristo.petkov@strath.ac.uk}

\author[1]{\fnm{Calum} \sur{MacLellan}}\email{calum.maclellan@strath.ac.uk}
\equalcont{These authors contributed equally to this work.}

\author[1]{\fnm{Feng} \sur{Dong}}\email{feng.dong@strath.ac.uk}
\equalcont{These authors contributed equally to this work.}

\affil[1]{\orgdiv{Department of Computer and Information Sciences}, \orgname{University of Strathclyde}, \orgaddress{\street{16 Richmond Street}, \city{Glasgow}, \postcode{G1 1XQ}, \state{Lanarkshire}, \country{United Kingdom}}}




\abstract{Understanding the causal relationships between data variables can provide crucial insights into the construction of tabular datasets. Most existing causality learning methods typically focus on applying a single identifiable causal model, such as the Additive Noise Model (ANM) or the Linear non-Gaussian Acyclic Model (LiNGAM), to discover the dependencies exhibited in observational data. We improve on this approach by introducing a novel dual-step framework capable of performing both causal structure learning and tabular data synthesis under multiple causal model assumptions. Our approach uses Directed Acyclic Graphs (DAG) to represent causal relationships among data variables. By applying various functional causal models including ANM, LiNGAM and the Post-Nonlinear model (PNL), we implicitly learn the contents of DAG to simulate the generative process of observational data, effectively replicating the real data distribution. 
This is supported by a theoretical analysis to explain the multiple loss terms comprising the objective function of the framework.
Experimental results demonstrate that DAGAF outperforms many existing methods in structure learning, achieving significantly lower Structural Hamming Distance (SHD) scores across both real-world and benchmark datasets (Sachs: 47\%, Child: 11\%, Hailfinder: 5\%, Pathfinder: 7\% improvement compared to state-of-the-art), while being able to produce diverse, high-quality samples.}

\keywords{Adversarial Causal Discovery, Tabular Data Synthesis, Directed Acyclic Graph Learning, Post-Nonlinear Model, Additive Noise Model, Linear on-Gaussian Acyclic Model}



\maketitle

\section{Introduction}\label{sec1}

Understanding causal relationships between variables in a dataset is a crucial aspect of data analysis, as it can lead to numerous scientific discoveries. Although randomized controlled trials, which involve manipulating data through interventions, are still considered the gold standard for learning causal structures, such experiments are often impractical or even impossible due to many ethical, technical, or resource constraints. Addressing this challenge has led to a growing demand for causal studies to identify causal relationships from passive observational data.

In the last few decades, numerous approaches have emerged for performing observational causal discovery across various scientific fields, including bioinformatics \cite{Choi2020SupplementaryMO, Foraita2020CausalDO, Shen2020ChallengesAO}, economics \cite{Moneta2013CausalIB}, biology \cite{OpgenRhein2007FromCT, Londei2006ANM}, climate science \cite{EbertUphoff2012CausalDF, Runge2019InferringCF}, and social sciences \cite{Morgan2007CounterfactualsAC}. Most causal studies employ conditional independence-based algorithms, such as PC \cite{Spirtes2001CausationPA}, FCI \cite{Spirtes2000ConstructingBN}, and RFCI \cite{Colombo2011LearningHD}; discrete score-based methods like GES \cite{Chickering2003OptimalSI}, GES-mod \cite{AlonsoBarba2011ScalingUT}, and GIES \cite{Hauser2011CharacterizationAG}; or continuous optimization techniques, including NOTEARS \cite{Zheng2018DAGsWN}, DAG-GNN \cite{Yu2019DAGGNNDS}, GraN-DAG \cite{Lachapelle2020GradientBasedND}, and DAG-WGAN \cite{Petkov2022DAGWGANCS}. All these methodologies for causal structure learning have been rigorously tested and demonstrated substantial empirical evidence of their ability to produce accurate graphical representations of dependencies within datasets. However, strong performance does not necessarily resolve the issue of non-uniqueness in causal models, where multiple causal graphs can be used to define the same distribution.

To resolve the issue of non-uniqueness in causal models (e.g. Markov equivalent), where a single observed dataset may have multiple underlying structures, researchers often introduce additional assumptions \cite{peters2012identifiability}. They employ Functional Causal Models (FCM) parameterized with various structural equations to ensure that a unique causal graph is identified from a given distribution. Currently, there exist a significant amount of works that apply various identifiable (in most cases) models to learn causal structures from observational data. Noteworthy examples include the extensively researched linear non-Gaussian acyclic model (LiNGAM) \cite{Shimizu2006ALN}, the additive noise model (ANM) \cite{Hoyer2008NonlinearCD}, which provides limited support for non-linearity by assuming the relationships between variables are additive and the post-nonlinear model (PNL) \cite{Zhang2009OnTI} designed for studying complex non-linear relationships.

Among the aforementioned FCMs, the post-nonlinear (PNL) model is notable for being realistic and more accurately representing the sensor or measurement distortions commonly observed in real-world data \cite{zhang2010distinguishing}. It is also considered a superset that encompasses both ANM and LiNGAM. The PNL model consists of two functions: 1) an initial function that transforms data variables, with independent noise subsequently added to all transformations; and 2) an invertible function that applies an additional post-nonlinear transformation to each variable. Although the PNL model is one of the most general FCMs for modeling causal mechanisms in real data distributions, it is less studied than other identifiable models due to challenges associated with its post-nonlinearity and invertibility constraints.

Several approaches have been developed to investigate causal structure learning under the assumption of post-nonlinear (PNL) models, with most focusing on accurately approximating the invertibility function. For example, AbPNL \cite{uemura2022multivariate} uses an autoencoder architecture to learn a function and its inverse by minimizing a combination of independence and reconstruction loss terms. This model is applied to both bivariate and multivariate causal discovery within the context of PNL. Another approach, DeepPNL \cite{chung2019post}, parameterizes both functions of the PNL model using deep neural networks. Similarly, CAF-PoNo \cite{hoang2024enabling} employs normalizing flows to model the invertibility constraint associated with PNL. Rank-PNL, proposed by \cite{keropyan2023rank}, adapts rank-based methods to estimate the invertible function of the causal model. The latest work in this area, MC-PNL \cite{zhangpost}, aims to efficiently perform structure learning for PNL estimation by modeling nonlinear causal relationships using a novel objective function and block coordinate descent optimization. Despite recent advances in PNL estimation, causal structure learning under this functional causal model assumption remains relatively unexplored compared to other models such as ANM. 

Most existing causality learning methods typically focus on applying a single identifiable causal model to discover the dependencies exhibited in observational data. This presents a significant disadvantage as such approaches have no way to determine whether the model they assumed can learn an accurate representation of the underlying structure in a dataset. This is a critical problem to address, as misidentification of causal relationships in a dataset can result in incorrect data analysis, leading to bias in classification or inaccurate predictions. Moreover, causal discovery is also closely related to tabular data synthesis, where externally learned causal mechanisms are applied in Deep Generative Models (DGM) (e.g. DECAF \cite{Breugel2021DECAFGF}, Causal-TGAN \cite{Wen2021CausalTGANGT} and TabFairGAN \cite{Rajabi2021TabFairGANFT}) to synthesize new data samples. This method has its limitations because the accuracy of the causal knowledge must be evaluated prior to its application, which requires the availability of the true underlying structure of the dataset. This assumption proves to be impractical for real-world data, as such datasets are usually complex and extensive, with their causal structures often remaining unknown.

Recent advancements in generative modeling, including Digital Twins and transformer-based multi-attention networks, provide alternative approaches for modeling complex data relationships. Digital Twin models aim to create virtual representations of real-world systems, making them highly relevant for synthetic data generation. Similarly, attention-based architectures, such as multi-attention networks, dynamically weigh dependencies between variables. As generative models continue to gain popularity, there is significant potential to integrate them with causal discovery under a unified framework, enabling more accurate and interpretable data generation that remains faithful to underlying causal structures.


In this paper, we aim to address some of the challenges outlined above by proposing a novel framework called DAGAF, which is capable of modeling causality resembling the underlying causal mechanisms of the input data (i.e learnable causal structure approximations) and employing them to synthesize diverse, high-fidelity data samples. DAGAF learns multivariate causal structures by applying various functional causal models and determines through experimentation which one best describes the causality in a tabular dataset. Specifically, the framework supports the PNL model along with its subsets, which include LiNGAM and ANM. Unlike other methods that assume data generation is limited to a single causal model, DAGAF satisfies multiple semi-parametric assumptions. Additionally, supporting such a broad spectrum of identifiable models enables us to extensively compare our approach against the state-of-the-art in the field. We complete our study by investigating the quality of the discovered causality from a tabular data generation standpoint. We hypothesize that a precise approximation of the original causal mechanisms in a given probability distribution can be leveraged to produce realistic data samples. To prove our hypothesis, DAGAF incorporates an adversarial tabular data synthesis step, based on transfer learning, into our causal discovery framework.  


The contributions made throughout this work are outlined as follows: 

\begin{itemize}
    \item We unify causal structure learning and tabular data synthesis under a single framework capable of approximating the generative process of observational data and producing realistic samples. This approach allows us to generate quality synthetic data from the input, while preserving its causality (Section \ref{ref:sec3}). \\
    
    \item The proposed framework seamlessly integrates ANM, LiNGAM, and PNL models by leveraging a multi-objective loss function that combines adversarial loss, reconstruction loss, KL divergence, and MMD. This flexible formulation enables robust causal structure learning under diverse data-generating assumptions. Additionally, we provide a theoretical analysis to elucidate the contributions of these loss terms and how they complement each other in guiding convergence toward the true causal structure. We also analyze causal identifiability, providing conditions under which causal relationships can be uniquely determined, and examine how real-world data characteristics—such as noise, missing values, and distribution shifts—can impact identifiability (Section \ref{sec:3.1} and Section \ref{sec:3.1.6}). 
    \\
    
    \item We employ transfer learning in the context of causally-aware tabular data synthesis. DAGAF uses a two-step iterative approach that combines causal knowledge acquisition with high-quality data generation. The causal relationships identified in the first step are transferred and leveraged in the second step to facilitate causal-based tabular data generation. This enables more faithful synthetic data generation, preserving the underlying causal mechanisms (Section \ref{sec:3.2}). 
    \\

    
    \item We validate the effectiveness of DAGAF on synthetic, benchmark, and real-world datasets. Our results show significant improvement in DAG learning in comparison with other methods (Sachs: 47\%, Child: 11\%, Hailfinder: 5\%, Pathfinder: 7\% improvement compared to state-of-the-art). They also demonstrate that the learned causal mechanism approximations can be used to generate high-quality, realistic data (Section \ref{sec:5}). 
\end{itemize}

\section{Prerequisites}\label{sec:2}

This section explores the mathematical aspects of causality, relevant to the field of machine learning. In particular, we provide a brief overview of Functional Causal Models (FCM) \cite{pearl2009causality} and the assumptions employed in our causal structure learning algorithm. 

Let $\chi$ denote a tabular dataset such that $X = \{X_1, \dots, X_d\}$ is a set of $d$ random data variables, and $\chi \subseteq \mathbb{R}^{n \times d}$ represents a dataset consisting of $n$ samples $\mathbf{X} = \{\mathbf{X}_1, \dots ,\mathbf{X}_n\}$ drawn from the joint distribution $P(\mathbf{X})$. Individual data points and their attributes are denoted as $\mathbf{X}_i$ and $X_j$, respectively. Additionally, let $\mathcal{G}_\mathcal{A} \in \mathbb{D}$ be a ground truth Directed Acyclic Graph (DAG) representing the relationships between all the attributes $\{X_1, \dots, X_d\}$. Then, $P(\mathbf{X})$ can be expressed using a functional causal model (FCM), which describes the relationships within $\{X_1, \dots, X_d\}$. In this context, FCMs facilitate causal discovery from tabular datasets by encoding variables as nodes, and edges between them represent the underlying causal mechanisms responsible for data generation.

According to theory, an FCM is formulated as a triplet $\mathcal{M}_{\mathcal{G}_\mathcal{A}} \langle X, \mathcal{F}, \mathcal{Z} \rangle$, where $X = \{X_1, \dots, X_d\}$ is a set of endogenous variables, $\mathcal{F} = \{f_1, \dots, f_d\}$ is a set of structural equations, and $\mathcal{Z} = \{\mathcal{Z}_1, \dots, \mathcal{Z}_d\}$ is a set of exogenous (noise) variables. Under the local Markov condition and the causal sufficiency assumption, the joint distribution of $\mathbf{X}$ can be factorized as $P(\mathbf{X})= \prod_{j=1}^{d}P(X_j\mid{Pa}_j)$, where $X_j$ is a child of its parent variables ${Pa}_j$ in the graph $\mathcal{G}_\mathcal{A}$. Each $X_j$ can be modeled in its non-parametric form as:

\begin{equation}
\label{eq1}
    X_j := f_j(\text{Pa}_j, \mathcal{Z}_j).
\end{equation}
This representation of $P(\mathbf{X})$ allows us to sequentially model the causal mechanisms underlying $\chi$, defining its generative process.

Furthermore, we assume faithfulness, which enables the discovery of causal structures from continuous observational data using various nonlinear and semi-parametric models. Our framework is applied to several types of models, including: Linear non-Gaussian Acyclic Models (LiNGAM), Additive Noise Models (ANM), and Post-Nonlinear Models (PNL). Each of these models has been proven to be causally identifiable under specific assumptions: 

\begin{itemize}
    \item \textbf{LiNGAM}: The causal identifiability of LiNGAM is guaranteed under the assumption of non-Gaussianity in the noise terms. Specifically, if the noise variables are non-Gaussian and independent from $X$, it has been shown that the underlying causal structure can be uniquely identified \cite{Shimizu2006ALN}.
    
    \item \textbf{ANM}: Additive Noise Models (ANM) assume that the Gaussian noise term $\mathcal{Z}_j$ is independent of the parent variables $Pa_j$. This assumption enables the identifiability of the causal direction between variables. Additionally, the function $f_j(\cdot)$ must be non-linear and three times differentiable, to ensure that the application of this model results in a unique determination of the causal direction between variables \cite{Hoyer2008NonlinearCD}.
    
    \item \textbf{PNL}: Post-Nonlinear Models (PNL) extend the ANM framework by introducing an additional non-linear transformation $g_j(\cdot)$ after the function $f_j(\cdot)$. The key assumptions for identifiability in PNL include the independence of the Gaussian noise terms and the non-linear and invertible nature of the function $g_j(\cdot)$. Under these conditions, the causal structure can be identified, even in the presence of complex non-linear interactions \cite{Zhang2009OnTI}.
\end{itemize}

\section{DAGAF: a general framework for simultaneous causal discovery and tabular data synthesis}
\label{ref:sec3}

DAGAF learns DAG structures from input data to simulate the generative process of their probability distribution. We model $G_A$ to represent the causal relationships within a dataset $\chi$. The model is capable of facilitating realistic sample synthesis with minimal loss of fidelity and diversity. We formalize our goal as follows. \\

\noindent \textbf{Goal}: Given $n$ i.i.d. observations $\mathbf{X} \sim P(\mathbf{X}) \in \chi$, we propose a general framework to learn $G_A \approx \mathcal{G}_\mathcal{A} \in \mathbb{D}$ together with a set of structural equations $\mathcal{F} = \{f_1,...f_d\}$, such that $\tilde{X}_j := f_j({Pa}_j, \mathcal{Z}_j)$ yields $\tilde{\mathbf{X}} \sim P_{G_A}(\tilde{\mathbf{X}}) \in \tilde{\chi}$ matching the input. \\

The DAGAF framework focuses on learning an approximation of the causal mechanisms $\{f_j({Pa}_j, \mathcal{Z}_j)\}$ involved in the generation of observations $\mathbf{X}$. The (semi)parametric assumptions outlined in Section \ref{sec:2} define each node $X_j \in G_A$ as a function $f_j : \mathbb{R}^d \rightarrow \mathbb{R}$. Under such circumstances, the general nonparametric form $\mathbb{E}[X_j|X_{pa(j)}] := \mathbb{E}_\mathcal{Z}(f_j(X, \mathcal{Z}))$ can be reduced to one of the following: 1) \textbf{Linear non-Gaussian Acyclic Models (LiNGAM)}: $\tilde{X} := f(X) + \mathcal{Z}$, where $f(X)$ is a linear function of $X$ and $\mathcal{Z}$ is a non-Gaussian noise term independent of $X$; 2) \textbf{Additive Noise Models (ANM)}: $\tilde{X}_j := f_j(Pa_j) + \mathcal{Z}_j$, 
where $f_j$ is a nonlinear function of the parent variables $Pa_j$, and $\mathcal{Z}_j \independent f_j(Pa_j), \mathcal{Z}_j \sim \mathcal{N}(\mu, \sigma^2_j)$; 3) \textbf{Post-Nonlinear Models (PNL)}: $\tilde{X}_j := g_j(f_j(Pa_j) + \mathcal{Z}_j)$, where $g_j$ is a nonlinear function and $\mathcal{Z}_j \independent f_j(Pa_j), \mathcal{Z}_j \sim \mathcal{N}(\mu, \sigma^2_j)$.

Algorithm \ref{alg:cap} provides an overview of the training process. Section \ref{sec:3.1} details Step 1, which focuses on causal structure learning. Furthermore, since the framework recovers the causal structure by learning the underlying data generative process of $\mathbf{X}$, it is naturally well-suited for data synthesis. However, it requires training a separate Deep Generative Model (DGM) involving a discriminator and a generator in an additional training phase, which is explained in detail in Section \ref{sec:3.2}. The architecture and training procedure of DAGAF are described in Section \ref{sec:3.3}. A visual representation of the model pipeline is provided in Figure \ref{fig:3.1}. 

\begin{algorithm} [!htp]
\caption{DAGAF training algorithm}
\label{alg:cap}
\begin{algorithmic}
\scriptsize
\Require Sample $n$ observational data points $\{\mathbf{X}_1, \dots ,\mathbf{X}_n\}$ from the training data and $d$ noise vectors $\{Z_1, \dots ,Z_d\}$ from normal or uniform distributions. Generate $n$ synthetic data samples $\{\tilde{\mathbf{X}}_1, \dots ,\tilde{\mathbf{X}}_n\}$, with data attributes $\tilde{X} := f(X) + \mathcal{Z}$, $\tilde{X}_j := f_j({Pa}_j) + \mathcal{Z}_j$ or $\tilde{X}_j := g_j(f_j({Pa}_j) + \mathcal{Z}_j)$ depending on whether LiNGAM, ANM or PNL is assumed. \\
\Ensure The acyclicity constraint value $h(A^{L_0}(f))$ is higher than its tolerance of error $h\_tol$ set to 1e-8. Each step during training has its own instance of DAG-Notears-MLP. Causal information is transferred from the FCM into the DGM architecture.\\
\State \textbf{Step 1}: Poly-assumptive causal structure learning
\State \qquad LiNGAM, ANM $\rightarrow$ learn $f$ by minimizing a combination of loss terms including \State \qquad adversarial loss \eqref{eq2}, Mean Squared Error \eqref{eq3}, Kullback-Lieber divergence \eqref{eq:4}, 
\State \qquad Maximum Mean Discrepancy \eqref{eq:5} and the acyclicity constraint from \cite{Zheng2019LearningSN} 

\State \qquad PNL $\rightarrow$ learn $f$ using the loss terms described in the LiNGAM, ANM case and 
\State \qquad $g^{-1}$ by solving \eqref{eq9}
\State \qquad This step recovers a graph representation $G_A$ of the causal mechanisms in $\mathbf{X}$.\\

\State \textbf{Step 2}: Generative process simulation under multiple causal model assumptions

\State \qquad LiNGAM, ANM $\rightarrow$ learn $f$ by computing \eqref{eq2}
\State \qquad PNL $\rightarrow$ learn $f$ and $g$ by finding the optimal value for \eqref{eq2}
\State \qquad This step models a generative process involving $G_A$ through adversarial 
\State \qquad training, producing new data samples. 
\end{algorithmic}
\end{algorithm}

\begin{figure*} [!ht]
  \centering
  \includegraphics[width=11.4cm, keepaspectratio]{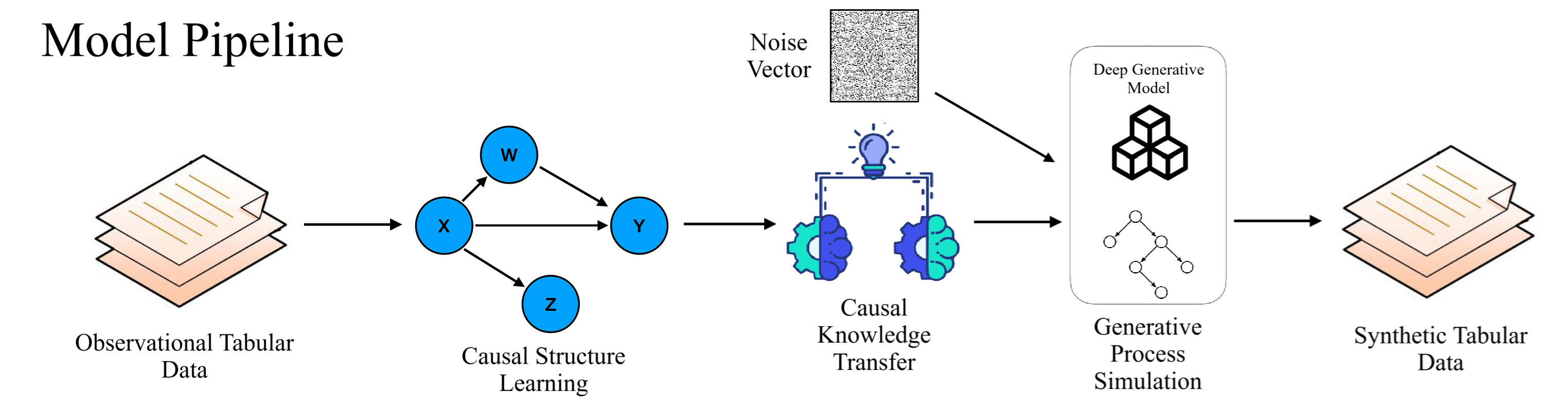}
  \caption{Pipeline of the DAGAF algorithm} 
  \label{fig:3.1}
\end{figure*}

\subsection{Loss functions for causal structure learning}
\label{sec:3.1}

In Step 1 of DAGAF training, the goal is to model DAGs using a sophisticated objective function that integrates a combination of loss terms used for causal structure learning. In its basic form, the framework covers LiNGAM and ANM by utilizing adversarial training and reconstruction loss, along with some regularization terms, to learn how to generate $\tilde{\mathbf{X}}$ from $\mathbf{X}$. One benefit of our framework is its flexibility, allowing the basic approach to be easily adapted to support causal structure learning using PNL. The advanced form further extends the functionality of the framework to cover PNL by adding an additional reconstruction loss term to model the non-linear function $g_j$.

\subsubsection{Adversarial loss with gradient penalty}
\label{sec:3.1.1}

DAGAF simulates $\mathbf{X}$ by learning how to generate $\tilde{\mathbf{X}}$ using causal mechanism approximations of $\{f_j({Pa}_j, \mathcal{Z}_j)\} \in P(\mathbf{X})$. To achieve this, we do not directly model $\tilde{\mathbf{X}}$ but instead focus on recovering the causal mechanisms $\mathcal{F} = \{f_1, \dots, f_d\}$, where each $f_j$ is defined as $f_j({Pa}_j; W^1_j, \dots, W^L_j) + \mathcal{Z}_j$.
Learning the causal mechanisms involves determining the immediate parents of each variable, which are encoded in the causal structure of $\mathbf{X}$. We minimize  the Wasserstein distance $\mathbb{W}_p(P(\mathbf{X}), P_{G_A}(\tilde{\mathbf{X}}))$  through adversarial training, which implicitly refines the causal structure $G_A$, facilitating the identification of the causal mechanisms. The Wasserstein distance with gradient penalty loss term is defined as follows:

\begin{equation}
\label{eq2}
\begin{aligned}
    \mathcal{L}_{\text{adv}}(\mathbf{X}, \tilde{\mathbf{X}}) &=  
    \sup_{\|\phi\|_L \leq 1} \mathbb{E}_{\mathbf{X} \sim P(\mathbf{X})}[\phi(\mathbf{X})] - \mathbb{E}_{\tilde{\mathbf{X}} \sim P_{G_A}(x \mid G)}[\phi(\tilde{\mathbf{X}})] 
    \\ 
    &= \mathbb{E}_{\mathbf{X} \sim P(\mathbf{X})}[D(\mathbf{X})] - \mathbb{E}_{\tilde{\mathbf{X}} \sim P_{G_A}(\tilde{\mathbf{X}})}[D(\tilde{\mathbf{X}})] \\ 
    &+ \mathbb{E}_{\hat{\mathbf{X}} \sim P(\hat{\mathbf{X}})}[(||\nabla_{\hat{\mathbf{X}}}D(\hat{\mathbf{X}})||_2 - 1)^2],
\end{aligned}
\end{equation} where $\phi(\mathbf{X})$ is a 1-Lipschitz function used to approximate the Wasserstein distance $\mathbb{W}p(P(\mathbf{X}), P_{G_A}(\tilde{\mathbf{X}}))$. The function $D(\mathbf{X})$ serves as the discriminator, which is trained adversarially to learn $\phi(\mathbf{X})$ and distinguish between real and generated samples.

In this framework, adversarial training to optimise \eqref{eq2} involves learning the set of structural equations $\mathcal{F} = \{f_1, \dots, f_d\}$, where each $f_j$ models the causal mechanism of node $X_j$. The FCM-based generator $\mathcal{M}$ learns to generate synthetic data that mimics the true distribution, while the discriminator $D(\mathbf{X})$ evaluates the divergence between real and generated samples. The objective is formulated as a min-max optimization, where $\mathcal{M}$ aims to minimize the discrepancy measured by $D(\mathbf{X})$, while $D(\mathbf{X})$ is trained to distinguish between real and generated distributions, typically using the Wasserstein distance. Theoretically, this min-max optimization problem achieves its optimal point typically characterized as a Nash equilibrium, when the generator can yield synthetic data that is indistinguishable from $\mathbf{X}$, thereby approximating the generative process of $\mathbf{X}$ (i.f.f. the causal structure in $G_A$ is correctly identified). \\

\begin{restatable}{proposition}{primepropone}
\label{prop1}
    Let the ground-truth DAG $\mathcal{G}_\mathcal{A}$ be uniquely identifiable from $P(\mathbf{X})$, then, under the causal identifiability assumption, minimizing adversarial loss ensures that the implicitly generated distribution $P_{G_A}(\tilde{\mathbf{X}})$ aligns with $P(\mathbf{X})$.

    \[
        \mathbb{W}_p(P(\mathbf{X}), P_{G_A}(\tilde{\mathbf{X}})) = 0 \to P_{G_A}(\tilde{\mathbf{X}}) = P(\mathbf{X}) \Longleftrightarrow G_A = \mathcal{G}_\mathcal{A}.
    \]
\end{restatable}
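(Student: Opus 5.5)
The statement has two parts, and I would prove them separately: first $\mathbb{W}_p = 0 \Rightarrow P_{G_A}(\tilde{\mathbf{X}}) = P(\mathbf{X})$, then $P_{G_A}(\tilde{\mathbf{X}}) = P(\mathbf{X}) \Leftrightarrow G_A = \mathcal{G}_\mathcal{A}$.

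\textbf{Step 1: vanishing Wasserstein distance.} This is a standard metric fact. For $p \geq 1$ and distributions with finite $p$-th moments, $\mathbb{W}_p$ is a metric on probability measures. I would argue it directly for $p=1$ through the Kantorovich--Rubinstein duality used in \eqref{eq2}.

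Suppose $\sup_{\|\phi\|_L\le 1}\mathbb{E}_{P}[\phi] - \mathbb{E}_{P_{G_A}}[\phi] = 0$. Replacing $\phi$ by $-\phi$ shows that the two expectations agree for every 1-Lipschitz $\phi$. By rescaling, they agree for every bounded Lipschitz function. Bounded Lipschitz functions are convergence-determining: they approximate indicators of closed sets monotonically, e.g. $\phi_k(x) = \max(0, 1 - k\,\mathrm{dist}(x,C))$. Hence $P_{G_A}(\tilde{\mathbf{X}}) = P(\mathbf{X})$.

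For general $p$, I would use $\mathbb{W}_1 \le \mathbb{W}_p$. I would also note that at the optimum the gradient-penalty term in \eqref{eq2} does not change the minimiser, since the optimal critic has unit gradient norm almost everywhere along interpolates.

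\textbf{Step 2, direction ($\Leftarrow$).} Assume $G_A = \mathcal{G}_\mathcal{A}$ and that the structural equations are expressive enough. Then one can take $f_j$ (and $g_j$ for PNL) equal to the true mechanisms and $\mathcal{Z}_j$ with the true noise law. By the Markov factorisation $P(\mathbf{X}) = \prod_j P(X_j\mid Pa_j)$ from Section \ref{sec:2}, the generated distribution coincides with $P(\mathbf{X})$, so the minimum value $0$ is attained.

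\textbf{Step 2, direction ($\Rightarrow$).} Suppose $P_{G_A}(\tilde{\mathbf{X}}) = P(\mathbf{X})$. Then $P(\mathbf{X})$ is generated by an FCM of the assumed class (LiNGAM, ANM or PNL) with DAG $G_A$, with noise independent of the parents by construction of the generator. The identifiability results cited in Section \ref{sec:2} state that any two FCMs of the same class inducing the same joint distribution share the same graph: \cite{Shimizu2006ALN} for LiNGAM, \cite{Hoyer2008NonlinearCD} (in its multivariate extension by Peters et al.) for ANM, and \cite{Zhang2009OnTI} for PNL. Therefore $G_A = \mathcal{G}_\mathcal{A}$.

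\textbf{Main obstacle.} The difficulty lies in the direction $P_{G_A} = P \Rightarrow G_A = \mathcal{G}_\mathcal{A}$. Identifiability must hold in the multivariate setting, which requires the generator's functions to satisfy the same regularity conditions as the ground truth: non-linearity and three-times differentiability for ANM, invertible $g_j$ for PNL, and non-Gaussian noise for LiNGAM. The generated noise must also be jointly independent. If $G_A$ is only acyclic by the soft constraint $h(A)=0$, I would additionally require exact acyclicity at the optimum.

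I would handle this by making these conditions explicit hypotheses of the proposition, consistent with its phrase ``under the causal identifiability assumption''. I would then invoke the multivariate identifiability theorem: under causal minimality, restricted FCMs are identifiable from the joint distribution. With these hypotheses in place, the two directions combine to give the displayed chain.
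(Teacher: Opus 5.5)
Your proposal is correct and follows the same route as the paper's proof. First, identity of indiscernibles for $\mathbb{W}_p$ gives $P_{G_A}(\tilde{\mathbf{X}}) = P(\mathbf{X})$; then the identifiability assumption rules out $G_A \neq \mathcal{G}_\mathcal{A}$. You are more careful than the paper, which simply asserts both facts: your Kantorovich--Rubinstein argument, together with your explicit hypotheses, supplies exactly what its one-line appeal to ``the identifiability assumption'' silently relies on (an expressive generator for the direction from $G_A = \mathcal{G}_\mathcal{A}$ to an exact fit, and, for the converse, a generator in the same restricted FCM class with causal minimality and exact acyclicity).
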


\begin{proof}
    The proof of Proposition \ref{prop1} is available in Appendix \ref{a1}.
\end{proof}

\color{black}


\subsubsection{Reconstruction loss}

We add a reconstruction loss to enhance causal structure learning. In this context, we use Mean Squared Error (MSE) as the reconstruction loss:

\begin{equation}
        \label{eq3}
        \mathcal{L}_{\text{MSE}}(\mathbf{X}, \tilde{\mathbf{X}}) =
        \mathbb{E}_{\mathbf{X}, \tilde{\mathbf{X}}}(||\mathbf{X}-\tilde{\mathbf{X}}||_2)=\frac{1}{n} \sum\limits_{i=1}^{n} \sum\limits_{j=1}^{d}||X_{ij} - {\{f_j({Pa}_j;W^1_j,...,W^L_j)+\mathcal{Z}_j\}}_i||_2
\end{equation}
By reducing \eqref{eq3} through parameter optimization, we minimize the residual distance between individual samples $||\mathbf{X} - \tilde{\mathbf{X}}||$ such that our model produces $\tilde{\mathbf{X}} \sim P_{G_A}(\tilde{\mathbf{X}})$ by implicitly learning the causal dependencies of $\mathbf{X}$ represented in $G_A$. Essentially, this reconstruction process results in a closer approximation of the causal mechanisms responsible for producing $\mathbf{X}$. \\

\begin{restatable}{proposition}{primeproptwo}
\label{prop2}
    The MSE loss ensures point-wise alignment between the data and the prediction of the model, improving the smoothness of the gradient and the stability of adversarial optimization.
    \[\inf_{G_A \in \mathbb{D}} \mathcal{L}_{\text{MSE}}(\mathbf{X}, \tilde{\mathbf{X}}) =0 \Rightarrow \forall i, \tilde{\mathbf{X}}_i = \mathbf{X}_i  \]
\end{restatable}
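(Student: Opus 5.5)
The formal content of Proposition \ref{prop2} is the implication $\inf_{G_A \in \mathbb{D}} \mathcal{L}_{\text{MSE}}(\mathbf{X}, \tilde{\mathbf{X}}) = 0 \Rightarrow \tilde{\mathbf{X}}_i = \mathbf{X}_i$ for all $i$. The qualitative claims about gradient smoothness and stability are interpretive and would be handled by a separate, softer argument. The plan for the implication is to use the fact that \eqref{eq3} is a finite sum of nonnegative terms.

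\textbf{Step 1 (infimum is attained).} Read ``$\inf = 0$'' as meaning that the infimum is attained at some parameter configuration $(G_A, W^1, \dots, W^L)$. If it is only approached, restrict to a compact parameter set, or use continuity of $\mathcal{L}_{\text{MSE}}$ in the parameters together with a convergent subsequence.

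\textbf{Step 2 (decompose the loss).} At that minimizer, write
\[
\mathcal{L}_{\text{MSE}} = \frac{1}{n}\sum_{i=1}^{n}\sum_{j=1}^{d} \bigl\| X_{ij} - \tilde X_{ij} \bigr\|_2 .
\]
Each summand is a norm and hence nonnegative. A sum of nonnegative terms equal to zero forces every term to vanish, so $\|X_{ij} - \tilde X_{ij}\|_2 = 0$ for all $i, j$.

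\textbf{Step 3 (conclude).} Definiteness of the norm gives $X_{ij} = \tilde X_{ij}$ for every $i$ and $j$. Stacking over the attributes $j$ yields $\tilde{\mathbf{X}}_i = \mathbf{X}_i$ for every sample $i$, which is exactly the point-wise alignment claimed.

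\textbf{Step 4 (stability remark).} For the qualitative part, I would note two facts. First, the MSE term supplies a per-sample gradient $\nabla_{\tilde X}\mathcal{L}_{\text{MSE}} \propto (\tilde X - X)$ that is Lipschitz and does not depend on the discriminator. Adding it to \eqref{eq2} therefore gives the generator a well-conditioned descent direction even when the critic $D$ is uninformative. Second, point-wise equality implies equality of the empirical distributions, so $\mathbb{W}_p = 0$ on the sample. This links the result back to Proposition \ref{prop1}.

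\textbf{Main obstacle.} The difficulty is the noise term $\mathcal{Z}_j$ inside $\tilde X_{ij}$, which makes $\mathcal{L}_{\text{MSE}}$ a random quantity. I would handle this by conditioning on a fixed realization of the noise, or by interpreting the loss in expectation over $\mathcal{Z}$. In the expectation reading, zero loss gives $\tilde{\mathbf{X}}_i = \mathbf{X}_i$ almost surely, which forces $\mathcal{Z}_j$ to be degenerate or to be absorbed into the fitted residual. I would state the result for a fixed noise draw to avoid this subtlety, and comment that the ``almost surely'' version follows identically.
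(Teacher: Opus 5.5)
Your Steps~2--3 are the paper's argument and are correct: the loss is a finite sum of nonnegative terms, so a zero value forces every residual to vanish and $\tilde{\mathbf{X}}_i=\mathbf{X}_i$. The paper, like your primary reading, simply treats the infimum as an attained zero.

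Your fallback for the non-attained case does not work. Restricting to a compact parameter set changes the infimum, which can then be positive, so the hypothesis is lost. A minimizing sequence of network weights need not be bounded, so there may be no convergent subsequence. If the infimum is genuinely not attained, no configuration satisfies $\tilde{\mathbf{X}}_i=\mathbf{X}_i$. The correct replacement is the limiting statement: each summand is at most $n\,\mathcal{L}_{\text{MSE}}$, so every residual tends to zero along any minimizing sequence.

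In the paper's setting the worry is actually moot. For an exactly acyclic $G_A$ and a noise-free predictor (the paper states that no noise vector is used in Step~1), take the first non-constant variable in a topological order. Its parents precede it, so they are all constant columns, and its prediction is constant. Hence the infimum is strictly positive unless every column is constant, and in that case it is trivially attained. This also sharpens your own remark that zero loss forces degenerate noise.

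Separately, your Steps~2 and~4 use different losses. Step~2 reads \eqref{eq3} literally as a sum of unsquared norms. For scalar entries this is an absolute error, with gradient $\frac{1}{n}\operatorname{sign}(\tilde X_{ij}-X_{ij})$, which is neither Lipschitz nor shrinking with the residual. Your Step~4 claim $\nabla_{\tilde X}\mathcal{L}_{\text{MSE}}\propto(\tilde X-X)$ needs the squared form. Use the squared reading throughout, as the paper's proof does when it writes $\|\mathbf{X}_i-\tilde{\mathbf{X}}_i\|^2=0$; the implication is unaffected.

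With that fixed, your qualitative part is a different heuristic from the paper's.
\begin{itemize}
\item The paper argues that the residual scales the gradient, so the gradient decays as $\tilde{\mathbf{X}}_i\to\mathbf{X}_i$ and oscillation is avoided.
\item You stress that the MSE gradient is linear in the output and independent of the critic. You also note that point-wise equality gives zero empirical $\mathbb{W}_p$, linking back to Proposition~\ref{prop1}, which the paper does not do.
\end{itemize}
Your Lipschitz property holds only in output space. In parameter space the network Jacobian enters, so ``well-conditioned'' is not established.
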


\begin{proof}
    The proof of Proposition \ref{prop2} is available in Appendix \ref{a2}.
\end{proof}
\color{black}
The MSE loss plays a key role in DAG learning, as evidenced by our experiments. This aligns with the approach taken by most existing works in DAG-learning, where MSE is the most commonly used loss function.

\subsubsection{Kullback–Leibler Divergence}

We introduce Kullback–Leibler divergence (KLD) \cite{Kullback1951OnIA} as a regularization term for nonlinear cases with additive Gaussian noise in ANM to prevent overfitting of $\mathbf{X}$ and inaccurate causal mechanisms in the generative process of $\tilde{\mathbf{X}}$. The KLD term is typically applied in Variational Autoencoders (VAE) as a regularization component of the Evidence Lower Bound (ELBO) loss function for latent variables. It is defined as $D_{KL}\left(\mathcal{N}(\mu, \sigma^2) \| \mathcal{N}(0, 1)\right) = \frac{1}{2} \sum_{i=1}^n \left(\sigma_i^2 + \mu_i^2 - \log(\sigma_i^2) - 1\right)$ where $\mu$ and $\sigma$ denote the mean and standard deviation of $\tilde{\mathbf{X}}$. In our setup, we apply this to regularize $\tilde{\mathbf{X}}$. Additionally, we only model the mean of $P_{G_A}(\tilde{\mathbf{X}})$ and set its variance to 1, hence reducing the regularization function to:

\begin{equation}
\label{eq:4}
     \mathcal{L}_{\text{KLD}}(\mathbf{X}, \tilde{\mathbf{X}}) = D_{KL}(P(\mathbf{X})||P_{G_A}(\tilde{\mathbf{X}})) = \frac{1}{2} \sum\limits_{i=1}^{n}(\mu^2_i).
\end{equation} 

We use the Kullback–Leibler divergence (KLD) as a regularization term for $\tilde{\mathbf{X}}$, the model-generated data, to simulate an additive noise scenario where noise is incorporated into each data point. By applying KLD to $\tilde{\mathbf{X}}$, we encourage the model to produce $\tilde{\mathbf{X}}$ that closely matches the true data distribution while accounting for the variability introduced by noise. This regularization helps the model avoid overfitting by ensuring that the generated data reflects the natural variations present in the real data, leading to more robust and realistic samples. As our model involves learning causal mechanisms, this prevents the model from learning incorrect causal structures, such as misidentifying child nodes as parent nodes.\\

\begin{restatable}{proposition}{primepropthree}
\label{prop3}
    The $\mathcal{L}_{\text{KLD}}(\mathbf{X}, \tilde{\mathbf{X}})$ regularization provides a statistical prior on the learned distribution $P_{G_A}(\tilde{\mathbf{X}})$, ensuring it adheres to a Gaussian assumption. It also acts as a stabilizing factor in optimization, particularly under the additive Gaussian noise model. It complements the adversarial and MSE losses, ensuring both alignment and smoothness of $P_{G_A}(\tilde{\mathbf{X}})$.
\end{restatable}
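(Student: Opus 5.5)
The plan is to turn the informal statement of Proposition \ref{prop3} into three precise claims and prove each one: (i) a prior claim, (ii) a stabilization claim, and (iii) a complementarity claim. Throughout, $P_{G_A}(\tilde{\mathbf{X}})$ is modeled per sample as $\mathcal{N}(\mu_i, 1)$, which is the convention that reduces the Gaussian KL to \eqref{eq:4}.

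For (i), start from the closed form
\[
D_{KL}\left(\mathcal{N}(\mu,\sigma^2)\,\|\,\mathcal{N}(0,1)\right)=\tfrac12\left(\sigma^2+\mu^2-\log\sigma^2-1\right).
\]
Setting $\sigma^2=1$ gives $\tfrac12\mu^2$, and summing over samples recovers \eqref{eq:4}. Next, observe that minimizing the objective $\mathcal{L}_{\text{adv}}+\lambda\mathcal{L}_{\text{MSE}}+\beta\mathcal{L}_{\text{KLD}}$ is a MAP estimate, i.e. the negative log-posterior, under the Gaussian prior $\mu_i\sim\mathcal{N}(0,1/\beta)$. This follows because $-\log\mathcal{N}(\mu_i;0,1/\beta)=\tfrac{\beta}{2}\mu_i^2+\text{const}$. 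That identity is what makes the Gaussian prior and the adherence to the ANM noise assumption $\mathcal{Z}_j\sim\mathcal{N}(\mu,\sigma_j^2)$ precise.

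For (ii), the stabilization claim, compute the gradient and Hessian with respect to $\mu$: $\nabla_\mu\mathcal{L}_{\text{KLD}}=\mu$ and $\nabla^2_\mu\mathcal{L}_{\text{KLD}}=I$. The term is therefore $1$-strongly convex in $\mu$ and has Lipschitz gradient. When $\beta\mathcal{L}_{\text{KLD}}$ is added to the MSE term, which is convex in $\tilde{\mathbf{X}}$ and hence in $\mu$ for the ANM parametrization $\tilde{X}_j=f_j(Pa_j)+\mathcal{Z}_j$, the Hessian in the mean coordinates has smallest eigenvalue at least $\beta$. This gives a unique minimizer in those coordinates and linear convergence of gradient descent there. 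It also yields the shrinkage formula $\mu^\star=\bar{x}/(1+\beta)$, in the style of ridge regression, which bounds $\|\mu\|$ and so prevents the overfitting and parent/child inversion mentioned in the text. I would state this stabilization only locally, in the output coordinates, because the networks $W^1_j,\dots,W^L_j$ make the loss nonconvex in the parameters.

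For (iii), the complementarity claim, combine Proposition \ref{prop1} and Proposition \ref{prop2}:
\begin{itemize}
\item The adversarial term controls global distributional alignment, $\mathbb{W}_p\to 0$.
\item The MSE term controls point-wise alignment.
\item The KL term adds curvature and smoothness, via the bounded gradient $\mu$ and the constant Hessian.
\end{itemize}
Since all three terms are nonnegative, any joint minimizer with zero adversarial and MSE loss remains a minimizer once the KL penalty is added, up to an $O(\beta)$ bias in the mean. This bias vanishes if the data are standardized, which is the paper's setting.

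The main obstacle is this last point. The KL term genuinely conflicts with exact alignment whenever the data mean is nonzero, so I will state complementarity under a standardization assumption, $\mathbb{E}[\mathbf{X}]=0$. Without that assumption, I will state it as an $O(\beta)$ perturbation of the Proposition \ref{prop1} optimum.
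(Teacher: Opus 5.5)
Your algebra in (i) and (ii) is correct: the reduction of the Gaussian KL to \eqref{eq:4}, the identity $-\log\mathcal{N}(\mu_i;0,1/\beta)=\frac{\beta}{2}\mu_i^2+\text{const}$, and the identity Hessian in output coordinates. Two steps, however, do not go through.

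\textbf{The standardization fix in (iii).} This contradicts your own per-sample convention. With $P_{G_A}$ modeled as $\mathcal{N}(\mu_i,1)$ per sample, you have $\mu_i=\tilde{\mathbf{X}}_i$, so $\mathcal{L}_{\text{KLD}}=\frac12\sum_i\|\tilde{\mathbf{X}}_i\|^2$ is a ridge penalty on every prediction. At the point-wise aligned solution of Proposition \ref{prop2}, its value is $\frac12\sum_i\|\mathbf{X}_i\|^2$ and its gradient is $\mathbf{X}_i\neq 0$. Centring the data, $\mathbb{E}[\mathbf{X}]=0$, changes neither, because both depend on the individual $\mathbf{X}_i$, not their mean. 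Solving your MSE-plus-KL problem per sample gives $\tilde{\mathbf{X}}_i^\star=c\,\mathbf{X}_i$ with $c<1$, even for centred data.

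Your formula $\bar{x}/(1+\beta)$ silently switches to a single global mean. That is the only reading in which centring removes the penalty, and it is not the reading of \eqref{eq:4}. Nonnegativity of the added term also does not preserve minimizers unless the term vanishes there.

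So only your fallback survives. The KL term induces an irreducible $O(\beta)$ shrinkage that is in tension with exact alignment ($\mathbb{W}_p=0$, $\tilde{\mathbf{X}}_i=\mathbf{X}_i$). The exact-alignment identifiability conclusion of Proposition \ref{prop1} can then be invoked only approximately. The paper's notion of complementarity is just a division of roles: adversarial loss for global alignment, MSE for point-wise fit, KL for Gaussian residuals. So it never confronts the conflict you correctly spotted.

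\textbf{The Gaussian clause in (i).} The MAP identity does not make adherence to the ANM noise assumption precise. In your setup, Gaussianity of $P_{G_A}$ is imposed by the unit-variance convention, not ensured by the regularizer. A Gaussian prior on $\mu_i=f(\mathbf{X}_i)$ constrains the size of the regression outputs, not the law of the residuals $X_j-f_j({Pa}_j)$. If anything, the Gaussian-residual assumption enters through the squared-error term, read as a Gaussian negative log-likelihood.

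Likewise, the whole objective is not a negative log-posterior, since the Wasserstein term is not a likelihood. Bounding $\|\mu\|$ also does nothing to prevent parent/child inversion.

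The paper obtains this clause differently. It treats the KL term as a penalty $D_{KL}(P(\mathcal{Z}_j\mid {Pa}_j)\,\|\,\mathcal{N}(0,\sigma_j^2))$ on the residuals $\mathcal{Z}_j=X_j-f_j({Pa}_j)$. Minimizing it pushes the residuals toward the Gaussian, and smoothness follows because $\log\mathcal{N}(\mathcal{Z}_j;0,\sigma_j^2)$ is quadratic in $\mathcal{Z}_j$. That residual-level object is a different quantity from \eqref{eq:4}, but it is what carries ``ensuring it adheres to a Gaussian assumption''. If you keep the literal \eqref{eq:4}, you should weaken that clause to a Gaussian prior on the outputs.
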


\begin{proof}
    The proof of Proposition \ref{prop3} is available in Appendix \ref{a3}.
\end{proof}
\color{black}

\noindent  Note, this is not applicable to the LiNGAM causal model, due to the non-Gaussianity of the noise term $\mathcal{Z}$ under that specific assumption.

\subsubsection{Maximum Mean Discrepancy}
\label{sec:3.1.4}

The reconstruction loss and its regularization term focus solely on learning the mean of $P(\mathbf{X})$, while completely disregarding its variance. This implies that the reconstruction process involved in DAGAF is highly sensitive to rare occurrences (i.e. outliers) in $P(\mathbf{X})$. To address this issue, we further reduce the residual distance between the input distribution $\mathbf{X} \sim P(\mathbf{X})$ and the generated data distribution $\tilde{\mathbf{X}} \sim P_{G_A}(\tilde{\mathbf{X}})$ by introducing the Maximum Mean Discrepancy (MMD) \cite{Tolstikhin2016MinimaxEO}. We apply the kernel trick \cite{khemakhem2021causal} to compute the solution to this formula.

\begin{equation}
\label{eq:5}
    \begin{aligned}
         \mathcal{L}_{\text{MMD}}(\mathbf{X}, \tilde{\mathbf{X}}) &=  ||\mathbb{E}_{\mathbf{X} \sim P(\mathbf{X})}[k(\mathbf{X})] - \mathbb{E}_{\tilde{\mathbf{X}} \sim P_{G_A}(\tilde{\mathbf{X}})}[k(\tilde{\mathbf{X}})]||^2_\mathcal{H} \\
        &= \frac{1}{n}\sum\limits_{i \neq j}^{n}k(\mathbf{X}_i,\mathbf{X}_j) - \frac{2}{n}\sum\limits_{i \neq j}^{n}k(\mathbf{X}_i, \tilde{\mathbf{X}}_j) + \frac{1}{n}\sum\limits_{i \neq j}^{n}k(\tilde{\mathbf{X}}_i, \tilde{\mathbf{X}}_j),
    \end{aligned}
\end{equation} where $\mathcal{H}$ denotes the reproducing kernel Hilbert space (RKHS) and $k \in \mathcal{H}$ is a kernel function.

The MMD maximizes mutual information between $P(\mathbf{X})$ and $P_{G_A}(\tilde{\mathbf{X}})$, leading to alignment in both their means and overall shapes. Specifically, by matching the shapes of the distributions, the MMD term can help bring their variances closer together. Hence, by applying \eqref{eq:5} we indirectly model the standard deviation of $P_{G_A}(\tilde{\mathbf{X}})$ to mitigate mode collapse in $\mathbf{\tilde{X}}$ and discover the causal mechanisms responsible for producing its outliers. \\

\begin{restatable}{proposition}{primepropfour}
\label{prop4}
    Minimizing the Maximum Mean Discrepancy (MMD) loss $\mathcal{L}_{\text{MMD}}(\mathbf{X}, \tilde{\mathbf{X}})$ aligns higher-order statistics of $P(\mathbf{X})$ and $P_{G_A}(\tilde{\mathbf{X}})$, complementing adversarial loss to achieve overall distributional alignment.
\end{restatable}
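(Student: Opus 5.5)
The plan is to make Proposition \ref{prop4} precise through the kernel mean embedding, and then to read off the ``higher-order statistics'' claim and the ``complementing adversarial loss'' claim separately. Write $\mu_P = \mathbb{E}_{\mathbf{X}\sim P(\mathbf{X})}[k(\cdot,\mathbf{X})] \in \mathcal{H}$ and $\mu_{\tilde P} = \mathbb{E}_{\tilde{\mathbf{X}}\sim P_{G_A}(\tilde{\mathbf{X}})}[k(\cdot,\tilde{\mathbf{X}})]$. By the reproducing property, the population version of \eqref{eq:5} equals $\|\mu_P-\mu_{\tilde P}\|_\mathcal{H}^2 = \sup_{\|h\|_\mathcal{H}\le 1}\big(\mathbb{E}_P[h]-\mathbb{E}_{P_{G_A}}[h]\big)^2$. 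This is the standard integral-probability-metric form: MMD is zero exactly when every RKHS test function has the same expectation under both distributions.

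\textbf{Higher-order statistics.} I would take $k$ to be a characteristic kernel, e.g.\ the Gaussian RBF used in the kernel trick. By the results of Sriperumbudur et al.\ and Gretton et al., the map $P\mapsto\mu_P$ is then injective on Borel probability measures, so $\mathcal{L}_{\text{MMD}}=0$ implies $P_{G_A}(\tilde{\mathbf{X}})=P(\mathbf{X})$. The statement about moments is made explicit by expanding the Gaussian kernel,
\[
k(x,y)=e^{-\|x\|^2/2\sigma^2}\,e^{-\|y\|^2/2\sigma^2}\sum_{m\ge 0}\frac{\langle x,y\rangle^m}{\sigma^{2m}m!}.
\]
This exhibits $\mu_P$ as a weighted collection of all tensor moments $\mathbb{E}[\mathbf{X}^{\otimes m}]$ of the tilted measure. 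Hence a small MMD bounds the discrepancy of the mean ($m=1$), the covariance ($m=2$) and all higher moments simultaneously. This is in contrast to $\mathcal{L}_{\text{MSE}}$ and $\mathcal{L}_{\text{KLD}}$, which only constrain first moments. For the finite-sample estimator I would invoke the usual $O(n^{-1/2})$ concentration of the U-statistic, so that minimizing the empirical loss controls the population quantity.

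\textbf{Complementarity with $\mathcal{L}_{\text{adv}}$.} The two losses are both IPMs over different function classes: RKHS unit-ball functions for MMD, and 1-Lipschitz functions for $\mathbb{W}_1$. A bounded, Lipschitz kernel such as the RBF gives $\mathrm{MMD}_k\le C\,\mathbb{W}_1$, so MMD never conflicts with the adversarial objective. Both vanish at the same point, namely $P_{G_A}=P$, which by Proposition \ref{prop1} corresponds to $G_A=\mathcal{G}_\mathcal{A}$ under identifiability. Their sum therefore has the same global minimizer. MMD adds a closed-form, non-adversarial, smooth gradient that does not depend on discriminator quality.

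\textbf{Main obstacle.} The hardest step is making the complementarity claim rigorous beyond ``same minimizer.'' The natural route is to show that on a compact data domain MMD with a universal kernel metrizes weak convergence, and therefore so does $\mathbb{W}_1$. The two losses then induce the same topology, and a sum of them cannot introduce spurious minimizers. I expect the compactness and boundedness assumptions on the data to be what must be imposed explicitly.
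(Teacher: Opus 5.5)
Your argument is correct at its core, but it takes a genuinely different route from the paper.

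\textbf{Comparison of the two routes.} The paper's proof (Appendix~\ref{a4}) is descriptive. It writes the estimator and its gradient $2\bigl(\mathbb{E}[\nabla_\theta k(\tilde{\mathbf{X}}_i,\tilde{\mathbf{X}}_j)]-\mathbb{E}[\nabla_\theta k(\mathbf{X}_i,\tilde{\mathbf{X}}_j)]\bigr)$. It then reads the $\tilde{\mathbf{X}}$--$\tilde{\mathbf{X}}$ term as making the higher moments of $P_{G_A}(\tilde{\mathbf{X}})$ internally coherent, and the cross term as pulling them toward $P(\mathbf{X})$. Complementarity with $\mathcal{L}_{\text{adv}}$ is simply asserted, and no property of $k$ is actually used.

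You instead work with the mean embedding, which gives three precise results:
\begin{itemize}
\item Characteristic-kernel injectivity gives $\mathcal{L}_{\text{MMD}}=0\Rightarrow P_{G_A}(\tilde{\mathbf{X}})=P(\mathbf{X})$.
\item The Gaussian expansion says exactly which statistics are matched.
\item The bound $\mathrm{MMD}_k\le\sigma^{-1}\mathbb{W}_1$ (from $\|k(\cdot,x)-k(\cdot,y)\|_{\mathcal{H}}\le\|x-y\|/\sigma$), together with the common zero set, gives ``complementing'' a checkable meaning.
\end{itemize}
That inequality also shows that, at the population level, MMD controls nothing $\mathbb{W}_1$ does not. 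So the complementarity can only be an optimization effect: closed-form gradients versus an imperfect WGAN-GP critic. Your account states this correctly, whereas the paper's ``higher-order mismatches'' phrasing obscures it. What the paper's version offers in return is only intuition about which gradient term does what.

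\textbf{Claims that need tightening.}

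\emph{Moments.} The expansion gives
$\mathrm{MMD}^2=\sum_{m\ge 0}\frac{1}{\sigma^{2m}m!}\bigl\|\mathbb{E}_P[w(\mathbf{X})\mathbf{X}^{\otimes m}]-\mathbb{E}_{P_{G_A}}[w(\tilde{\mathbf{X}})\tilde{\mathbf{X}}^{\otimes m}]\bigr\|^2$, with $w(x)=e^{-\|x\|^2/2\sigma^2}$. So it bounds the tilted moment gaps, each by $\sigma^m\sqrt{m!}\,\mathrm{MMD}$. It does not bound the raw mean and covariance. For example, compare $\delta_0$ with $(1-\varepsilon)\delta_0+\varepsilon\delta_{x_\varepsilon}$, where $\|x_\varepsilon\|=\varepsilon^{-2}$. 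Then $\mathrm{MMD}\le\sqrt{2}\,\varepsilon$, but the mean gap is $\varepsilon^{-1}$, which $\mathbb{W}_1$ does detect. Raw moments follow only qualitatively, and only on a compact domain.

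\emph{The ``main obstacle''.} Your planned resolution does not work. Two losses inducing the same topology on measures says nothing about spurious minimizers of $\theta\mapsto\mathcal{L}(P_\theta)$; those come from the non-convex parameterization. What is rigorous is the shared global minimizer, and that requires $P(\mathbf{X})$ to be realizable by the model class.

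\emph{Finite samples.} The fixed-$\theta$ $O(n^{-1/2})$ concentration must be made uniform over the model class before it controls the population MMD at the empirical minimizer.

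None of this undermines the (informal) proposition, which your injectivity argument already covers.
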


\begin{proof}
    The proof of Proposition \ref{prop4} is available in Appendix \ref{a4}.
\end{proof}

Our ablation study in Appendix \ref{ab} indicates that the MMD term incorporated from DAG-GAN \cite{9414770} makes contributions to causal discovery.

\subsubsection{Post-Nonlinear FCM}

So far, we have discussed the loss terms for the LiNGAM and ANM cases, where $\tilde{\mathbf{X}}$ generated using causal mechanism approximations $\tilde{X} := f(X) + \mathcal{Z}$ or $\tilde{X}_j = f_j({Pa}_j) + \mathcal{Z}_j$ is treated as the final output of the model to mimic the training data $\mathbf{X}$ via minimizing $||P(\mathbf{X}) - P_{G_A}(\tilde{\mathbf{X}})||$. One of the key advantages of DAGAF is its flexibility, allowing this to be extended to handle Post-Nonlinear Models (PNL). 

PNL is crucial for causal discovery as it provides a more realistic approach to modeling causality by capturing non-linear effects in observational data. Furthermore, PNL is considered a general superset that encompasses other identifiable models, such as ANM \cite{Peters2013CausalDW} and LiNGAM \cite{Shimizu2006ALN}.

\begin{equation}
\label{eq4}
    \tilde{X}_j := g_j(f_j({Pa}_j) + \mathcal{Z}_j), \forall j, \mathcal{Z}_j \independent f_j({Pa}_j)
\end{equation} 

Without loss of generality, we rearrange \eqref{eq4} into

\begin{equation}
    \mathcal{Z}_j = g_j^{-1}(\tilde{X}_j) - f_j({Pa}_j),
\end{equation} where $g^{-1}$ is the inverse of $g$. Under this setting (from the rearranged equation), the problem has been broken into two parts, which are to learn $f(\cdot)$ and $g^{-1}(\cdot)$ respectively.

Learning $f(\cdot)$ follows the same process as in the ANM and LiNGAM cases, as described so far in Section \ref{sec:3.1.1} to Section \ref{sec:3.1.4}. However, learning $g^{-1}(\cdot)$ is an additional step specific to the PNL case. In practice, both functions $g^{-1}(\cdot)$ and $f(\cdot)$ are modeled using two different neural networks, where $f(\cdot)$ is the same as before and $g^{-1}(\cdot)$ is the inverse of a general MLP. There is an additional Mean Squared Error (MSE) term involved in the training procedure, which we define as:

\begin{equation}
\label{eq9}
     \mathcal{L}_{\text{PNL}}(\hat{\mathbf{X}}, \tilde{\mathbf{X}}) =  MSE(\hat{\mathbf{X}}, \tilde{\mathbf{X}}) = \frac{1}{n} \sum\limits_{i=1}^{n} \sum\limits_{j=1}^{d}||{g_j^{-1}(X_j)}_i - {f_j({Pa}_j)}_i||_2,
\end{equation} where $\hat{\mathbf{X}}$ is the output of $g^{-1}$.


It is worth noting that the reason why the loss terms in Sections \ref{sec:3.1.1}-\ref{sec:3.1.4} (where $f(.)$ is treated as the final output of the model) can be used by the PNL case is based on the idea of skip connections, as those used in ResNet. Although the output from $f(\cdot)$ in the PNL case is not the final output, we can still use it directly in these loss terms by essentially skipping the final function $g(\cdot)$, allowing the model to apply the same loss terms as in the ANM and LiNGAM cases. For more information on this concept, see \cite{He2015DeepRL}.

\subsubsection{Causal structure acyclicity}

Minimizing the reconstruction and adversarial loss terms does not guarantee that $G_A$ will be acyclic. To prevent cycles from occurring in the learned causal structures, we employ the implicit acyclicity constraint from \cite{Zheng2019LearningSN} $h(A^{L_0}(f))=0$, where $A^{L_0} \in \mathbb{R}^{d \times d}$ is the weighted adjacency matrix described implicitly by the model weights. More details can be found in \cite{Zheng2019LearningSN}.

\subsection{Simulating data generative processes}
\label{sec:3.2}

In the second stage of Algorithm \ref{alg:cap}, we focus on synthesizing realistic tabular data samples using the causal graph $G_A$ produced from Step 1. Our data generation process assumes a different instance of the FCM $\mathcal{M}$ used in the causal discovery step, which we refer to as generator $G$ here. Causal knowledge is transferred between FCM instances by loading $W^{L_0}$ from $\mathcal{M}$ into $L_0 \in G$. To enable tabular data synthesis, we incorporate an additional noise vector $Z = \{Z_1,...Z_d\} \sim \mathcal{N}(\mu, \sigma^2)$ into the architecture of the generator.

The models used in this step are trained adversarially to ensure that $P_{G_A}(\tilde{\mathbf{X}})$ closely approximates $P(\mathbf{X})$. Specifically, the network $G$ creates samples while competing against a discriminator $D: \mathbb{R}^d \rightarrow \mathbb{R}$, whose aim is to distinguish between synthetic samples and observational samples. We apply Wasserstein-1 with gradient penalty to train our DGM, resulting in realistic samples indistinguishable from $\mathbf{X}$. The loss function is the same as Equation \eqref{eq2}. More specifically, we consider each connected layer $\alpha(L_j) \in \{\alpha(L_1),...\alpha(L_d)\}$ as an individual generator $G_j(Z_j) \in \{G_1(Z_1),...G_d(Z_d)\}$. This approach enables us to model each causal mechanism $f_j \in \{f_1,...f_d\}$ such that $\tilde{X}_j$ is generated as either $\tilde{X} := G(X) + \mathcal{Z}$; $\tilde{X}_j := G_j({Pa}_j) + Z_j$ or $\tilde{X}_j := g_j(G_j({Pa}_j)+Z_j)$, depending on whether we assume LiNGAM, ANM or PNL. In other words, we generate a synthetic tabular dataset $\tilde{\mathbf{X}} \in \tilde{\chi} \subseteq \mathbb{R}^{n \times d} = \mathcal{F}(Z) = \{f_j({Pa}_j, Z_j)\}$. During training, we only update the parameters $W = \{W^1,...,W^L\}$ of the locally connected hidden layers, since modifying the weights of $L_0$ would affect the structural equations $\mathcal{F}$ used to produce $\tilde{\mathbf{X}}$.

Our experiments in Section \ref{ref:sec4.4} indicate that our DMG can produce high-quality data under both the ANM and PNL structural assumptions. 

\subsection{Model architecture and training specifications}
\label{sec:3.3}

Figure~\ref{fig:3.2} presents the overall architecture of the DAGAF framework. Figure~\ref{fig:3.2}a illustrates the ANM and LiNGAM setting, where input data $\mathbf{X}$ is processed by function $f$ to produce $\hat{\mathbf{X}}$. The optimization is guided by multiple loss terms: $L_{\text{adv}}(\mathbf{X}, \tilde{\mathbf{X}})$, $L_{\text{MSE}}(\mathbf{X}, \tilde{\mathbf{X}})$, $L_{\text{KLD}}(\mathbf{X}, \tilde{\mathbf{X}})$, and $L_{\text{MMD}}(\mathbf{X}, \tilde{\mathbf{X}})$, with $L_{\text{KLD}}(\mathbf{X}, \tilde{\mathbf{X}})$ specifically excluded in the LiNGAM case. 
Figure~\ref{fig:3.2}b extends Figure~\ref{fig:3.2}a by incorporating the PNL model. The right-hand branch follows the same structure as Figure~\ref{fig:3.2}a, while the additional left-hand branch applies $g^{-1}$ to invert $\mathbf{X}$. This inversion contributes to computing $L_{\text{PNL}}(\mathbf{X}, \tilde{\mathbf{X}})$, which is then integrated with the other loss terms from the right-hand branch, forming a unified optimization framework. 
Figure~\ref{fig:3.2}c depicts the data generation process used to synthesize artificial data, demonstrating how the framework facilitates structured data synthesis.

We incorporate the Multi-Layer Perceptron (MLP) from \cite{Zheng2019LearningSN} as an FCM $\mathcal{M}$ to model $f$ in the causal structure learning step. Its architecture consists of two components: 1) an initial linear layer $L_0$, which constitutes an implicit definition of $G_A$, enabling the modelling of causal structures and 2) a set of locally connected hidden layers $L = \{\alpha(L_1), ..., \alpha(L_d)\}$, with $\alpha$ being a nonlinear transformation applied to each layer, designed to approximate and learn $\mathcal{F} = \{f_1,...,f_d\} \in G_A$. Meanwhile, $g$ is a general MLP with five linear layers [$d$ - 10$ d$ - 10$d$ - 10$d$ - $d$] (1 input, 3 hidden and 1 output) and nonlinearity applied using the ReLU activation function (only used in the PNL case). More specifically, each feature in $\mathbf{X}$ is modeled with a neural network of $L$ hidden layers $f_j({Pa}_j,\mathcal{Z}_j;W^1_j,...,W^L_j), j \in [1, d]$, where $W^l_j$ denotes the parameters of the $l^{th}$ layer. Let \( W^{(0)}_j \in \mathbb{R}^{h \times d} \) be the weight matrix within $L_0$ connecting to the local neural network modeling \( X_j \), where \( h \) is the latent size and \( d \) is the number of input variables. For each pair of variables \( X_j \) and \( X_k \), the Ridge regression norm of the weights connecting \( X_k \) to all latent units in the network for \( X_j \) is computed as: 


\begin{equation}
    A_{jk} = \left\| W^{(1)}_{j,k,:} \right\|_2 =\sqrt{\sum_{m=1}^{h} \left( W^{(1)}_{j,k,m} \right)^2},
\end{equation} where \( W^{(1)}_{j,k,m} \) represents the weight connecting the \( k \)-th input variable \( X_k \) to the \( m \)-th latent unit in the first layer of the network for \( X_j \).

\begin{figure*}[!ht]
  \centering
  \includegraphics[width=12.5cm, keepaspectratio]{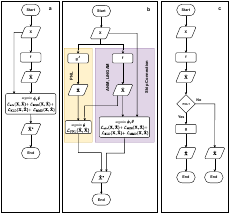}
  \caption{A Visual Representation of DAGAF. (a) The optimization structure under ANM and LiNGAM, where input data is processed to reconstruct \(\tilde{\mathbf{X}}\) using multiple loss terms, excluding \(L_{\text{KLD}}\) in the LiNGAM case. (b) The extended framework integrating ANM, LiNGAM, and PNL, where an additional inversion function \(g^{-1}\) is introduced to compute \(L_{\text{PNL}}\), unifying the optimization process. (c) The synthetic data generation process, illustrating how the framework enables structured data synthesis while preserving underlying causal relationships.}
  \label{fig:3.2}
\end{figure*}


Throughout the training process, a learning rate of $3 \times 10^{-3}$ is employed, with a batch size set at 1000. Ridge regression regularization is applied in both steps by setting the weight decay of both discriminators to $1 \times 10^{-6}$. The models within our framework undergo iterative optimization, with their parameters updated through gradient descent.

The adversarial loss is applied to the reconstructed distribution $P_{G_A}(\tilde{\mathbf{X}})$, hence, in the causal structure learning step, a noise vector is not involved during training. Once the parameters in $A^{L_0}$ have been updated, we convert $A^{L_0}$ to $G_A$ using the post-processing step $G_A = \sqrt{A^{L_0}(f)}, w^2_{jk} \in A^{L_0}(f)$ followed by thresholding with value 0.3, considered best by existing works such as DAG-GNN \cite{Yu2019DAGGNNDS}, GAE \cite{Ng2019AGA}
and many others. These final two steps are required to recover the weights $w_{jk} \in G_A$ from $A^{L_0}(f)$ and to reduce the number of false discoveries in $G_A$.

To learn $g^{-1}$ for the PNL case, we need to invert the architecture and training procedure of $g$ such that $\tilde{\mathbf{X}}$ is used as input to produce the original $\mathbf{X}$. We opt to focus on the training algorithm only as due to the generality of $g$ inverting its architecture will not result in any changes to its configuration. \\

\begin{remark}
The output data $\tilde{\mathbf{X}}$ from Step 1 is solely used to compute the loss terms during training and then it is discarded. This happens because the reconstruction loss used to learn the causal structure of $\mathbf{X}$ significantly reduces the range of the generated samples, resulting in $\tilde{\mathbf{X}}$ with high fidelity but low diversity.\\
\end{remark} 

We treat the training as a constraint continuous optimization problem because of the requirement to adjust the parameters of the acyclicity constraint together with the weights of the model. Hence, we use the modified version of the augmented Lagrangian \cite{Bertsekas:jair-1999} employed in DAG-Notears-MLP to solve it. 

\subsection{Computational Complexity Analysis}

The DAGAF framework comprises three distinct models: the FCM/Generator ($\mathcal{M}/G$), the Discriminator $D$ (in the ANM and LiNGAM settings), and an additional MLP $g$ for the PNL case. These models are trained using an algorithm that integrates three interconnected components: Causal Structure Learning, Tabular Data Synthesis, and Augmented Lagrangian-based Continuous Optimization. This complex architecture and training methodology make DAGAF significantly more intricate compared to other state-of-the-art methods, such as DAG-GNN \cite{Yu2019DAGGNNDS}, GraN-DAG \cite{Lachapelle2020GradientBasedND}, DECAF \cite{Breugel2021DECAFGF}, and Causal-TGAN \cite{Wen2021CausalTGANGT}, which focus solely on causal discovery or tabular data synthesis and involve fewer models. This complexity motivated us to assess the efficiency and practicality of our approach.

We examine the resource requirements of DAGAF for performing causal structure learning and tabular data synthesis simultaneously. To achieve this, we provide pseudo-code for Algorithm \ref{alg:cap} and analyze its time complexity. This alternative representation of the training process for our framework is presented in Appendix \ref{ae}. The space complexity of DAGAF is $\mathcal{O}(d)$, where $d$ represents the number of variables in $\mathbf{X}$, aligning with the complexity of Notears and its extensions. 

To perform a thorough time complexity analysis of our framework, we evaluate the efficiency of each stage in the pseudo-code from Appendix \ref{ae} separately. This analysis also incorporates the augmented Lagrangian and causal knowledge transfer components. The total computational complexity is determined by summing the individual complexities of each component in the pseudo-code for Algorithm \ref{alg:cap} and identifying the most resource-intensive stage. We start with the initial phase of the framework, which involves declaring variables, hyperparameters, and model instances. These operations are treated as atomic and require constant time $\mathcal{O}(1)$.

Next, the training procedure is executed by directly applying the augmented Lagrangian, which involves three nested loops: 1) governed by $k\_max\_iter$, 2) constrained by the range of values for $c$, and 3) determined by the number of $epochs$ in the training process. In the worst-case scenario, each loop runs to its maximum limit, and each has linear complexity. Assuming the range for each loop is constant, the time complexity of optimizing the augmented Lagrangian parameters depends solely on the number of data variables in the input dataset, resulting in a complexity of $\mathcal{O}(d)$ per each individual loop, where $d$ represents the number of variables in the observational data. Considering the three nested loops and the parameter optimization step (which takes constant time, $\mathcal{O}(1)$), the overall computational complexity of the augmented Lagrangian is cubic, $\mathcal{O}(d^3)$. 

Inside the augmented Lagrangian, the training algorithm is divided into two stages: causal structure learning and tabular data synthesis, with an additional step for transferring causal knowledge between the stages, which takes constant time $\mathcal{O}(1)$. Both stages utilize stochastic gradient descent (SGD) for optimizing model parameters. Generally, the computational complexity of SGD is $\mathcal{O}(knd)$, where $k$ is the number of epochs, $n$ is the number of samples, and $d$ is the number of variables in $\mathbf{X}$. For DAGAF, both $k$ and $n$ are constant hyperparameters, meaning the optimization complexity depends solely on the number of data attributes in the input. Therefore, the total computational complexity for both stages is linear, $\mathcal{O}(d)$.

The overall time complexity of Algorithm \ref{alg:cap} is given by $\mathcal{O}(d)^3 + 2\mathcal{O}(d)$, which simplifies to $\mathcal{O}(d)^3$ as we focus on the fastest-growing term. This analysis shows that DAGAF has a cubic computational complexity, aligning with results reported for similar algorithms in previous studies \cite{Zheng2018DAGsWN}, \cite{Lachapelle2020GradientBasedND}.

\section{Causal identifiability}
\label{sec:3.1.6}


Our theoretical analysis demonstrates that the DAG model is unique and hence identifiable under the assumptions of the DAGAF framework, which include ANM, LiNGAM, and PNL. This analysis is conducted under the assumption that the data is continuous and follows i.i.d. conditions.\\


\begin{restatable}{proposition}{primepropfive}
\label{prop5}
    Under the Additive Noise Model (ANM), Linear non-Gaussian Acyclic Model (LiNGAM) or Post-Nonlinear Model (PNL) assumption, there exists a unique DAG $\mathcal{G}_\mathcal{A}$ capable of defining the observed joint distribution $P(\mathbf{X})$.  
\end{restatable}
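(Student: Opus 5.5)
We argue by contradiction and treat the three model classes separately, relying in each case on the known identifiability theory already cited in Section~\ref{sec:2}. Suppose two distinct DAGs $\mathcal{G}_\mathcal{A}$ and $\mathcal{G}'$ both induce the observed joint distribution $P(\mathbf{X})$ through structural equations of the assumed class. The working hypotheses are:
\begin{itemize}
\item faithfulness and causal sufficiency, as stated in Section~\ref{sec:2};
\item continuous, i.i.d.\ data;
\item independence of the noise terms $\mathcal{Z}_j$ from each other and from $Pa_j$.
\end{itemize}
Because both graphs are acyclic and distinct, there is a pair $X_i, X_j$ whose edge is reversed between them, or that is adjacent in one and not the other. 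Faithfulness rules out the adjacency mismatch, since conditional independencies determine the skeleton. So we may assume that $\mathcal{G}_\mathcal{A}$ and $\mathcal{G}'$ are Markov equivalent and differ in the orientation of at least one edge.

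Following the standard reduction of Peters et al., we then localise to a bivariate problem. Pick a reversed edge $X_i \to X_j$ in $\mathcal{G}_\mathcal{A}$, which is $X_j \to X_i$ in $\mathcal{G}'$. Take the set $S$ of the remaining common parents, and condition on $S = s$ for values in the support. By the Markov factorisation $P(\mathbf{X})=\prod_j P(X_j\mid Pa_j)$ and the structural form of the mechanisms, the conditional distribution of $(X_i,X_j)$ given $S=s$ then admits a bivariate model of the same class in both directions.
\begin{itemize}
\item Under $\mathcal{G}_\mathcal{A}$: $X_j = f_j(X_i,s)+\mathcal{Z}_j$, or $X_j=g_j(f_j(X_i,s)+\mathcal{Z}_j)$ in the PNL case.
\item Under $\mathcal{G}'$: the analogous equation with $X_i$ and $X_j$ exchanged.
\end{itemize}
In each case the noise is independent of the cause.

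This reduces the claim to bivariate identifiability, which we import case by case.
\begin{itemize}
\item \textbf{LiNGAM:} with linear $f$ and non-Gaussian noise, the Darmois--Skitovich theorem, as used in the ICA argument of \cite{Shimizu2006ALN}, shows that a backward linear model with independent noise forces Gaussian components, a contradiction.
\item \textbf{ANM:} with nonlinear, three-times-differentiable $f_j$, Hoyer et al.\ \cite{Hoyer2008NonlinearCD} show that a backward additive-noise fit exists only if the log-densities satisfy a specific third-order differential equation. Outside that non-generic set (e.g.\ linear--Gaussian), it cannot hold.
\item \textbf{PNL:} with invertible $g_j$, Zhang and Hyv\"arinen \cite{Zhang2009OnTI} prove that a backward PNL model exists only for a small enumerated list of exceptional function/noise combinations, which we exclude by assumption.
\end{itemize}
In all three cases this contradicts the existence of $\mathcal{G}'$, so the orientation of every edge, and hence the DAG, is unique.

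The main obstacle is the localisation step. We must check that, after conditioning on $S$, the induced bivariate model in the $\mathcal{G}'$ direction genuinely has noise independent of its cause. For this we would use Peters et al.'s argument: choose the reversed edge so that $X_i$ and $X_j$ are sinks relative to the common ancestors, which is possible by taking a suitable ordering of the nodes. A second subtlety is that the ANM and PNL results hold only generically. The proposition should therefore be read as excluding the known degenerate cases, namely linear--Gaussian and the PNL exceptions, and we would state this explicitly rather than claim unconditional uniqueness.
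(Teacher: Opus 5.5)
Your outline is sound, and it takes a genuinely different route from the paper's. The paper splits the claim into an ANM/LiNGAM lemma and a PNL lemma. In each it simply asserts that, for a wrong graph, the residual $X_j - f'_j(Pa'_j)$ cannot be independent of the proposed parents, because ``the independence of $\mathcal{Z}_j$ is specific to the true causal structure.'' In the PNL lemma it even reuses the \emph{same} $f_j$, writing $N_j - f_j(Pa'_j)$. A corollary then closes the proof. There is no reduction to two variables and no appeal to Darmois--Skitovich, to Hoyer et al.'s differential equation, or to Zhang--Hyv\"arinen's list of exceptions. So the step that constitutes identifiability is assumed rather than proved; Lemma~\ref{lem1} even displays $P(\mathbf{X}) \neq P'(\mathbf{X}) \Rightarrow \mathcal{G}_\mathcal{A} \neq \mathcal{G}'$, which is the wrong direction. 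Your pipeline (faithfulness to reach Markov-equivalent graphs, Peters-style localisation to a conditional bivariate model, then the bivariate theorems) actually derives that step. It also shows the proposition holds only after excluding degenerate cases, so your caveat is necessary rather than cautious. Take the two-variable PNL model $X_j = g_j(f_j(X_i)+\mathcal{Z}_j)$ with Gaussian $\mathcal{Z}_j$, as the paper assumes, and suppose $f_j$ is invertible with $f_j(X_i)$ Gaussian (e.g.\ $f_j$ the Gaussianising transform of $X_i$). Then $f_j(X_i)$ and $f_j(X_i)+\mathcal{Z}_j$ are jointly Gaussian, so $f_j(X_i) = a\,g_j^{-1}(X_j) + E$ with Gaussian $E \independent X_j$. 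Hence $X_i = f_j^{-1}(a\,g_j^{-1}(X_j)+E)$ is a valid backward PNL model, and the paper's unconditional claim fails there. In the ANM case you can sharpen your hedge. With Gaussian noise, Hoyer et al.'s condition forces $f_j(\cdot,s)$ to be linear whenever the cause has a strictly positive smooth density on $\mathbb{R}$, so nonlinearity in each argument already excludes every exception. The paper's version buys only brevity.

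Two points need tightening. First, faithfulness of $P(\mathbf{X})$ to $\mathcal{G}_\mathcal{A}$ only gives that the skeleton of $\mathcal{G}_\mathcal{A}$ is contained in that of $\mathcal{G}'$. Any supergraph of $\mathcal{G}_\mathcal{A}$ compatible with its causal order induces the same $P(\mathbf{X})$ by reusing the true mechanisms, made constant in the extra arguments (zero coefficients in LiNGAM). You must therefore impose faithfulness, or at least causal minimality (each $f_j$ non-constant in each parent), on every admissible representation, $\mathcal{G}'$ included. Without this, ``unique DAG'' is literally false, and it is exactly the hypothesis Peters et al.\ place on both graphs. Second, the conditioning set must be the union $S=(Pa_j^{\mathcal{G}_\mathcal{A}}\setminus\{X_i\})\cup(Pa_i^{\mathcal{G}'}\setminus\{X_j\})$. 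It must be chosen via the sink construction so that $S$ lies among the non-descendants of $X_j$ in $\mathcal{G}_\mathcal{A}$ and of $X_i$ in $\mathcal{G}'$. Conditioning only on common parents leaves the remaining parents acting as extra inputs, and the reduced noise is then not independent of the cause. Correspondingly, the bivariate exceptions must be excluded for the conditional law of $X_i$ given $S=s$ and for $f_j(\cdot,s)$, for some $s$ in the support. This is Peters et al.'s ``restricted'' model condition, not just a condition on the unconditional mechanisms.
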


\begin{proof}
    The proof of Proposition \ref{prop5} is available in Appendix \ref{a5}.
\end{proof}

Proposition \ref{prop5} establishes that for a joint distribution $P(\mathbf{X})$ over random variables $\{X_1,...,X_d\}$ generated by a true causal graph $\mathcal{G}_\mathcal{A}$, there exists an identifiable causal graph $G_A$ such that $G_A = \mathcal{G}_\mathcal{A}$,  provided that the causal model follows the ANM, LiNGAM, or PNL assumptions.\\

\noindent In addition, we analyze how the loss terms used to train DAGAF behave under challenging conditions, including non-i.i.d. data, missing values, and discrete variables. 


\subsection{Impact of Non-i.i.d. Conditions}



Now we consider some real-world data case, where the samples $\{\mathbf{X}_1,...,\mathbf{X}_n\}$ are no longer independent (i.e. $\mathbf{X}_i \not\independent \mathbf{X}_j$) and each data point $\mathbf{X}_i$ is drawn from heterogeneous distributions $P_i(\mathbf{X})$. In such settings, the empirical distribution $P^{\prime}(\mathbf{X})$ becomes a biased estimate of the true distribution $P(\mathbf{X})$, impacting the optimization.


We assume that the true and the implicitly generated distributions are defined as $P^{\prime}(\mathbf{X}) = P(\mathbf{X}) + \delta(\mathbf{X})$ and $P^{\prime}_{G_A}(\tilde{\mathbf{X}}) = P_{G_A}(\tilde{\mathbf{X}}) + \delta(\tilde{\mathbf{X}})$, where $\delta(\mathbf{X})$ and $\delta(\tilde{\mathbf{X}})$ capture deviations from the i.i.d. assumptions. 

\subsubsection{Adversarial Loss and Identifiability}

\noindent Under non i.i.d. condition, $\mathcal{L}^{\prime}_{\text{adv}}(\mathbf{X}, \tilde{\mathbf{X}}) = D(P^{\prime}(\mathbf{X})||P_{G_A}(\tilde{\mathbf{X}}))$. The bias $\delta(\mathbf{X})$ affects the gradients of $\mathcal{L}^{\prime}_{\text{adv}}(\mathbf{X}, \tilde{\mathbf{X}})$:
\[
    \nabla_\phi \mathcal{L}^{\prime}_{\text{adv}}(\mathbf{X}, \tilde{\mathbf{X}}) = \nabla_\phi D(P(\mathbf{X})||P_{G_A}(\tilde{\mathbf{X}})) + \nabla_\phi D(\delta(\mathbf{X})||P_{G_A}(\tilde{\mathbf{X}})).
\] The additional term $\nabla_\phi D(\delta(\mathbf{X})||P_{G_A}(\tilde{\mathbf{X}}))$ can destabilize optimization by adding spurious gradient components due to dependencies or heterogeneity, and by amplifying sensitivity to noise in the data.


\subsubsection{MSE Loss and Identifiability}


Under the non-i.i.d. conditions: 
\[
    \mathcal{L}^{\prime}_{\text{MSE}}(\mathbf{X}, \tilde{\mathbf{X}}) = \mathcal{L}_{\text{MSE}}(\mathbf{X}, \tilde{\mathbf{X}}) + \delta(\mathbf{X}).
\]If $\delta(\mathbf{X})$ introduces correlations between samples $\mathbf{X}_i$ and $\mathbf{X}_j$, this violates the independence of the noise terms $\mathcal{Z}_j$. As a result, the non-i.i.d. MSE loss term $\mathcal{L}^{\prime}_{\text{MSE}}(\mathbf{X}, \tilde{\mathbf{X}})$ may incorrectly fit spurious patterns across samples. In turn, the output of $f_j({Pa}_j)$ may no longer capture the true functional relationship.

Furthermore, the gradient of $\mathcal{L}^{\prime}_{\text{MSE}}(\mathbf{X}, \tilde{\mathbf{X}})$ with respect to $\theta$ is:
\[
    \nabla_\theta \mathcal{L}^{\prime}_{\text{MSE}}(\mathbf{X}, \tilde{\mathbf{X}}) = \nabla_\theta \mathcal{L}_{\text{MSE}}(\mathbf{X}, \tilde{\mathbf{X}}) + \nabla_\theta \delta(\mathbf{X}).
\] The additional term $\nabla_\theta \delta(\mathbf{X})$ introduces instability due to spurious gradients from dependencies across samples, and heterogeneity-induced noise in gradients. This instability makes optimization sensitive to the choice of initialization and hyperparameters, thus reducing convergence reliability.


\subsubsection{Kullback-Leibler Divergence Loss and Identifiability}


The empirical estimate of the KLD under non-i.i.d. conditions becomes:
\[
    \mathcal{L}^{\prime}_{\text{KLD}}(\mathbf{X}, \tilde{\mathbf{X}}) = \frac{1}{n} \sum^n_{i=1} \log \frac{P_{G_A}(\tilde{\mathbf{X}}_i)}{P^{\prime}(\mathbf{X}_i)}. 
\] 
Expanding $\mathcal{L}^{\prime}_{\text{KLD}}(\mathbf{X}, \tilde{\mathbf{X}})$ and applying a first-order Taylor expansion $P(\mathbf{X}_i)$, we have 
\[
    \mathcal{L}^{\prime}_{\text{KLD}}(\mathbf{X}, \tilde{\mathbf{X}}) \approx \mathcal{L}_{\text{KLD}}(\mathbf{X}, \tilde{\mathbf{X}}) - \frac{1}{n} \sum^n_{i=1} \frac{\delta(\mathbf{X}_i)}{P(\mathbf{X}_i)}.
\] The term $\frac{\delta(\mathbf{X}_i)}{P(\mathbf{X}_i)}$ introduces bias, particularly when $\delta(\mathbf{X}_i)$ varies significantly across samples. This bias skews the optimization of $P_{G_A}(\tilde{\mathbf{X}})$, which potentially leads to an approximate distribution $P_{G_A}(\tilde{\mathbf{X}})$ that deviates from $P(\mathbf{X})$.



The gradient of the KLD loss under non-i.i.d. conditions is defined as:
\[
    \nabla_\theta \mathcal{L}^{\prime}_{\text{KLD}}(\mathbf{X}, \tilde{\mathbf{X}}) \approx \nabla_\theta \mathcal{L}_{\text{KLD}}(\mathbf{X}, \tilde{\mathbf{X}}) - \int \nabla_\theta P_{G_A}(\tilde{\mathbf{X}}) \frac{\delta(\mathbf{X})}{P(\mathbf{X})}d\mathbf{X}d\tilde{\mathbf{X}}.
\] The additional term $\int \nabla_\theta P_{G_A}(\tilde{\mathbf{X}}) \frac{\delta(\mathbf{X})}{P(\mathbf{X})}d\mathbf{X}d\tilde{\mathbf{X}}$ adds noise to the gradients, reducing the stability of optimization. This may introduce spurious directions in the parameter space, which make convergence to the true distribution $P(\mathbf{X})$ more challenging.


\subsubsection{MMD Loss and Identifiability}

Expanding all instances of $k(.)$, 
we have:

\begin{equation*}
    \begin{aligned}
        k(\mathbf{X}_i, \mathbf{X}_j) &= k(P(\mathbf{X}_i), P(\mathbf{X}_j)) + \Delta_{P(\mathbf{X})}(\mathbf{X}_i, \mathbf{X}_j), \\
        k(\mathbf{X}_i, \tilde{\mathbf{X}}_j) &= k(P(\mathbf{X}_i), P_{G_A}(\tilde{\mathbf{X}}_j)) + \Delta_{P(\mathbf{X}),P_{G_A}(\tilde{\mathbf{X}})}(\mathbf{X}_i, \tilde{\mathbf{X}}_j), \\
        k(\tilde{\mathbf{X}}_i, \tilde{\mathbf{X}}_j) &= k(P_{G_A}(\tilde{\mathbf{X}}_i), P_{G_A}(\tilde{\mathbf{X}}_j) + \Delta_{P_{G_A}(\tilde{\mathbf{X}})}(\tilde{\mathbf{X}}_i, \tilde{\mathbf{X}}_j),
    \end{aligned}
\end{equation*} where $\Delta_{P(\mathbf{X})}(\mathbf{X}_i, \mathbf{X}_j)$, $\Delta_{P(\mathbf{X}),P_{G_A}(\tilde{\mathbf{X}})}(\mathbf{X}_i, \tilde{\mathbf{X}}_j)$ and $\Delta_{P_{G_A}(\tilde{\mathbf{X}})}(\tilde{\mathbf{X}}_i, \tilde{\mathbf{X}}_j)$ represent perturbations due to non-i.i.d. effects. The empirical MMD becomes:
\[
    \mathcal{L}^{\prime}_{\text{MMD}}(\mathbf{X}, \tilde{\mathbf{X}}) \approx \mathcal{L}_{\text{MMD}}(\mathbf{X}, \tilde{\mathbf{X}}) + \Delta,
\] where the non-i.i.d. effect $\Delta$ is defined as follows:
\[
     \Delta = \frac{1}{n}\sum\limits_{i \neq j}^{n}\Delta_{P(\mathbf{X})}(\mathbf{X}_i,\mathbf{X}_j)  - \frac{2}{n}\sum\limits_{i \neq j}^{n}\Delta_{P(\mathbf{X}),P_{G_A}(\tilde{\mathbf{X}})}(\mathbf{X}_i, \tilde{\mathbf{X}}_j) + \frac{1}{n}\sum\limits_{i \neq j}^{n}\Delta_{P_{G_A}(\tilde{\mathbf{X}})}(\tilde{\mathbf{X}}_i, \tilde{\mathbf{X}}_j)
\] The term $\Delta$ introduces bias into the empirical MMD estimate, which may no longer converge to the true population MMD even as $n \to \infty$.

The gradient of $\mathcal{L}^{\prime}_{\text{MMD}}(\mathbf{X}, \tilde{\mathbf{X}})$ with respect to model parameters $\theta$ is:
\[
    \nabla_\theta \mathcal{L}^{\prime}_{\text{MMD}}(\mathbf{X}, \tilde{\mathbf{X}}) = 2 \bigg( \mathbb{E}_{\mathbf{X}, \mathbf{X}^{\prime} \sim P^{\prime}(\mathbf{X})}[\nabla_\theta k(\mathbf{X}, \mathbf{X}^{\prime})] - \mathbb{E}_{\mathbf{X} \sim P^{\prime}(\mathbf{X}), \tilde{\mathbf{X}} \sim P^{\prime}_{G_A}(\tilde{\mathbf{X}})}[\nabla_\theta k(\mathbf{X}, \tilde{\mathbf{X}})] \bigg).
\] The additional perturbations $\Delta_{P(\mathbf{X})}$, $\Delta_{P(\mathbf{X}),P_{G_A}(\tilde{\mathbf{X}})}$ and $\Delta_{P_{G_A}(\tilde{\mathbf{X}})}$ introduce noise into the gradients, potentially destabilizing optimization and making convergence difficult.



\subsection{DAG identifiability in Discrete Variables}



Different DAGs can give rise to the \emph{same} joint distribution in the discrete setting, thereby leading to non-uniqueness in identifying the true DAG $\mathcal{G}_\mathcal{A}$.
For simplicity, consider two DAGs $\mathcal{G}_{1_{\mathcal{A}_1}}$ and $\mathcal{G}_{2_{\mathcal{A}_2}}$ that are structurally different but induce the same joint
distribution. In a \emph{discrete setting}, the symmetry between causal relations often implies that reversing edges or reparameterizing certain relationships leads to the same joint distribution. More formally:

\[
\begin{aligned}
P(X_i \mid \text{Pa}(X_i)) &= P(X_j \mid \text{Pa}(X_j)) \\
\quad &\text{for some} \ (X_i, X_j) \text{ such that } X_j \in \text{Pa}(X_i) \text{ or } X_i \in \text{Pa}(X_j).
\end{aligned}
\]

This symmetry implies that the conditional distributions from both DAG are equal. Thus, the
\emph{identifiability of the DAG} is lost in the discrete setting due to the \emph{equivalence} of the conditional distributions, even though the underlying structural graph may differ. \\




\subsection{Impact of Missing Data} 


Missing data in real-world datasets can arise from different mechanisms. If data is Missing Completely at Random, the missingness is unrelated to any variables, reducing sample size but preserving identifiability with sufficient data. Missing at Random occurs when missingness depends only on observed variables, potentially introducing bias in independence tests but still allowing DAG discovery with robust imputation. Missing Not at Random is the most problematic, as missingness depends on unobserved factors, making the dataset unrepresentative of the true causal structure.

As the identifiability of the true DAG $\mathcal{G}_\mathcal{A}$ relies heavily on correctly testing conditional independence relationships (e.g., $\mathcal{Z}_j \independent {Pa}_j$ in the PNL model), missing data reduces the statistical power of these tests. Missing large portions of data may lead to unreliable or incorrect conditional independence tests. Spurious dependencies or independencies may arise due to imputation strategies or biased sampling. The ANM, LiGAM and PNL model assume that the noise term $\mathcal{Z}_j$ is independent of its parents ($\mathcal{Z}_j \independent {Pa}_j$). Missing data can obscure or distort observed relationships, making it difficult to separate noise from modeled contributions.




In addition, the functional forms \( f_{j} \) (nonlinear for ANM, linear for LiNGAM) and \( g_j \) (nonlinear for PNL) are assumed to be known or learnable. 
However, the data incompleteness characteristic often associated with real-world data violates this assumption. More specifically, missing data biases noise estimates \( \mathcal{Z}_j \), affecting residual independence. In the LiNGAM case, non-Gaussian noise becomes harder to test.


Identifiability relies on correctly estimating marginal distributions. Missing data distorts these estimates, especially when parent variables or structural nodes are disproportionately missing.
\color{black}

\section{Experimental Results}
\label{sec:5}

We conduct a range of experiments on the proposed general framework for causal structure learning using various datasets that include continuous and discrete data types to assess the following aspects: \\

\begin{itemize}
    \item Structure learning accuracy, which assesses the effectiveness of modeling the relationships between features in observational data. \\
    \item Synthetic data quality, which investigates the quality of the data produced from the learned generative process. \\
    \item Ablation study and sensitivity analysis to assess the configuration of the loss terms and the hyper-parameter settings for the training. - for more information, the reader is referred to Appendices \ref{ab} and \ref{ac}.  \\
\end{itemize} In this section, we outline the configurations for the causal discovery and data quality experiments, and present the results along with the metrics employed for their evaluation. 


For evaluating structure learning, our model is compared with leading DAG-learning methods, including DAG-WGAN \cite{Petkov2022DAGWGANCS}, DAG-WGAN+ \cite{Petkov2023EfficientGA}, DAG-Notears-MLP \cite{Zheng2019LearningSN}, Dag-Notears \cite{Zheng2018DAGsWN}, DAG-GNN \cite{Yu2019DAGGNNDS}, GraN-DAG \cite{Lachapelle2020GradientBasedND}, GAE \cite{Ng2019AGA},  CAREFL \cite{Khemakhem2020CausalAF}, DAG-NF \cite{Wehenkel2020GraphicalNF}, DCRL \cite{Mamaghan2024DiffusionBasedCR} and VI-DP-DAG \cite{Charpentier2022DifferentiableDS}. The metric used throughout all experiments to measure the quality of the discovered causality is the Structural Hamming Distance (SHD) \cite{Jongh2009ACO}. We selected SHD because it integrates several individual metrics, including True Positive Rate (TPR), False Discovery Rate (FDR), and False Positive Rate (FPR). It is important to acknowledge that the set of metrics $\text{SHD} = \{\text{TPR}, \text{FDR}, \text{FPR}\}$ used in this study is not the only approach to evaluating the accuracy of the learned structures. Other metrics, such as Area Under Curve (AUC) and Area Over Curve (AOC), can also be employed.

We also analyze the quality of the synthetic data produced by DAGAF. In particular, we conduct various tests to examine the statistical properties of $\tilde{\mathbf{X}}$. We evaluate the similarity between $P(\mathbf{X})$ and $P_{G_A}(\tilde{\mathbf{X}})$ using boxplot analysis and marginal distributions. Additionally, we calculate the correlation matrices for both $\chi$ and $\tilde{\chi}$ to explore the interdependencies among their covariates.

\subsection{Continuous data}

We conduct tests on continuous data types using simulated data produced from predefined structural equations and Directed Acyclic Graph (DAG) structures. Specifically, we construct an Erdos-Renyi \cite{Erds1959OnRG} causal graph with an expected node degree of 3, which serves as the ground-truth DAG $\mathcal{G}_\mathcal{A}$ and can be represented by a weighted adjacency matrix $A$. Afterwards, we generate 5000 observational data samples for each test by utilizing different equations (namely linear: $\tilde{X} := A^TX + \mathcal{Z}$, non-linear-1: $\tilde{X} := Acos(X + 1) + \mathcal{Z}$, non-linear-2: $\tilde{X} := 2sin(A(X + 0.5)) + A(X + 0.5) + \mathcal{Z}$, post-non-linear-1: $\tilde{X} := sinh(Acos(X + 1) + \mathcal{Z})$, and post-non-linear-2: $\tilde{X} := tanh(2sin(A(X + 0.5)) + A(X + 0.5) + \mathcal{Z})$).  
These structural equations have been widely used in numerous papers in DAG learning, including the DAG-GNN model \cite{Yu2019DAGGNNDS}, Gran-DAG \cite{Lachapelle2020GradientBasedND}, GAE \cite{Ng2019AGA}, DAG-WGAN \cite{Petkov2022DAGWGANCS}, DAG-WGAN+ \cite{Petkov2023EfficientGA} and Notears-MLP \cite{Zheng2019LearningSN} - to name but a few. The application of these popular equations allow us to perform a comprehensive and robust comparison against other leading models in the field. The final two equations are modifications of the second and third ones designed to provide suitable test cases for experiments involving the PNL assumption. Ensuring the acyclicity of $\mathcal{G}_\mathcal{A}$ and satisfying the causal model assumptions outlined in Section \ref{sec:2}, with the given above equations, enables us to generate i.i.d. samples that are appropriate for causal structure learning under the faithfulness condition. \\ 

\begin{remark}
Although the list of equations provided in this section serves as a good collection of test cases for the continuous data experiments, it is not exhaustive. Other equations can be used as well.\\
\end{remark}

Our work follows the same methodology used in most other state-of-the-art DAG learning models, such as DAG-GNN, GraN-DAG, DAG-Notears and GAE among others, where the process of splitting data into training and validation sets is not as commonly applied as in traditional machine learning. Train-test splitting or cross-validation is typically used in predictive modeling tasks, but causal structure identification is focused on structural constraints and conditional independencies rather than predictive accuracy. Since causal relationships are structural, they are generally assumed to hold throughout the dataset, and therefore, partitioning the data may not provide significant additional benefit in terms of discovering the structure.

To evaluate the scalability of the model, we perform experiments with datasets that have 10, 20, 50, and 100 columns. To account for sample randomness and ensure fairness, each experiment is repeated five times, and the average Structural Hamming Distance (SHD) is reported. The results are shown in Tables \ref{tab:table1}, \ref{tab:table2}, \ref{tab:table3}, \ref{tab:table4} and \ref{tab:table5}.

\begin{table}[!ht]
\centering
\caption{DAG structures recovered from linear data}
\label{tab:table1}
\begin{tabular}{@{}ccccc@{}}
\toprule
\multirow{2}{*}{Model} & \multicolumn{4}{c}{SHD (5000 linear samples)}                       \\ \cmidrule(l){2-5} 
                       & d=10          & d=20           & d=50            & d=100            \\ \midrule
DAG-Notears            & 8.6 $\pm$ 7.2 & 13.8 $\pm$ 9.6 & 41.8 $\pm$ 29.4 & 102.8 $\pm$ 53.2 \\
DAG-Notears-MLP        & 4.6 $\pm$ 4.3 & 7.6 $\pm$ 6.3  & 29.6 $\pm$ 18.5 & 74 $\pm$ 30.6    \\
DAG-GNN                & 6 $\pm$ 6.9   & 11.4 $\pm$ 8.2 & 33.6 $\pm$ 21.2 & 85.4 $\pm$ 46.4  \\
GAE                    & 5.5 $\pm$ 4.9 & 10.3 $\pm$ 7.2 & 31.3 $\pm$ 13.8 & 80.2 $\pm$ 24.6  \\
GraN-DAG               & 3.4 $\pm$ 5.2 & 6.4 $\pm$ 7.5  & 25.2 $\pm$ 14.6 & 68.4 $\pm$ 25.8  \\
CAREFL                 & 2.7 $\pm$ 4.8 & 5.9 $\pm$ 7.1  & 24.9 $\pm$ 14.1 & 66.9 $\pm$ 24.7  \\
DAG-NF                 & 2.4 $\pm$ 4.6 & 5.2 $\pm$ 6.9  & 23.1 $\pm$ 13.4 & 64.2 $\pm$ 24.3  \\
VI-DP-DAG              & 2.1 $\pm$ 4.5 & 4.5 $\pm$ 6.7  & 22.4 $\pm$ 12.7 & 63.7 $\pm$ 23.5  \\
DCRL                   & 1.8 $\pm$ 2.7 & 3.1 $\pm$ 4.8  & 18.7 $\pm$ 11.9 & 53.3 $\pm$ 21.9  \\
DAG-WGAN               & 5.2 $\pm$ 3.8 & 9.2 $\pm$ 5.7  & 19.6 $\pm$ 12.3 & 58.6 $\pm$ 22.7  \\
DAG-WGAN+              & 3.7 $\pm$ 3.1 & 5.6 $\pm$ 4.9  & 17.2 $\pm$ 10.5 & 49.1 $\pm$ 20.1  \\
DAGAF & \textbf{1.4 $\pm$ 2.3} & \textbf{2 $\pm$ 4.4} & \textbf{16.4 $\pm$ 9.8} & \textbf{38.8 $\pm$ 18.3} \\ \bottomrule
\end{tabular}%
\end{table}

\begin{table}[!ht]
\centering
\caption{DAG structures recovered from non-linear-1 data}
\label{tab:table2}
\begin{tabular}{@{}ccccc@{}}
\toprule
\multirow{2}{*}{Model} & \multicolumn{4}{c}{SHD (5000 non-linear-1 samples)}                   \\ \cmidrule(l){2-5} 
                       & d=10           & d=20            & d=50            & d=100            \\ \midrule
DAG-Notears            & 11.4 $\pm$ 4.5 & 28.2 $\pm$ 10.2 & 55 $\pm$ 23.1   & 105.6 $\pm$ 48.3 \\
DAG-Notears-MLP        & 5.2 $\pm$ 1.8  & 15.4 $\pm$ 4.6  & 43.8 $\pm$ 15.4 & 86.2 $\pm$ 29.8  \\
DAG-GNN                & 9.2 $\pm$ 3.3  & 23.4 $\pm$ 8.4  & 50.2 $\pm$ 19.5 & 98.6 $\pm$ 37.6  \\
GAE                    & 8.6 $\pm$ 2.2  & 20 $\pm$ 5.7    & 47.5 $\pm$ 10.2 & 92.3 $\pm$ 18.9  \\
GraN-DAG               & 4 $\pm$ 2.4    & 11.2 $\pm$ 6.5  & 36.4 $\pm$ 11.9 & 72.8 $\pm$ 21.7  \\
CAREFL                 & 3.8 $\pm$ 2.2  & 10.9 $\pm$ 6.2  & 34.1 $\pm$ 11.2 & 71.7 $\pm$ 19.1  \\
DAG-NF                 & 3.4 $\pm$ 2.1  & 10.4 $\pm$ 5.6  & 31.6 $\pm$ 10.7 & 69.5 $\pm$ 17.3  \\
VI-DP-DAG              & 3.1 $\pm$ 2    & 9.8 $\pm$ 5.1   & 28.7 $\pm$ 9.3  & 68.1 $\pm$ 16.5  \\
DCRL                   & 2.9 $\pm$ 1.7  & 7.5 $\pm$ 4     & 24.3 $\pm$ 7.8  & 61.4 $\pm$ 14.9  \\
DAG-WGAN               & 6.4 $\pm$ 1.4  & 18.6 $\pm$ 3.7  & 22 $\pm$ 8.6    & 64.6 $\pm$ 15.2  \\
DAG-WGAN+              & 4.9 $\pm$ 1.2  & 14.2 $\pm$ 3.3  & 20.5 $\pm$ 6.9  & 57.1 $\pm$ 14.5  \\
DAGAF & \textbf{2.6 $\pm$ 1} & \textbf{5.2 $\pm$ 2.8} & \textbf{18.8 $\pm$ 6.2} & \textbf{50.2 $\pm$ 13.4} \\ \bottomrule
\end{tabular}
\end{table}

\begin{table}[!ht]
\centering
\caption{DAG structures recovered from non-linear-2 data}
\label{tab:table3}
\begin{tabular}{@{}ccccc@{}}
\toprule
\multirow{2}{*}{Model} & \multicolumn{4}{c}{SHD (5000 non-linear-2 samples)}                  \\ \cmidrule(l){2-5} 
                       & d=10           & d=20           & d=50            & d=100            \\ \midrule
DAG-Notears            & 10.4 $\pm$ 3.9 & 22.4 $\pm$ 8.1 & 47.6 $\pm$ 21.2 & 112.8 $\pm$ 57.8 \\
DAG-Notears-MLP        & 5.4 $\pm$ 1.5  & 13.8 $\pm$ 4.3 & 30.4 $\pm$ 15.7 & 85.6 $\pm$ 35.6  \\
DAG-GNN                & 8.4 $\pm$ 3.2  & 19.2 $\pm$ 7.7 & 36.2 $\pm$ 18.6 & 91.8 $\pm$ 49.3  \\
GAE                    & 7.3 $\pm$ 1.8  & 17.4 $\pm$ 5.1 & 33.7 $\pm$ 13.7 & 88.4 $\pm$ 26.6  \\
GraN-DAG               & 4.2 $\pm$ 2.1  & 11.6 $\pm$ 5.6 & 25.2 $\pm$ 14.5 & 71.6 $\pm$ 29.7  \\
CAREFL                 & 3.8 $\pm$ 1.8  & 10.5 $\pm$ 5.3 & 24.8 $\pm$ 13.8 & 69.9 $\pm$ 26.1  \\
DAG-NF                 & 3.3 $\pm$ 1.7  & 9.7 $\pm$ 4.9  & 24.3 $\pm$ 13.1 & 68.1 $\pm$ 24.3  \\
VI-DP-DAG              & 2.8 $\pm$ 1.6  & 9.3 $\pm$ 4.7  & 23.8 $\pm$ 13.3 & 67.3 $\pm$ 23.8  \\
DCRL                   & 2.2 $\pm$ 1.3  & 7.1 $\pm$ 2.9  & 15.1 $\pm$ 9.4  & 59.5 $\pm$ 17.2  \\
DAG-WGAN               & 6.6 $\pm$ 1.2  & 15.2 $\pm$ 3.4 & 22.6 $\pm$ 12.9 & 64.2 $\pm$ 21.5  \\
DAG-WGAN+              & 5.1 $\pm$ 1.1  & 12.3 $\pm$ 2.5 & 17.5 $\pm$ 10.2 & 56.7 $\pm$ 18.4  \\
DAGAF & \textbf{1.4 $\pm$ 0.9} & \textbf{5.8 $\pm$ 2.2} & \textbf{14.2 $\pm$ 8.3} & \textbf{51.8 $\pm$ 16.2} \\ \bottomrule
\end{tabular}
\end{table}

\begin{table}[!ht]
\centering
\caption{DAG structures recovered from post-non-linear-1 data}
\label{tab:table4}
\begin{tabular}{cccccc}
\toprule
\multirow{2}{*}{Model} & \multicolumn{4}{c}{SHD (5000 post-non-linear-1 samples)} \\ \cmidrule(l){2-5} 
        & d=10 & d=20 & d=50 & d=100 \\ \midrule
DAG-GNN & 13.7 $\pm$ 9.2 & 21.7 $\pm$ 10.4 & 63.7 $\pm$ 31.2 & 118.6 $\pm$ 50.1 \\
GAE & 12.3 $\pm$ 8.1 & 19.1 $\pm$ 8.8 & 56.2 $\pm$ 24.6 & 101.3 $\pm$ 37.4 \\
CAREFL & 11.8 $\pm$ 6.4 & 18.5 $\pm$ 7.9 & 52.1 $\pm$ 22.8 & 97.2 $\pm$ 34.9 \\
DAG-NF & 11.2 $\pm$ 5.3 & 16.2 $\pm$ 6.1 & 47.3 $\pm$ 19.5 & 92.5 $\pm$ 31.3 \\
DAG-WGAN & 10.5 $\pm$ 4.7 & 15.6 $\pm$ 5.8 & 44.5 $\pm$ 17.7 & 88.7 $\pm$ 29.6 \\ 
DAG-WGAN+ & 8.4 $\pm$ 3.3 & 12.8 $\pm$ 4.3 & 32.8 $\pm$ 13.6 & 66.1 $\pm$ 21.2 \\ 
DAGAF & \textbf{5.6 $\pm$ 2.5} & \textbf{7.3 $\pm$ 3.2} & \textbf{25.4 $\pm$ 11.3} & \textbf{52.4 $\pm$ 15.7} \\ \bottomrule
\end{tabular}%
\end{table}

\begin{table}[!ht]
\centering
\caption{DAG structures recovered from post-non-linear-2 data}
\label{tab:table5}
\begin{tabular}{cccccc}
\toprule
\multirow{2}{*}{Model} & \multicolumn{4}{c}{SHD (5000 post-non-linear-2 samples)} \\ \cmidrule(l){2-5} 
        & d=10 & d=20 & d=50 & d=100 \\ \midrule
DAG-GNN & 10.8 $\pm$ 8.7 & 16.1 $\pm$ 11.9 & 37.1 $\pm$ 30.3 & 128.3 $\pm$ 48.2 \\
GAE & 9.1 $\pm$ 6.3 & 14.3 $\pm$ 9.5 & 31.5 $\pm$ 24.8 & 105.7 $\pm$ 34.4 \\
CAREFL & 8.3 $\pm$ 5.8 & 13.5 $\pm$ 8.3 & 29.8 $\pm$ 22.4 & 92.1 $\pm$ 32.3 \\
DAG-NF & 7.7 $\pm$ 5.5 & 12.8 $\pm$ 7.4 & 28.4 $\pm$ 21.7 & 84.8 $\pm$ 28.5 \\
DAG-WGAN & 7.2 $\pm$ 5.2 & 11.4 $\pm$ 6.2 & 25.2 $\pm$ 18.6 & 76.5 $\pm$ 27.6 \\ 
DAG-WGAN+ & 4.5 $\pm$ 3.6 & 8.6 $\pm$ 5.1 & 21.7 $\pm$ 12.3 & 69.4 $\pm$ 19.1 \\ 
DAGAF & \textbf{2.9 $\pm$ 2.4} & \textbf{5.7 $\pm$ 3.6} & \textbf{18.6 $\pm$ 10.5} & \textbf{47.2 $\pm$ 14.7} \\ \bottomrule
\end{tabular}%
\end{table}

The results presented in Tables \ref{tab:table1}, \ref{tab:table2}, \ref{tab:table3}, \ref{tab:table4} and \ref{tab:table5} demonstrate that our proposed general framework for causal discovery consistently outperforms state-of-the-art DAG-learning methods across all tested scenarios—linear, non-linear-1, non-linear-2, post-nonlinear-1, and post-nonlinear-2—regardless of whether the underlying data-generating process follows LiNGAM, ANM, or PNL assumptions. Notably, the gap in SHD between our model and the others grows further in our favor with the increase in data dimensionality. This observation highlights the enhanced performance of our approach for DAG-learning in datasets with a large number of variables. It is also worth mentioning that, according to our results, DAGAF surpasses both traditional models in the field, including Notears, GAE, DAG-GNN, and GraN-DAG, as well as more recent approaches like DAG-WGAN(+), CAREFL, DAG-NF, DCRL and VI-DP-DAG, demonstrating the superiority of our model. 

\subsection{Benchmark experiments}

In our experiments, we also included discrete datasets as part of an empirical study to demonstrate how our framework performs on such data. However, 
from our theoretical analysis presented in Section \ref{sec:3.1.6}, we recognize that identifiability issues arise when applying our method to discrete datasets. 

Specifically, we obtained the Child, Alarm, Hailfinder, and Pathfinder benchmark datasets, with their ground truths, from the Bayesian Network Repository  \url{https://www.bnlearn.com/bnrepository}. These datasets are specifically organized to facilitate scalability testing and enable a fair comparison with state-of-the-art methods. We evaluated our model against DAG-GNN and both versions of DAG-WGAN, with the results presented in Table \ref{tab:table6}.

\begin{table}[!ht]
\centering
\caption{DAG structures recovered from benchmark data}
\label{tab:table6}
\begin{tabular}{@{}cccccc@{}}
\toprule
\multirow{2}{*}{Datasets} & \multirow{2}{*}{Nodes} & \multicolumn{3}{c}{SHD}              \\ \cmidrule(l){3-6} 
                          &                        & DAG-WGAN    & DAG-WGAN+ & DAG-GNN & DAGAF  \\ \midrule
Child                     & 20                     & 20          & 19 & 30      & \textbf{17}  \\
Alarm                     & 37                     & 36 & \textbf{35} & 55      & 43           \\
Hailfinder                & 56                     & 73 & 66          & 71      & \textbf{63}  \\
Pathfinder                & 109                    & 196 & 194          & 218     & \textbf{181} \\ \bottomrule
\end{tabular}%
\end{table}

According to the benchmark experiment results shown in Table \ref{tab:table6}, our method significantly outperforms DAG-GNN across all four datasets (Child, Alarm, Hilfinder, and Pathfinder). Additionally, both DAG-WGAN and its improved version, DAG-WGAN+, deliver inferior results compared to our framework on three out of the four datasets. Similar outcomes are observed in experiments with continuous datasets, where the SHD gap between our method and the others widens as the number of data variables increases. 

\subsection{Real data experiments}
\label{sec:4.3}

While our experiments with simulated data show the ability of DAGAF to generate decent results, they are not entirely conclusive, as simulations differ from real-world scenarios. To address this issue, we conducted experiments using a well-known real-world dataset called Sachs \cite{Sachs-et-al:scheme}, which is widely recognized in the research community. This dataset comprises 7466 samples across 11 columns, with an estimated ground truth containing 20 edges. Additionally, our approach assumed both ANM and PNL during this test and compared the SHD produced by these FCM to determine whether the post-nonlinear model is superior when applied to real-world data. The results are presented in Table \ref{tab:table7}.

\begin{table}[!ht]
\centering
\caption{DAG structures recovered from real data}
\label{tab:table7}
\begin{tabular}{@{}cc@{}}
\toprule
\multirow{2}{*}{Model} & Sachs Dataset \\ \cmidrule(l){2-2} 
                       & SHD           \\ \midrule
DAG-WGAN               & 17            \\
DAG-WGAN+              & 15            \\
DAG-NF                 & 15            \\
DAG-GNN                & 25            \\
GAE                    & 20            \\
GraN-DAG               & 17            \\
VI-DP-DAG              & 16            \\
DAGAF            & ANM \textbf{9} / PNL \textbf{8}    \\ \bottomrule
\end{tabular}%
\end{table}


The experiment with the Sachs dataset shows that our method can also accurately discover DAG structures from real data. As indicated in Table \ref{tab:table7}, our framework significantly outperforms all other state-of-the-art algorithms involved in the study. Additionally, the empirical evidence suggests that the PNL assumption enables our approach to learn a more precise causal structure approximation compared to the application of other identifiable causal models. 

\subsection{Synthetic data quality}
\label{ref:sec4.4}

In this work, we have advocated for the superiority of our method over current state-of-the-art models by combining causality learning with synthetic data generation. To further support this claim, we compare the features (d=10) from two tabular datasets of simulation data (one based on the ANM and the other on the PNL assumption) with the features generated by our approach. We consider the special case where our model achieves an SHD of 0 on the simulation data, as this would result in the highest quality samples due to the complete knowledge of causal mechanisms in the generative process.

We conduct the following analyses to compare the real and synthetic data: computing the correlation matrices, visualizing the joint and marginal distributions, investigating distributional consistency with Principal Component Analysis (PCA) \cite{Jolliffe2016PrincipalCA} and performing machine learning regression to compare the feature importance in both datasets. Our findings demonstrate that the synthetic samples generated by the proposed framework accurately replicate the correlations (Figure \ref{fig:1}) along with the joint and marginal distributions of the features present in the observational data (Figure \ref{fig:2}).  Furthermore, the generated data captures the underlying patterns and structure of the original data (Figure \ref{fig:9}),  and contains enough predictive information to support regression tasks (Figure  \ref{fig:6}). We present only a few examples of each analysis in this section; additional results can be found in Appendix \ref{ad}.


\begin{figure*}[!ht]
\begin{center}
    \begin{subfigure}[t]{0.48\textwidth}
        \includegraphics[width=\textwidth]{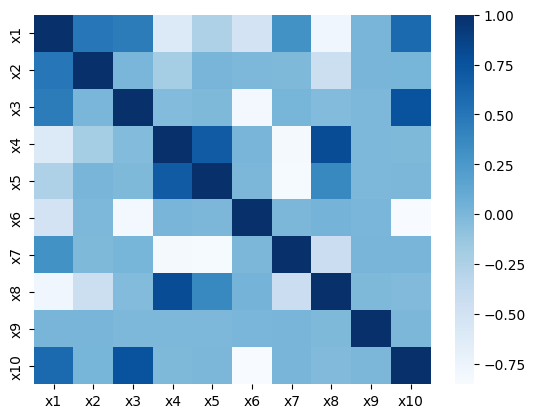} 
    \end{subfigure}
     \begin{subfigure}[t]{0.48\textwidth}
        \includegraphics[width=\textwidth]{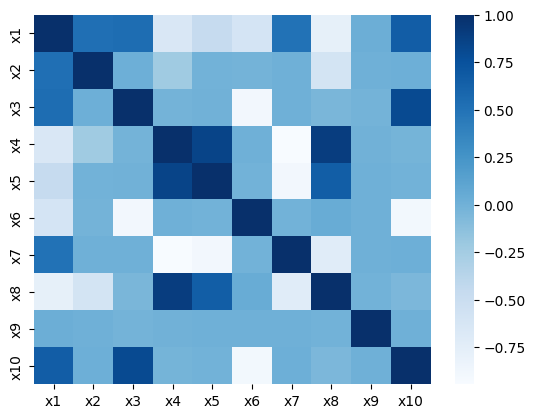} 
    \end{subfigure}
    \begin{subfigure}[t]{0.48\textwidth}
        \includegraphics[width=\textwidth]{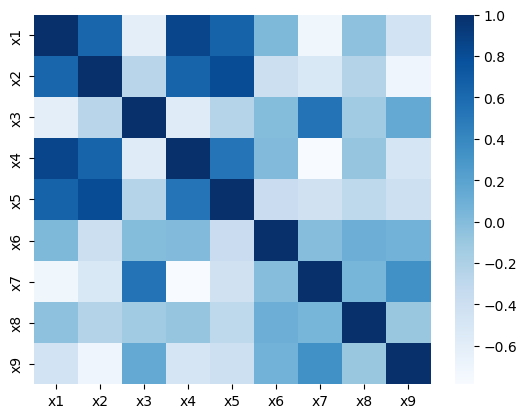} 
    \end{subfigure}
     \begin{subfigure}[t]{0.48\textwidth}
        \includegraphics[width=\textwidth]{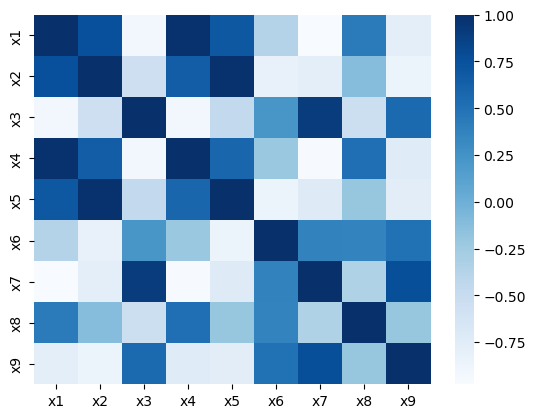} 
    \end{subfigure}
\end{center}
\caption{
 Comparison of the correlation matrices for real (left) and synthetic (right) features reveals
that the statistical correlations across the feature space for both real and synthetic data are nearly
identical, in both the ANM (first row) and the PNL (second row) case. 
}
\label{fig:1}
\end{figure*}

\begin{figure*}[!ht]
\begin{center}
\begin{subfigure}[t]{0.49\textwidth}
        \includegraphics[width=\textwidth]{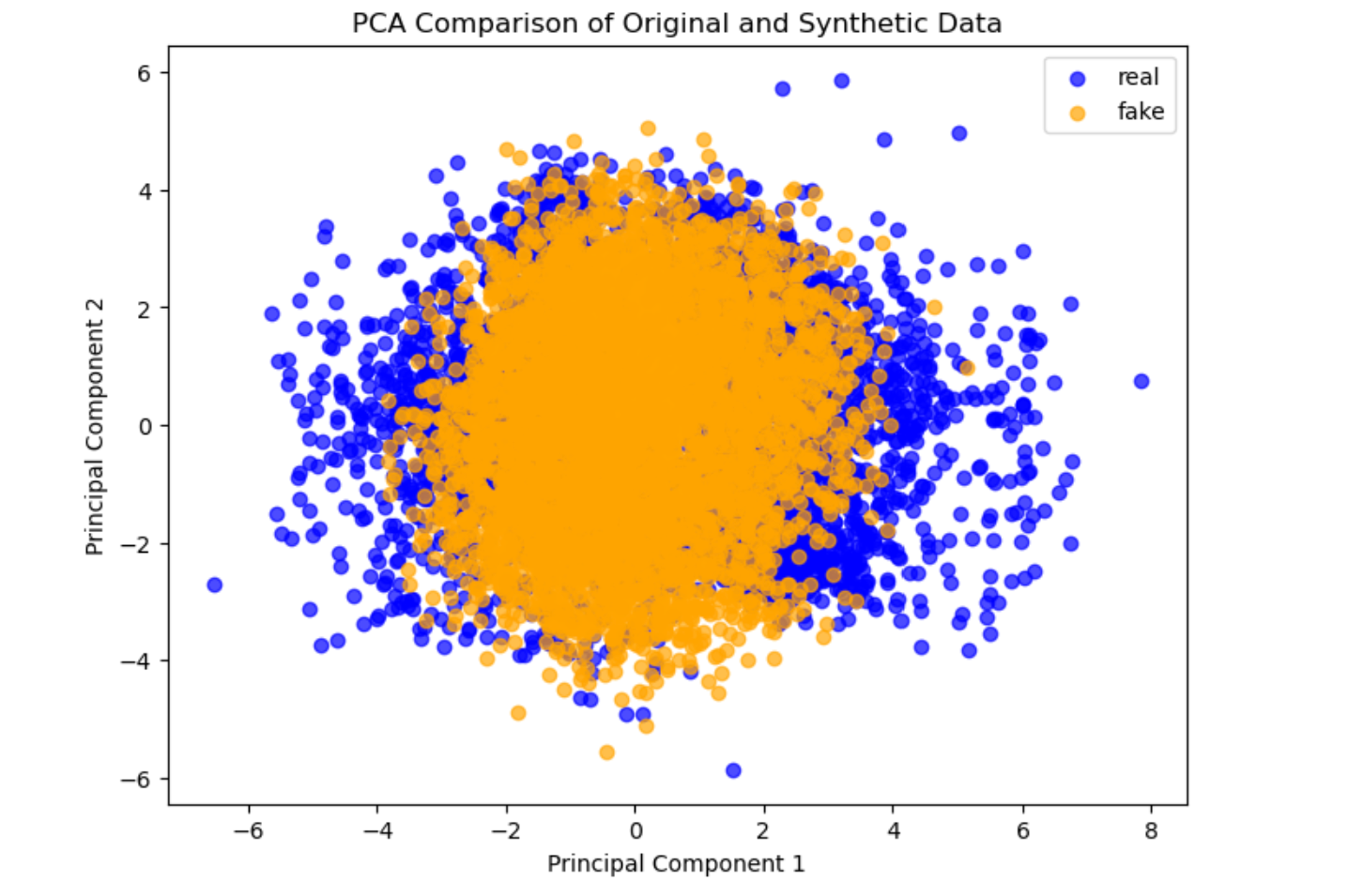} 
    \end{subfigure}
    \begin{subfigure}[t]{0.49\textwidth}
        \includegraphics[width=\textwidth]{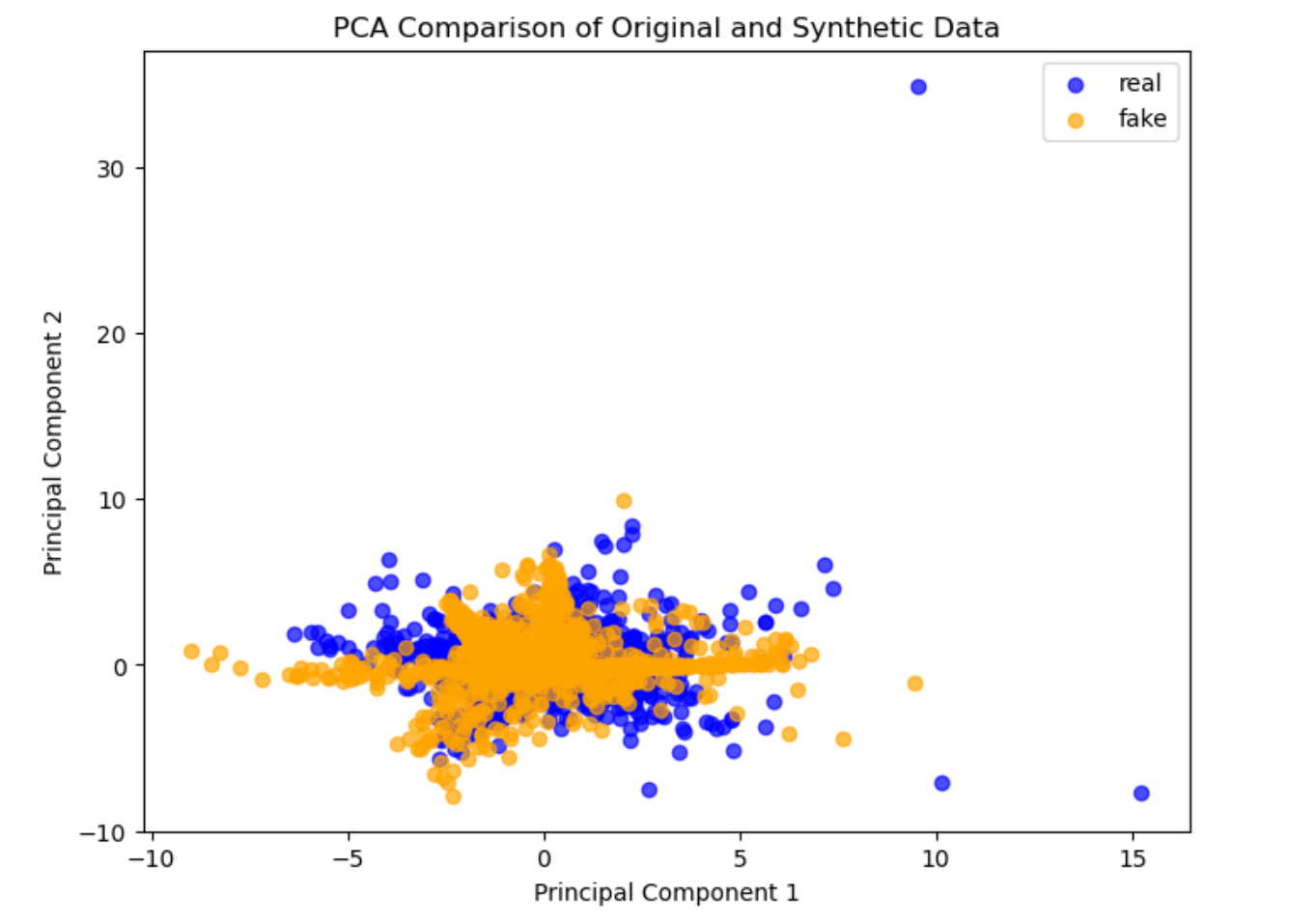} 
    \end{subfigure}
\end{center}
\caption{ Principal Component Analysis (PCA) between the original and synthetic samples for both the ANM (left) and the PNL (right) case. We observe both the input and the synthetic samples have similar clusters and outliers. The results indicate that the implicitly generated distribution resembles the original distribution in both mean and standard deviation, making them indistinguishable from each other.
}
\label{fig:9}
\end{figure*}

\begin{figure*}[!ht]
\begin{center}
    \begin{subfigure}[t]{0.49\textwidth}
        \includegraphics[width=\textwidth]{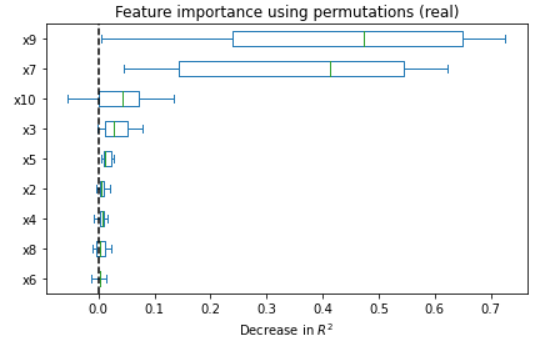} 
    \end{subfigure}
    \begin{subfigure}[t]{0.49\textwidth}
        \includegraphics[width=\textwidth]{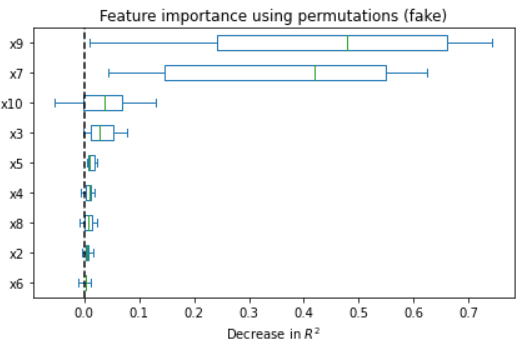} 
    \end{subfigure}
    \begin{subfigure}[t]{0.49\textwidth}
        \includegraphics[width=\textwidth]{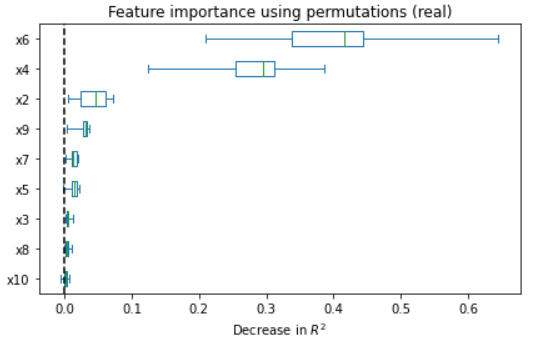} 
    \end{subfigure}
    \begin{subfigure}[t]{0.49\textwidth}
        \includegraphics[width=\textwidth]{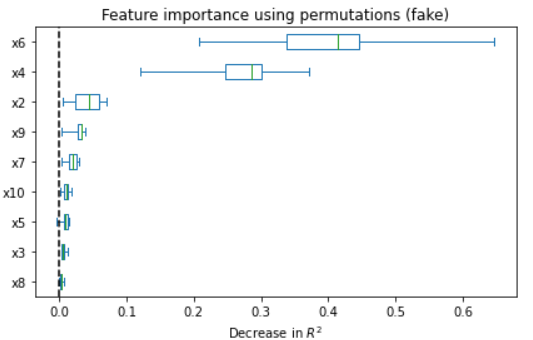} 
    \end{subfigure}
\end{center}
\caption{
 Feature importance comparison between real (left) and synthetic (right) data, in both the ANM (first row) and the PNL (second row) case. The synthetic features with their relevance are indistinguishable from the original ones, allowing for their application in regression tasks.
}

\label{fig:6}
\end{figure*}

\begin{figure*}[!ht]
\begin{center}
    \begin{subfigure}[b]{0.43\textwidth}
        \includegraphics[width=\textwidth]{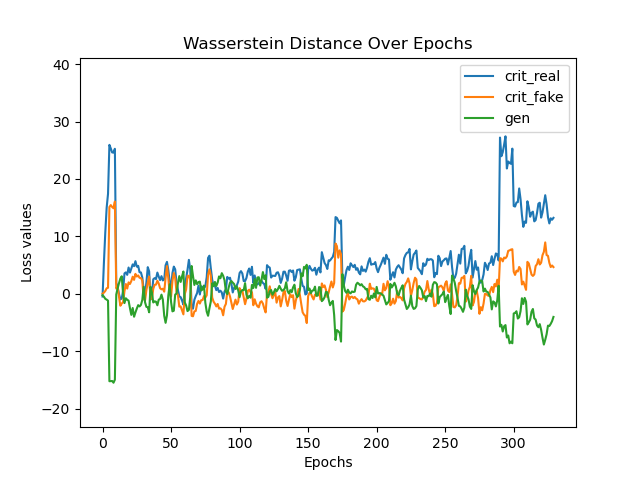} 
    \end{subfigure}
\end{center}
\caption{
Visualizing the Wasserstein distance between the original and synthetic data over the course of the augmented Lagrangian algorithm. The significant discrepancy between the real and the generated samples (165-170 and from 300 epochs onward) occurs because of fluctuations in the SHD, courtesy of the parameter-tuning for the continuous optimization approach. Conversely, the lowest SHD is detected when the Wasserstein Distance is at its lower conversions (50-150 and 175 - 275 epochs). 
}
\label{fig:5}
\end{figure*}

\begin{figure*}[!ht]
\begin{center}
    \begin{subfigure}[t]{0.35\textwidth}
        \includegraphics[width=\textwidth]{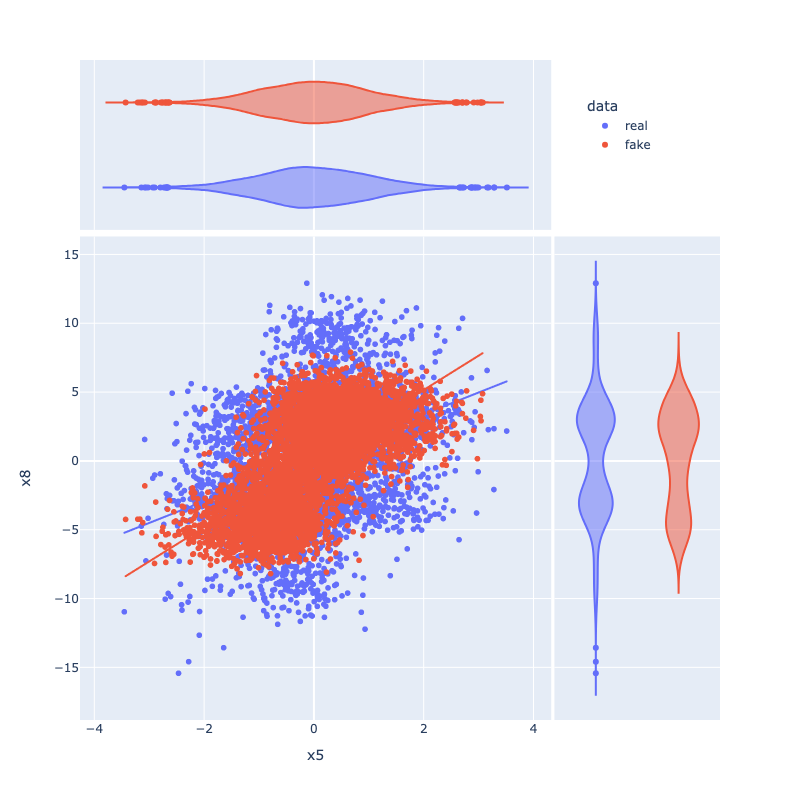} 
    \end{subfigure}
    \begin{subfigure}[t]{0.35\textwidth}
        \includegraphics[width=\textwidth]{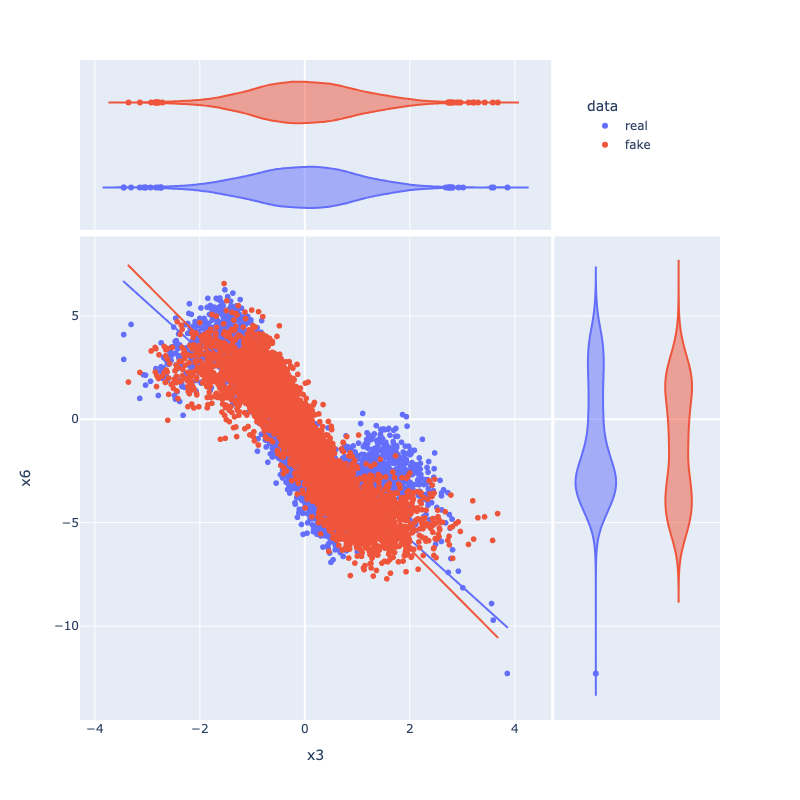} 
    \end{subfigure}
    \begin{subfigure}[b]{0.35\textwidth}
        \includegraphics[width=\textwidth]{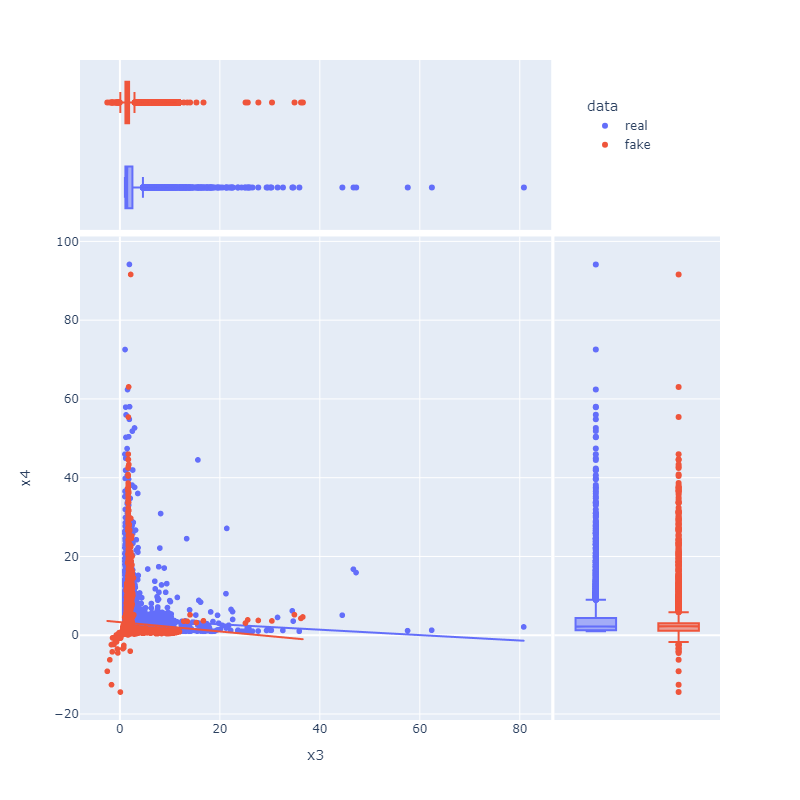} 
    \end{subfigure}
\end{center}
\caption{
 Visualizing the distributions of the real and synthetic features, we plotted x5 against x8 (left), x3 against x6 (right), in the case of ANM, and x3 against x4 for the PNL case. The joint and marginal distributions are accurately modeled with no significant differences between the real and synthetic features.
}
\label{fig:2}
\end{figure*}

\section{Conclusion \& Future Work}

This research introduces a novel framework for multivariate causal structure learning aimed at holistically discovering DAG structures in a dataset to model its generative mechanisms and produce synthetic samples that closely resemble real data. We conducted a theoretical analysis demonstrating that the Wasserstein-1 distance metric can be leveraged for structure learning and explained how the integration of regularization and reconstruction loss terms in our training process can enhance the identification of causal relationships from observational data. Furthermore, we showcased the performance of our approach through extensive experiments, where the method significantly outperformed state-of-the-art DAG-learning techniques. The experimental results demonstrate that our method effectively handles numerical and categorical data types to accurately recover DAG structures under LiNGAM, ANM or PNL assumptions, while generating realistic data samples. The analysis of our results suggests that the Wasserstein distance plays a significant role in enhancing DAG learning. Our findings also indicate a close relationship between the simultaneous generation of diverse high-quality data and the learning of accurate DAG structures, suggesting that the synthesis of realistic data samples is facilitated by the recovery of meaningful variable relationships.


 All results are generated using LiNGAM, ANM or PNL, which are proven to be identifiable \cite{Shimizu2006ALN}, \cite{Hoyer2008NonlinearCD}, \cite{JMLR:v21:19-664}, \cite{Zhang2009OnTI}. However, our experiments have been restricted to these models, which is a limitation. In future work, we plan to explore other identifiable structures, such as generalized linear models, polynomial regression and index models. Furthermore, our tabular data synthesis experiments have also been quite limited, focusing only on analyzing primitive features of datasets. We plan to extend our investigations by comparing the output of DAGAF with other causality-based tabular data generation methods \cite{Breugel2021DECAFGF}, \cite{Rajabi2021TabFairGANFT}, \cite{Wen2021CausalTGANGT}. This comparison will be conducted using more appropriate metrics, such as Cross-Validation Score (CVS) \cite{Stone1976CrossValidatoryCA}, Kolmogorov-Smirnov (KS) test \cite{Simard2011ComputingTT} or Chi-Square test \cite{Williams1950TheCO}, to offer a more comprehensive qualitative analysis of the data generation capabilities of our framework.

In essence, our approach identifies DAG structures by integrating MLE with adversarial loss components and enforcing an acyclicity constraint via an augmented Lagrangian. Consequently, our model exhibits high computational complexity and a complicated loss function. We plan to explore more efficient structure learning methods and adversarial loss training to develop a faster model that relies exclusively on the Wasserstein loss.

The proposed causal learning-based synthetic data generation framework is closely connected to recent advances in generative modeling, including Digital Twins and transformer-based architectures. DAG learning naturally embodies the essence of attention mechanisms by identifying the direct causal parents of each variable, similar to how transformers dynamically weigh relevant dependencies. Moreover, our approach aligns with the principles of Digital Twins, which aim to simulate real-world systems and generate data that accurately reflect their underlying causal structures. This study establishes a unified framework for causal discovery and generative modeling, leveraging adversarial learning, MSE, MMD, and KLD regularization to ensure robust structure learning and high-fidelity synthetic data generation.

Our future work will include several mitigation strategies to address missing data. We will employ data imputation techniques such as mean/mode imputation, multiple imputation, and advanced methods like matrix completion and variational autoencoders (VAEs), while acknowledging that imputation introduces assumptions about missingness that may bias results. Additionally, we will leverage structural information, using partial knowledge of the directed acyclic graph (DAG), such as domain expertise, to help compensate for missing data. Another approach involves explicitly modeling missingness mechanisms by introducing a missingness variable into the DAG to represent whether a specific variable is missing. Moreover, we will also apply causal inference techniques, including latent variable models and specialized methods designed for incomplete data, to ensure robust and accurate analyses.

Finally, as part of our future work, we will examine the flexibility of our framework by experimenting with different combinations of FCM and DGM to identify the optimal configuration for enhancing the output quality of the proposed method and extending its application to time-series data. For example, recently developed concepts such as digital twin layer via multi-attention networks \cite{Poap2024SonarDT}, \cite{KurisummoottilThomas2023CausalSC} can offer exciting avenues for future exploration. This can be achieved through their multi-attention mechanisms, which effectively highlight relevant features while filtering out irrelevant noise and misleading correlations. Their ability to adaptively handle mixed-variable datasets, align higher-order statistics of distributions, and dynamically capture multi-modal dependencies can complement the causal discovery framework presented in this work. Future research could focus on integrating these mechanisms to improve the robustness and scalability of causal discovery and synthetic data generation for complex real-world datasets. Such integration would bridge the gap between foundational theoretical insights and practical applications, addressing challenges like non-i.i.d. data and variable heterogeneity while enabling the creation of robust, high-fidelity synthetic datasets for downstream tasks.

The novel setup will be supported by an extensive study of hyper-parameters to determine their best possible values, resulting in more realistic data samples generated through a more accurately simulated generative process. 

\begin{appendices}

\section{Mathematical Proofs}

This appendix provides the proofs associated with the propositions and theorems found in Section \ref{ref:sec3}.

\subsection{Proof of Proposition~\ref{prop1}}
\label{a1}
\primepropone*

\begin{proof}
    Let $\tilde{\mathbf{X}} \sim P_{G_A}(\tilde{\mathbf{X}})$ denote the distribution generated by a DAG $G_A$. Assume the true data distribution $\mathbf{X} \sim P(\mathbf{X})$ is generated from the ground-truth graph $\mathcal{G}_\mathcal{A}$.
The adversarial loss $\mathcal{L}_{\text{adv}}(\mathbf{X}, \tilde{\mathbf{X}})$ based on the Wasserstein distance $\mathbb{W}_p(P(\mathbf{X}), P_{G_A}(\tilde{\mathbf{X}}))$ is expressed in Equation \eqref{eq2}.
Therefore, minimizing $\mathcal{L}_{\text{adv}}(\mathbf{X}, \tilde{\mathbf{X}})$ aligns $P_{G_A}(\tilde{\mathbf{X}})$ with $P(\mathbf{X})$:

    \[
        P_{G_A}(\tilde{\mathbf{X}}) = P(\mathbf{X}) \implies \mathbb{W}_p(P(\mathbf{X}), P_{G_A}(\tilde{\mathbf{X}})) = 0,
    \] at the global minimum of the distance metric
    \[
        \mathbb{W}_p(P(\mathbf{X}), P_{G_A}(\tilde{\mathbf{X}})) = 0 \implies P_{G_A}(\tilde{\mathbf{X}}) = P(\mathbf{X}).
    \]

    \noindent For $G_A \neq \mathcal{G}_\mathcal{A}$, the generated distribution $P_{G_A}(\tilde{\mathbf{X}})$ cannot match $P(\mathbf{X})$ because the structure $G_A$ is incorrect:
    \[
        \mathbb{W}_p(P(\mathbf{X}), P_{G_A}(\tilde{\mathbf{X}})) > 0.
    \]    


        
    
    \noindent Therefore, minimizing $\mathcal{L}_{\text{adv}}(\mathbf{X}, \tilde{\mathbf{X}})$ aligns $P_{G_A}(\tilde{\mathbf{X}})$ with $P(\mathbf{X})$, and the identifiability assumption guarantees that this occurs only when $G_A = \mathcal{G}_\mathcal{A}$, thus concluding the proof.
\end{proof}

\subsection{Proof of Proposition~\ref{prop2}}
\label{a2}
\primeproptwo*

\begin{proof}

    From the definition of \( \mathcal{L}_{\text{MSE}}(\mathbf{X}, \tilde{\mathbf{X}}) \), it is minimized if and only if:
    \[
        \|\mathbf{X}_i - \tilde{\mathbf{X}}_i\|^2 = 0, \quad \forall \mathbf{X}_i \in P(\mathbf{X}), \forall \tilde{\mathbf{X}}_i \in P_{G_A}(\tilde{\mathbf{X}}), \quad \forall i \in \{1, \dots, n\},
    \] which implies $\mathbf{X}_i = \tilde{\mathbf{X}}_i, \quad \forall i \in \{1, \dots, n\}$.\\

\noindent The gradient of $\mathcal{L}_{\text{MSE}}(\mathbf{X}, \tilde{\mathbf{X}})$ with respect to the model parameters $\theta$ (which define $G_A$) is given by: 

    \[
        \nabla_{\theta} \mathcal{L}_{\text{MSE}}(\mathbf{X}, \tilde{\mathbf{X}}) = \frac{1}{n} \sum_{i=1}^n 2 \cdot ||\mathbf{X}_i - \tilde{\mathbf{X}}_i|| \cdot \nabla_{\theta} \tilde{\mathbf{X}}_i.
    \]

    \noindent As the model predictions $\tilde{\mathbf{X}}_i$ approach the true data $\mathbf{X}_i$ the residual distance $\|\mathbf{X}_i - \tilde{\mathbf{X}}_i\|$ becomes smaller:

    \[
        \|\mathbf{X}_i - \tilde{\mathbf{X}}_i\| \to 0 \quad \implies \quad \nabla_{\theta} \mathcal{L}_{\text{MSE}}(\mathbf{X}, \tilde{\mathbf{X}}) \to 0.
    \] This behavior arises because the residual distance $\|\mathbf{X}_i - \tilde{\mathbf{X}}_i\|$ directly scales the gradient. As $\tilde{\mathbf{X}}_i$ aligns with $\mathbf{X}_i$, the gradient magnitude decreases, reducing the size of updates during optimization.
    Therefore, the MSE loss offers optimization stability by smooth gradients. By steady convergence as $\tilde{\mathbf{X}}_i \to \mathbf{X}_i$, preventing oscillatory behavior, thus concluding the proof.

\end{proof}

\subsection{Proof of Proposition~\ref{prop3}}
\label{a3}
\primepropthree*

\begin{proof}


\noindent This term is used to ensure that the residual noise $\mathcal{Z}_j$ conditioned on ${Pa}_j$ is Gaussian. The residual $\mathcal{Z}_j$ can be expressed as 
    $\mathcal{Z}_j = X_j - f_j({Pa}_j)$.
By minimizing $\mathcal{L}_{\text{KLD}}(\mathbf{X}, \tilde{\mathbf{X}})$, the model is encouraged to fit $f_j$ such that $\mathcal{Z}_j \sim \mathcal{N}(0, \sigma_j^2)$, namely: 
    $
        P(\mathcal{Z}_j|{Pa}_j) \approx \mathcal{N}(0, \sigma_j^2).
    $
    Let $\mathcal{L}_{\text{KLD}}(\mathbf{X}, \tilde{\mathbf{X}})$ act as a penalty on deviations of $P(\mathcal{Z}_j|{Pa}_j)$ from $\mathcal{N}(0, \sigma_j^2)$. The gradient of $\mathcal{L}_{\text{KLD}}(\mathbf{X}, \tilde{\mathbf{X}})$ with respect to $G_A$ is:
    \[
        \nabla_{G_A} \mathcal{L}_{\text{KLD}}(\mathbf{X}, \tilde{\mathbf{X}}) = \sum_{j=1}^d \mathbb{E}_{{Pa}_j} \left[ \nabla_{G_A} \log \frac{P(\mathcal{Z}_j \mid {Pa}_j)}{\mathcal{N}(\mathcal{Z}_j; 0, \sigma_j^2)} \right].
    \] The term $\log \mathcal{N}(\mathcal{Z}_j; 0, \sigma_j^2)$ is quadratic in $\mathcal{Z}_j$, making $\nabla_{G_A} \mathcal{L}_{\text{KLD}}(\mathbf{X}, \tilde{\mathbf{X}})$ smooth and less sensitive to small variations in $G_A$. This prevents overfitting to noise in $X_j$, stabilizing the optimization of $f_j$. Hence, the KLD term can improve the overall stability of our model by approximating the implicitly generated distribution $P_{G_A}(\tilde{\mathbf{X}})$ to a normal (Gaussian) distribution. \\

    \noindent The KLD term also complements other loss terms. The adversarial loss $\mathcal{L}_{\text{adv}}(\mathbf{X}, \tilde{\mathbf{X}})$ ensures global alignment of $P(\mathbf{X})$ and $P_{G_A}(\tilde{\mathbf{X}})$, but does not directly enforce the additive Gaussian assumption. The MSE loss $\mathcal{L}_{\text{MSE}}(\mathbf{X}, \tilde{\mathbf{X}})$ focuses on point-wise alignment of $\mathbf{X}_i$ and $\tilde{\mathbf{X}}_i$, but does not account for statistical properties of $\mathcal{Z}_j$. The KLD regularization $\mathcal{L}_{\text{KLD}}(\mathbf{X}, \tilde{\mathbf{X}})$ explicitly enforces the Gaussianity of $\mathcal{Z}_j$, ensuring $\mathcal{Z}_j$ matches the additive Gaussian assumption, preventing $f_j$ from overfitting to non-Gaussian noise, thus concluding the proof. 
\end{proof}

\subsection{Proof of Proposition~\ref{prop4}}
\label{a4}
\primepropfour*

\begin{proof}
    
    The MMD loss term is
    \[
        \mathcal{L}_{\text{MMD}}(\mathbf{X}, \tilde{\mathbf{X}}) = \frac{1}{n}\sum\limits_{i \neq j}^{n}k(\mathbf{X}_i,\mathbf{X}_j) - \frac{2}{n}\sum\limits_{i \neq j}^{n}k(\mathbf{X}_i, \tilde{\mathbf{X}}_j) + \frac{1}{n}\sum\limits_{i \neq j}^{n}k(\tilde{\mathbf{X}}_i, \tilde{\mathbf{X}}_j).
    \] 
The gradient of $\mathcal{L}_{\text{MMD}}(\mathbf{X}, \tilde{\mathbf{X}})$ with respect to the parameters $\theta$ defining the model $G_A$ can be written as:
    \begin{equation*}
        \begin{aligned}
            \nabla_{\theta} \mathcal{L}_{\text{MMD}}(\mathbf{X}, \tilde{\mathbf{X}}) &= 2 ( \mathbb{E}_{\tilde{\mathbf{X}} \sim P_{G_A}(\tilde{\mathbf{X}})}[\nabla_{\theta}k(\tilde{\mathbf{X}}_i, \tilde{\mathbf{X}}_j)] \\ &- \mathbb{E}_{\mathbf{X} \sim P(\mathbf{X}), \tilde{\mathbf{X}} \sim P_{G_A}(\tilde{\mathbf{X}})}[\nabla_{\theta}k(\mathbf{X}_i, \tilde{\mathbf{X}}_j)] ),
        \end{aligned}
    \end{equation*} where $\tilde{\mathbf{X}} \sim P_{G_A}(\tilde{\mathbf{X}})$ are samples from the model-generated distribution, $\mathbf{X} \sim P(\mathbf{X})$ are samples from the true distribution and $k(\mathbf{X}, \tilde{\mathbf{X}})$ is a positive-definite kernel, often chosen as a Gaussian kernel or other characteristic kernel. \\
        
    \noindent The kernel function $k(\mathbf{X}, \tilde{\mathbf{X}})$ implicitly captures higher-order statistics of the distributions $P(\mathbf{X})$ and $P_{G_A}(\tilde{\mathbf{X}})$, including the internal consistency of the model distribution via the third term in $\mathcal{L}_{\text{MMD}}(\mathbf{X}, \tilde{\mathbf{X}})$, $\mathbb{E}_{\tilde{\mathbf{X}} \sim P_{G_A}(\tilde{\mathbf{X}})}[k(\tilde{\mathbf{X}}_i, \tilde{\mathbf{X}}_j)]$, which aligns model-generated samples $\tilde{\mathbf{X}}_i$ and $\tilde{\mathbf{X}}_j$ to ensure that the higher-order moments within $P_{G_A}(\tilde{\mathbf{X}})$ are coherent. It also allows alignment with the true distribution via the second term, $\mathbb{E}_{\mathbf{X} \sim P(\mathbf{X}), \tilde{\mathbf{X}} \sim P_{G_A}(\tilde{\mathbf{X}})}[k(\mathbf{X}_i, \tilde{\mathbf{X}}_j)]$.\\ 
    

    \noindent 
    $\mathcal{L}_{\text{MMD}}(\mathbf{X}, \tilde{\mathbf{X}})$ explicitly captures higher-order discrepancies through the kernel-induced feature mappings $k(.)$. This provides a complementary mechanism to adversarial losses, ensuring both global and fine-grained alignment between $P(\mathbf{X})$ and $P_{G_A}(\tilde{\mathbf{X}})$.
    \noindent Together, $\mathcal{L}_{\text{MMD}}(\mathbf{X}, \tilde{\mathbf{X}})$ and $\mathcal{L}_{\text{adv}}(\mathbf{X}, \tilde{\mathbf{X}})$ form a robust framework for distributional alignment, addressing both large-scale and higher-order mismatches, thus completing the proof.
    
\end{proof}

\subsection{Proof of Proposition~\ref{prop5}}
\label{a5}
\primepropfive*

\begin{proof}
    We split the proposition into two lemmas for identifiability under: 1) LiNGAM and ANM; 2) PNL, respectively.  \\

    \begin{lemma}
    \label{lem1}
        Under the additive noise model (ANM) or the linear non-Gaussian acyclic model (LiNGAM) assumption, the true DAG $\mathcal{G}_\mathcal{A}$ is uniquely identifiable from $P(\mathbf{X})$
        \[
            P(\mathbf{X}) \neq P^{\prime}(\mathbf{X}) \implies \mathcal{G}_\mathcal{A} \neq \mathcal{G^{\prime}}_\mathcal{A^{\prime}}. 
        \]
    \end{lemma}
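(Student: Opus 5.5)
We read the displayed implication as shorthand for identifiability, and we prove it in the form that carries the content: if two structural models from the same class (both LiNGAM, or both ANM) with graphs $\mathcal{G}_\mathcal{A}$ and $\mathcal{G^{\prime}}_\mathcal{A^{\prime}}$ induce the same joint distribution $P(\mathbf{X})=P^{\prime}(\mathbf{X})$, then $\mathcal{G}_\mathcal{A}=\mathcal{G^{\prime}}_\mathcal{A^{\prime}}$. Equivalently, no distribution in the class admits two distinct DAG representations. We treat the two cases separately, relying on the classical results cited in Section~\ref{sec:2}.

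\textbf{LiNGAM.} Write the model as $X=BX+\mathcal{Z}$, where $B$ is strictly lower triangular after a permutation (by acyclicity) and $\mathcal{Z}$ has independent non-Gaussian components. Then $X=(I-B)^{-1}\mathcal{Z}$ is an ICA model with mixing matrix $M=(I-B)^{-1}$. By the identifiability theorem of ICA (Comon), $M$ is determined by $P(\mathbf{X})$ up to permutation and scaling of its columns, so $W=I-B$ is determined up to row permutation and scaling. Among all such row permutations and scalings, exactly one makes $W$ permutable to lower triangular form with unit diagonal; this is the argument of \cite{Shimizu2006ALN}. Hence $B$ is unique, and so is its support, which is $\mathcal{G}_\mathcal{A}$.

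\textbf{ANM.} Suppose $\mathcal{G}_\mathcal{A}\neq\mathcal{G^{\prime}}_\mathcal{A^{\prime}}$. Following the reduction of Peters et al.\ (\cite{JMLR:v21:19-664}, \cite{peters2012identifiability}), we assume causal minimality. The first step is to find two nodes $L,K$ with $L\to K$ in $\mathcal{G}_\mathcal{A}$ and $K\to L$ in $\mathcal{G^{\prime}}_\mathcal{A^{\prime}}$; one obtains them by repeatedly removing common sinks with identical parent sets until the graphs first differ. The second step is to let $S$ be the union of the remaining parents of $K$ and $L$ in both graphs, and to condition on $S=s$. This yields a bivariate ANM in the direction $L\to K$ under $\mathcal{G}_\mathcal{A}$ and a bivariate ANM in the direction $K\to L$ under $\mathcal{G^{\prime}}_\mathcal{A^{\prime}}$, with the noise in each direction independent of the respective cause. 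The final step is to invoke the bivariate identifiability theorem of \cite{Hoyer2008NonlinearCD}: for nonlinear, three times differentiable $f$ with Gaussian noise, an ANM cannot hold in both directions, outside the exceptional family that nonlinearity excludes. This is a contradiction, so the graphs coincide.

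The main obstacle is the conditioning step in the ANM case. It requires that conditioning on $S=s$ preserves independence of the noise from the conditioned cause, which uses the Markov property and the fact that $S$ contains no descendants of $K$ or $L$ in either graph. It also requires a value $s$ with positive density where the bivariate conditions of \cite{Hoyer2008NonlinearCD} hold. We would handle this by choosing $S$ exactly as in Peters et al.'s multivariate proof and citing their proposition that bivariate identifiability for every $s$ implies multivariate identifiability, rather than re-deriving it.
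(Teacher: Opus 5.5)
Your reading of the display is the right one. Literally, $P(\mathbf{X})\neq P^{\prime}(\mathbf{X})\Rightarrow\mathcal{G}_\mathcal{A}\neq\mathcal{G}^{\prime}_{\mathcal{A}^{\prime}}$ says ``same graph implies same distribution'', which is false. The paper's proof (``no other DAG can generate the same $P(\mathbf{X})$'') argues the direction you prove.

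Your route is genuinely different from the paper's. Write $\mathcal{G}^{\prime}$ for $\mathcal{G}^{\prime}_{\mathcal{A}^{\prime}}$. The paper handles ANM and LiNGAM in one stroke. It writes $X_j=f_j(\mathrm{Pa}_j)+\mathcal{Z}_j$ with $\mathcal{Z}_j$ independent of $\mathrm{Pa}_j$. It then asserts that for $\mathcal{G}^{\prime}\neq\mathcal{G}_\mathcal{A}$ the new residuals cannot stay independent of $\mathrm{Pa}^{\prime}_j$, because independence is ``specific to the true causal structure''. That assertion is the whole content of identifiability. The reasoning never invokes non-Gaussianity or nonlinearity, so the same words would apply to linear Gaussian SEMs, where the conclusion fails.

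You instead split by model and import the actual theorems:
\begin{itemize}
\item for LiNGAM, Comon's ICA identifiability together with Shimizu et al.'s unique-permutation argument;
\item for ANM, Peters et al.'s reduction to conditional bivariate ANMs plus the bivariate Gaussian-noise result.
\end{itemize}
This uses each hypothesis exactly where it matters. It also surfaces assumptions the paper leaves implicit: causal minimality (which the standing faithfulness assumption would supply) and strictly positive, smooth densities. The cost is reliance on cited results, with the combinatorial choice of $(L,K)$ deferred to Peters et al.

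Two points in your ANM sketch need tightening.

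First, $S$ should not be the union of all remaining parents of both nodes in both graphs. A parent of the cause in one graph can be a descendant of the effect in the other, and conditioning on it breaks the noise independence. With $L\to K$ in $\mathcal{G}_\mathcal{A}$ and $K\to L$ in $\mathcal{G}^{\prime}$, the right set is $S=(\mathrm{Pa}^{\mathcal{G}_\mathcal{A}}_K\setminus\{L\})\cup(\mathrm{Pa}^{\mathcal{G}^{\prime}}_L\setminus\{K\})$. What is needed is only that $S$ consist of non-descendants of the \emph{effect} in each graph ($K$ in $\mathcal{G}_\mathcal{A}$, $L$ in $\mathcal{G}^{\prime}$), not of both nodes in both graphs. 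Since you say you would take Peters et al.'s $S$, this is a slip of description.

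Second, and more substantively, the bivariate theorem is applied to the slice $\ell\mapsto f_K(\ell,s)$ under the conditional law of $L$ given $S=s$. With Gaussian noise, the exceptional case is a linear slice with a Gaussian cause. So you need each $f_j$ to be nonlinear in each of its arguments, for some admissible $s$. You flag this as a requirement, but it does not follow from ``$f_j$ nonlinear''. It must be added as a hypothesis, because the lemma fails without it.

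For example, take $X_2\sim\mathcal{N}(0,1)$, $X_1=\sin X_2+\mathcal{Z}_1$ and $X_3=X_1+\cos X_2+\mathcal{Z}_3$ with standard Gaussian noises. Set $M=\mathcal{Z}_1+\mathcal{Z}_3$ and $\mathcal{Z}^{\prime}=(\mathcal{Z}_1-\mathcal{Z}_3)/2$; then $M$, $\mathcal{Z}^{\prime}$ and $X_2$ are mutually independent. One also has $X_3=\sin X_2+\cos X_2+M$ and $X_1=X_3/2+(\sin X_2-\cos X_2)/2+\mathcal{Z}^{\prime}$. These are two different complete DAGs, each a causally minimal, faithful ANM with Gaussian noise and nonlinear mechanisms.
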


    \begin{proof}
        Let the dataset $\chi$ consist of $X = \{X_1, ..., X_d\}$ data attributes, where each $X_j$ is generated under the ANM or LiNGAM assumption, both described using the following equation:
        \[
            X_j = f_j({Pa}_j) + \mathcal{Z}_j,
        \]where $f_j: \mathbb{R}^d \rightarrow \mathbb{R}$ are deterministic functions (nonlinear in ANM, linear in LiNGAM), $\mathcal{Z}_j \sim P(\mathcal{Z})$ are independent noise variables (non-Gaussian in LiNGAM, Gaussian in ANM), ${Pa}_j$ represents the set of direct parents of $X_j$ in the DAG.\\

        \noindent For both ANM and LiNGAM, the independence of $\mathcal{Z}_j$ from ${Pa}_j$ plays a crucial role: $\mathcal{Z}_j \independent {Pa}_j.$
        The independence of $\mathcal{Z}_j$ in the true DAG $\mathcal{G}_\mathcal{A}$ imposes strong constraints on the functional relationships in $\mathcal{G}_\mathcal{A}$:
        \[
            P(Z_j) = P_{\mathcal{Z}_j}(X_j - f_j({Pa}_j)),
        \] 
        where $\mathcal{Z}_j$ is the independent noise term. \\
        
        \noindent In the case when $\mathcal{G^{\prime}}_\mathcal{A^{\prime}} \neq \mathcal{G}_\mathcal{A}$, the functional relationships $f^{\prime}_j \in \mathcal{G^{\prime}}_\mathcal{A^{\prime}}$ must satisfy:
        \[
            P(Z^{\prime}_j) = P_{\mathcal{Z}^{\prime}_j}(X_j - f^{\prime}_j({Pa}^{\prime}_j)),
        \]
        where $\mathcal{Z}^{\prime}_j$ are the noise terms under $\mathcal{G^{\prime}}_\mathcal{A^{\prime}}$. \\

        \noindent However, when $\mathcal{G^{\prime}}_\mathcal{A^{\prime}} \neq \mathcal{G}_\mathcal{A}$, the new functional relationships $f^{\prime}_j$ will be different from $f_j$ in the true DAG. Furthermore, the new noise terms $\mathcal{Z}^{\prime}_j$ will not remain independent of ${Pa}^{\prime}_j$ because the independence of $\mathcal{Z}_j$ is specific to the true causal structure in $\mathcal{G}_\mathcal{A}$. This implies that $\mathcal{G^{\prime}}_\mathcal{A^{\prime}}$ cannot satisfy the independence assumptions simultaneously with $\mathcal{G}_\mathcal{A}$, leading to a contradiction. \\


        \noindent Hence, under the assumptions of the ANM with nonlinear functions and independent noise or the LiNGAM model with linear functions and non-Gaussian noise, there exists no other DAG $\mathcal{G^{\prime}}_\mathcal{A^{\prime}} \neq \mathcal{G}_\mathcal{A}$ that can generate the same observational data distribution $P(\mathbf{X})$. Therefore, the true DAG $\mathcal{G}_\mathcal{A}$ is uniquely identifiable only from $P(\mathbf{X})$, thus concluding the proof.\\ 
    \end{proof}


    \begin{lemma}
    \label{lem2}
        Under the Post-Nonlinear (PNL) model assumption, there exists an identifiable DAG $\mathcal{G}_\mathcal{A}$ that generates the observed joint distribution of the data variables $\{X_1,...,X_d\}$.
    \end{lemma}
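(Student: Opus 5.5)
We argue in the same style as Lemma~\ref{lem1}, reducing the PNL case to a statement about independent residuals after an invertible transformation, and then invoking the identifiability result of \cite{Zhang2009OnTI}. The generating model is $X_j = g_j(f_j(\text{Pa}_j) + \mathcal{Z}_j)$, with $g_j$ invertible and $\mathcal{Z}_j \independent \text{Pa}_j$. Write $Y_j := g_j^{-1}(X_j)$. Then $Y_j = f_j(\text{Pa}_j) + \mathcal{Z}_j$, so each node satisfies an additive-noise relation after the invertible reparameterisation $g_j^{-1}$. The rearranged equation used in Section~\ref{sec:3.1} (the basis of $\mathcal{L}_{\text{PNL}}$) is exactly this statement. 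Existence of a DAG generating $P(\mathbf{X})$ is immediate: the true $\mathcal{G}_\mathcal{A}$, together with $\{f_j, g_j, \mathcal{Z}_j\}$, is acyclic and, by the Markov factorisation $P(\mathbf{X}) = \prod_j P(X_j \mid \text{Pa}_j)$ from Section~\ref{sec:2}, reproduces $P(\mathbf{X})$. The substance of the lemma is therefore uniqueness.

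For uniqueness, suppose a second PNL model over a DAG $\mathcal{G}'_{\mathcal{A}'} \neq \mathcal{G}_\mathcal{A}$ induces the same $P(\mathbf{X})$. Since both graphs are acyclic and differ, we pick a node and a pair of variables whose orientation disagrees. We choose a sink in the relevant subgraph so that, after conditioning on the common remaining parents, the comparison reduces to a bivariate problem: one model gives $X_j = g(f(X_k) + \mathcal{Z})$ and the other gives $X_k = g'(f'(X_j) + \mathcal{Z}')$, each with its own independent noise. This reduction follows the standard argument of Peters et al.\ (cited in the paper as \cite{Peters2013CausalDW}), which passes from bivariate to multivariate identifiability by conditioning on the remaining parents and using the Markov and faithfulness assumptions from Section~\ref{sec:2}.

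In the bivariate setting we invoke the theorem of \cite{Zhang2009OnTI}. It states that a PNL model admits a valid reverse-direction PNL model only when the densities, $f$, $g$ and the noise belong to a small, explicitly listed family of exceptional cases; the linear-Gaussian case is the prototypical one. We state as an explicit assumption that the data-generating functions avoid these cases. This assumption is what the paper's description of PNL identifiability in Section~\ref{sec:2} (invertible nonlinear $g_j$, independent noise) is meant to encode. Under it, a reverse model cannot exist, so the orientation is identified. Applying this edge by edge then gives $\mathcal{G}'_{\mathcal{A}'} = \mathcal{G}_\mathcal{A}$. We also remark that LiNGAM and ANM arise as the special case $g_j = \text{id}$, which makes Lemma~\ref{lem1} consistent with this result.

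The main obstacle is the multivariate-to-bivariate reduction. Conditioning on the other parents must preserve both the PNL form and the noise independence in both candidate models. This is delicate because $g_j$ acts after the parents are aggregated, so conditioning is done on the arguments of $f_j$ rather than on $Y_j$. I would first attempt it by fixing the values of the shared parents, which turns $f_j$ into a function of the single disputed variable while leaving the form $g_j(\cdot + \mathcal{Z}_j)$ intact. I would then check that the resulting conditional model still avoids the exceptional cases of \cite{Zhang2009OnTI} for almost every value of the conditioning set.
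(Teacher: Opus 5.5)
Your route is genuinely different from the paper's. The paper's proof stays at a single node. It keeps the true $f_j$ and the latent $N_j=g_j^{-1}(X_j)$ fixed and asserts that for any wrong parent set ${Pa}'_j$ the residual $N_j-f_j({Pa}'_j)$ is dependent on ${Pa}'_j$. It never confronts a competing PNL model $(f'_j,g'_j,\mathcal{Z}'_j)$ over another DAG, which is what uniqueness is about. It uses neither the bivariate theorem of \cite{Zhang2009OnTI} nor any multivariate reduction.

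You instead prove uniqueness by the standard route: the conditioning reduction of Peters et al.\ \cite{Peters2013CausalDW}, followed by the bivariate PNL theorem. Your remark that conditioning preserves the PNL form is correct. Suppose $K\to L$ in $\mathcal{G}:=\mathcal{G}_\mathcal{A}$, $L\to K$ in $\mathcal{G}':=\mathcal{G}'_{\mathcal{A}'}$, and $X_S$ consists of non-descendants of $L$ in $\mathcal{G}$ containing ${Pa}^{\mathcal{G}}_L\setminus\{K\}$. Then given $X_S=s$ you have $X_L=g_L(f_L(s,X_K)+\mathcal{Z}_L)$ with $\mathcal{Z}_L\independent X_K$, and symmetrically in $\mathcal{G}'$. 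Your route gives an actual argument: the step the paper asserts is precisely what needs proof, and it fails in the exceptional cases. The price is an explicit extra hypothesis, and that hypothesis has to be stated correctly.

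First repair: you cannot phrase the hypothesis as ``the data-generating functions avoid these cases'' and then ``check'' it after conditioning. The exceptional list of \cite{Zhang2009OnTI} constrains the whole triple: cause distribution, functions, and noise. In your reduction the cause distribution is $P(X_K\mid X_S=s)$, and no assumption on the marginal model controls it. You must assume non-exceptionality in conditional form, as in Peters et al.'s \emph{restricted} models. For every $j$, every $i\in{Pa}_j$ and every $S$ with ${Pa}_j\setminus\{i\}\subseteq S\subseteq \mathrm{ND}_j\setminus\{i,j\}$, require some $s$ with $p(s)>0$ such that $\bigl(P(X_i\mid X_S=s),\,f_j(s_{{Pa}_j\setminus\{i\}},\cdot),\,g_j,\,P_{\mathcal{Z}_j}\bigr)$ is non-exceptional and regular enough for the theorem. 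One such $s$ suffices for the contradiction, so ``almost every $s$'' is neither needed nor derivable.

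Nor do the conditions of Section~\ref{sec:2} encode this hypothesis. Take $T\sim\mathcal{N}(0,1)$, $X_1=\sinh(T)$, and $X_2=\tanh(\sinh^{-1}(X_1)+\mathcal{Z})$ with $\mathcal{Z}\sim\mathcal{N}(0,\sigma^2)$ independent of $X_1$. Then $(T,\tanh^{-1}X_2)$ is jointly Gaussian. Hence $X_1=\sinh(\beta\tanh^{-1}(X_2)+\mathcal{Z}')$ with $\beta=1/(1+\sigma^2)$ and Gaussian $\mathcal{Z}'\independent X_2$. Both directions are PNL models with nonlinear $f$, nonlinear invertible $g$ and independent Gaussian noise. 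Your hypothesis is therefore a genuine strengthening of the lemma, not a reading of it.

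Second repair: the conditioning set must be the union $S=({Pa}^{\mathcal{G}}_L\setminus\{K\})\cup({Pa}^{\mathcal{G}'}_K\setminus\{L\})$, not the ``common remaining parents.'' Otherwise the unshared parents stay random and the conditional model is not bivariate. Peters et al.'s combinatorial lemma supplies $L,K$ with $S\subseteq\mathrm{ND}^{\mathcal{G}}_L$ and $S\subseteq\mathrm{ND}^{\mathcal{G}'}_K$, which keeps both noises independent after conditioning. That lemma needs causal minimality with respect to \emph{both} graphs. Faithfulness does not supply this, since the paper assumes it only for the true graph. For a PNL model you can arrange minimality by deleting arguments on which $f'_j$ does not depend. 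Minimality is also what excludes, for instance, $\mathcal{G}'$ being a supergraph of $\mathcal{G}$. Without it, your claim that two different DAGs must contain an edge of disagreeing orientation is false. With these changes the proof is a single contradiction argument rather than an edge-by-edge one, and it goes through.
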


    \begin{proof}
        Let $\chi$ be a dataset consisting of $\{X_1,...,X_d\}$ data attributes, where each $X_j$ is described as follows:
        \[
            X_j := g_j(f_j({Pa}_j) + \mathcal{Z}_j), \forall j, \mathcal{Z}_j \independent f_j({Pa}_j), \mathcal{Z}_j \sim \mathcal{N}(\mu, \sigma^2_j),
        \]where ${Pa}_j$ is the set of parent nodes for $X_j$, $f_j$ are nonlinear functions modeling parent contributions, $g_j$ is a nonlinear function applied post-summation and $\mathcal{Z}_j$ is an independent Gaussian noise term, satisfying $\mathcal{Z}_j \independent {Pa}_j$. \\

        \noindent Moreover, let $N_j$ be the input to $g_j$ such that:
        \[
            N_j = f_j({Pa}_j) + \mathcal{Z}_j.
        \] Under the assumption that ${Pa}_j$ is the true parent set, the noise term $\mathcal{Z}_j$ is independent of its parents:
        \[
            \mathcal{Z}_j \independent {Pa}_j.
        \] In addition, $g_j$ does not affect the independence structure. Thus, for the true set of parents ${Pa}_j$, the residual noise $\mathcal{Z}_j$ remains independent of the parent variables. \\

        \noindent Under this setting, the statistical relationship between $X_j$, its parents, and the residual noise satisfies specific invariances:
        \[
            P(X_j, {Pa}_j) = P(X_j|{Pa}_j)P({Pa}_j),
        \] where $P(X_j|{Pa}_j)$ is derived from the PNL structure. \\

        \noindent Now, consider any alternative parent set ${Pa}^{\prime}_j \neq {Pa}_j$. For this incorrect set of parents, the residual noise $\mathcal{Z}_j$ is reconstructed as:
        \[
            \mathcal{Z}_j = N_j - f_j({Pa}^{\prime}_j).
        \]In this case, the core independence condition $\mathcal{Z}_j \independent {Pa}^{\prime}_j$ is violated. Therefore, when the parent set is incorrect, the residual noise $\mathcal{Z}_j$ will exhibit statistical dependencies with the variables in ${Pa}^{\prime}_j$. This implies that the conditional distribution $P(X_j|{Pa}^{\prime}_j)$ cannot reproduce the same invariance due to the introduced dependencies, thus concluding the proof.
    \end{proof}

    \begin{corollary}
    \label{col1}
        Under Lemmas \ref{lem1} and \ref{lem2}, the uniqueness property of $G_A$ allows us to reconstruct the generative process of $\mathbf{X}$.
    \end{corollary}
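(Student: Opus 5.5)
The plan is to combine the identifiability results in Lemmas \ref{lem1} and \ref{lem2} with the distribution-matching result of Proposition \ref{prop1}. First, fix whichever model class (LiNGAM, ANM or PNL) generated $P(\mathbf{X})$. By Lemma \ref{lem1} (LiNGAM/ANM) or Lemma \ref{lem2} (PNL), the DAG $\mathcal{G}_\mathcal{A}$ compatible with $P(\mathbf{X})$ within that class is unique. Then apply Proposition \ref{prop1}: any learned pair $(G_A,\mathcal{F})$ inside the same model class with $\mathbb{W}_p(P(\mathbf{X}),P_{G_A}(\tilde{\mathbf{X}}))=0$ satisfies $P_{G_A}(\tilde{\mathbf{X}})=P(\mathbf{X})$. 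Uniqueness then forces $G_A=\mathcal{G}_\mathcal{A}$, because otherwise two distinct DAGs in the class would induce the same distribution, contradicting the lemmas.

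Second, I will go from the graph to the mechanisms. With $G_A=\mathcal{G}_\mathcal{A}$, the parent sets ${Pa}_j$ are correct. By the causal Markov factorization from Section \ref{sec:2},
\[
P(\mathbf{X})=\prod_{j=1}^{d}P(X_j\mid {Pa}_j),
\]
so it suffices to recover each conditional $P(X_j\mid {Pa}_j)$.

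In the ANM/LiNGAM case, write $X_j=f_j({Pa}_j)+\mathcal{Z}_j$ with $\mathcal{Z}_j\independent {Pa}_j$. Matching conditionals gives $f_j({Pa}_j)=\mathbb{E}[X_j\mid {Pa}_j]-\mathbb{E}[\mathcal{Z}_j]$, which determines $f_j$ up to an additive constant that can be absorbed into the noise. The law of $\mathcal{Z}_j$ is then the law of the residual. This is the quantity that the MSE minimizer (Proposition \ref{prop2}) and the KLD/MMD terms (Propositions \ref{prop3}, \ref{prop4}) target.

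In the PNL case, the inverse $g_j^{-1}$ learned through \eqref{eq9} gives $g_j^{-1}(X_j)=f_j({Pa}_j)+\mathcal{Z}_j$, which reduces the problem to the ANM argument. Composing back with $g_j$ recovers the mechanism.

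Finally, I will argue that ancestral sampling with the recovered $\{f_j,g_j,P(\mathcal{Z}_j)\}$ along a topological order of $G_A$ produces the joint law $P(\mathbf{X})$, i.e. the generative process used in Step 2. The acyclicity constraint $h(A^{L_0}(f))=0$ guarantees that such an order exists.

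The main obstacle is that the PNL representation is unique only up to reparametrization. One can replace $g_j\mapsto g_j\circ(a\,\cdot+b)$ and rescale $f_j$ and $\mathcal{Z}_j$ accordingly, and Zhang--Hyvärinen's identifiability also excludes a few special non-generic cases. I would therefore state reconstruction as equality of the induced mechanism $X_j=\Phi_j({Pa}_j,\mathcal{Z}_j)$ in distribution, not equality of the individual components. I would also invoke the genericity conditions implicit in Lemma \ref{lem2} rather than reprove them.
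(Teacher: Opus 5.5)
Your argument is sound at the idealized level the paper works at, but it follows a different route from the paper's. The paper treats Corollary~\ref{col1} as immediate from the lemmas. Its only accompanying reasoning is that generating samples with preserved causal structure ``is only possible if $G_A=\mathcal{G}_\mathcal{A}$'', which it then says implies $P_{G_A}(\tilde{\mathbf{X}})=P(\mathbf{X})$.

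That inference runs from graph to distribution and tacitly assumes the mechanisms are also correct. The right graph with the wrong $f_j$, $g_j$ or noise laws yields a different joint law. The same defect affects the direction $G_A=\mathcal{G}_\mathcal{A}\Rightarrow P_{G_A}(\tilde{\mathbf{X}})=P(\mathbf{X})$ of Proposition~\ref{prop1}.

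You run the logic only in the valid direction:
\begin{itemize}
\item $\mathbb{W}_p=0$ gives $P_{G_A}(\tilde{\mathbf{X}})=P(\mathbf{X})$.
\item Identifiability within a fixed model class upgrades this to $G_A=\mathcal{G}_\mathcal{A}$.
\item The Markov factorization then shows that each conditional $P(X_j\mid {Pa}_j)$ is a functional of $P(\mathbf{X})$. With it, so are $f_j$ up to an additive constant and $P_{\mathcal{Z}_j}$ in ANM/LiNGAM, or the induced mechanism in distribution in PNL.
\end{itemize}

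Your route gives a precise meaning to ``reconstructing the generative process'': graph plus mechanisms, modulo the stated reparametrizations, and only on the support of ${Pa}_j$. It also treats PNL non-uniqueness honestly, which the paper ignores. The price is that you must make explicit what the paper leaves implicit: the learned model lies in the same identifiable class and actually attains $\mathbb{W}_p=0$.

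Three small repairs.

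\textbf{Lemma~\ref{lem1}.} You use it in the form $P(\mathbf{X})=P^{\prime}(\mathbf{X})\Rightarrow\mathcal{G}_\mathcal{A}=\mathcal{G}^{\prime}_{\mathcal{A}^{\prime}}$, which matches its prose. Its displayed implication is the converse and would not support your contradiction step, so say which version you rely on.

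\textbf{PNL paragraph.} Do not lean on \eqref{eq9} to pin down $g_j^{-1}$. It admits constant minimizers ($g_j^{-1}\equiv f_j\equiv c$), so by itself it identifies nothing. Your distributional formulation already handles PNL without it: once the parent sets are correct, the conditionals of $P_{G_A}$ coincide with those of $P$.

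\textbf{LiNGAM noise without a finite mean.} Replace the conditional-expectation formula by the fact that $P(X_j\mid {Pa}_j=\mathit{pa})$ is the law of $\mathcal{Z}_j$ shifted by $f_j(\mathit{pa})$. Distinct shifts of a probability measure on $\mathbb{R}$ never coincide, so $f_j$ is still determined up to a constant.
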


    \noindent Corollary \ref{col1} implies that under the causal model assumption employed in DAGAF, we can accurately generate synthetic samples with preserved causal structures, which is only possible if $G_A = \mathcal{G}_\mathcal{A}$. In turn, this implies that the implicitly generated distribution $P_{G_A}(\tilde{\mathbf{X}})$ is the same as the observed distribution $P(\mathbf{X})$. Therefore, we have demonstrated that there exists a single unique DAG capable of constructing the input data distribution, thus concluding the proof. 
\end{proof}

\section{Ablation study}
\label{ab}

We conducted an ablation study to determine the optimal configuration of the terms in the loss function for Step 1. We carried out nine experiments on the Sachs, ECOLI70, MAGIC-IRRI and ARTH150 datasets  under the ANM assumption, testing various combinations of loss terms. These continuous (Gaussian) datasets are available at \url{https://www.bnlearn.com/bnrepository/}. All cases include the Wasserstein-1 distance. The first configuration is labeled "\textbf{w/o recon loss}", where the reconstruction loss with its regularization is excluded from the training algorithm. The rest are named according to the terms included in the reconstruction loss, such as MSE \cite{Bickel2015MathematicalSB} and NLL \cite{GrofOnTM}. We also tested combinations of additional terms such as MMD \cite{Tolstikhin2016MinimaxEO} and KLD \cite{Kullback1951OnIA}. The results of this study are shown in Table \ref{tab:table8}.

\begin{table}[!ht]
\centering
\caption{DAGAF ablation study}
\label{tab:table8}
\begin{tabular}{@{}ccccc@{}}
\toprule
\multirow{2}{*}{Loss function} & \multicolumn{4}{c}{SHD}                               \\ \cmidrule(l){2-5} 
                               & Sachs      & ECOLI70     & MAGIC-IRRI  & ARTH150      \\ \midrule
w/o recon loss                 & 21         & 115         & 163         & 377         \\
recon loss (MSE)               & 14         & 91          & 117         & 288          \\
recon loss (NLL)               & 16         & 106         & 132         & 320          \\
MSE + MMD                      & 10         & 57          & 80          & 189          \\
NLL + MMD                      & 14         & 91          & 117         & 288          \\
MSE + KLD                      & 12         & 69          & 99          & 221          \\
NLL + KLD                      & 12         & 69          & 99          & 221          \\
MSE + KLD + MMD                & \textbf{9} & \textbf{52} & \textbf{71} & \textbf{175} \\
NLL + KLD + MMD                & 11         & 60          & 86          & 197          \\ \bottomrule
\end{tabular}%
\end{table}

The ablation study reveals the optimal combination of loss terms for our method. As shown in Table \ref{tab:table8}, the best set of loss terms in Step 1 includes MSE, KLD, MMD, and adversarial training. Further details on each of these metrics and regularization are provided in Section \ref{sec:3.1}. 

\section{Sensitivity analysis}
\label{ac}

To ensure model robustness, we perform a sensitivity analysis to examine how the training responds to different hyper-parameter settings. This study measures the accuracy of DAG reconstruction (i.e., SHD) under various hyper-parameters, including learning and dropout rates (\textbf{lr, dropout}), noise vector and batch sizes (\textbf{z-size, batch-size}). We begin with a baseline setting of \textbf{lr = 0.001, dropout = 0.5, z-size = 1, batch-size = 100}, then modify each value individually to observe the changes in SHD. All experiments were conducted on the Sachs dataset \color{black} by applying the ANM causal model, and the results are presented in Table \ref{tab:table9}.

\begin{table}[!ht]
\centering
\caption{DAGAF sensitivity analysis }
\label{tab:table9}
\begin{tabular}{@{}cc@{}}
\toprule
\multirow{2}{*}{Hyper-parameters}                       & Sachs Dataset \\ \cmidrule(l){2-2} 
                                                        & SHD           \\ \midrule
lr = 3e-3, dropout = 0.5, z-size = 1, batch-size = 100  & 9             \\
lr = 3e-3, dropout = 0.0, z-size = 1, batch-size = 100  & 10            \\
lr = 3e-3, dropout = 0.5, z-size = 2, batch-size = 100  & 10            \\
lr = 3e-3, dropout = 0.5, z-size = 5, batch-size = 100  & 11            \\
lr = 3e-3, dropout = 0.5, z-size = 1, batch-size = 500  & 9             \\
lr = 3e-3, dropout = 0.5, z-size = 1, batch-size = 1000 & 10            \\
lr = 2e-4, dropout = 0.5, z-size = 1, batch-size = 100  & 11            \\
lr = 1e-3, dropout = 0.5, z-size = 1, batch-size = 100  & 12            \\ \bottomrule
\end{tabular}%
\end{table}

The results from Table \ref{tab:table9} indicate that lowering the learning and dropout rates significantly affects the performance of our model. On the other hand, increasing the size of the noise vector and the input data batch results in only minor variations in the accuracy of the algorithm.


\section{Additional results}
\label{ad}
In this section, we present further examples to reinforce the data quality analysis discussed in Section \ref{ref:sec4.4}. We provide real-synthetic statistical comparisons for all features (Table \ref{tab:pvalue}), additional visualizations of the synthetic feature distributions (Figure \ref{fig:ad-stat}), and the remaining machine learning regression results (Figure \ref{fig:ad-featimp}).

\begin{figure}[!htp]
\centering	 
    \begin{subfigure}[t]{0.49\textwidth}
        \includegraphics[width=\textwidth]{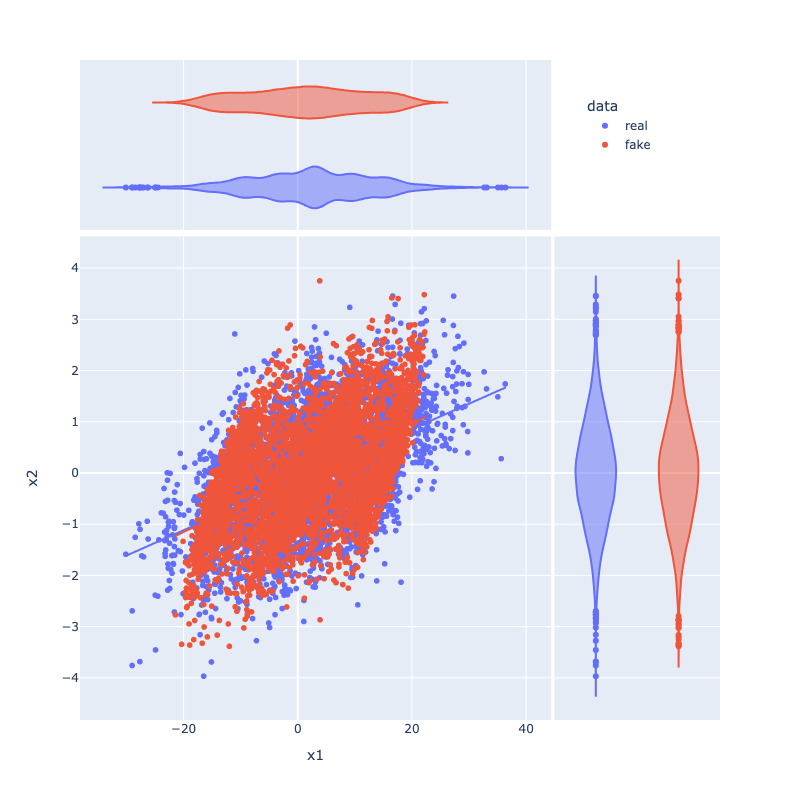} 
    \end{subfigure}
    \begin{subfigure}[t]{0.49\textwidth}
        \includegraphics[width=\textwidth]{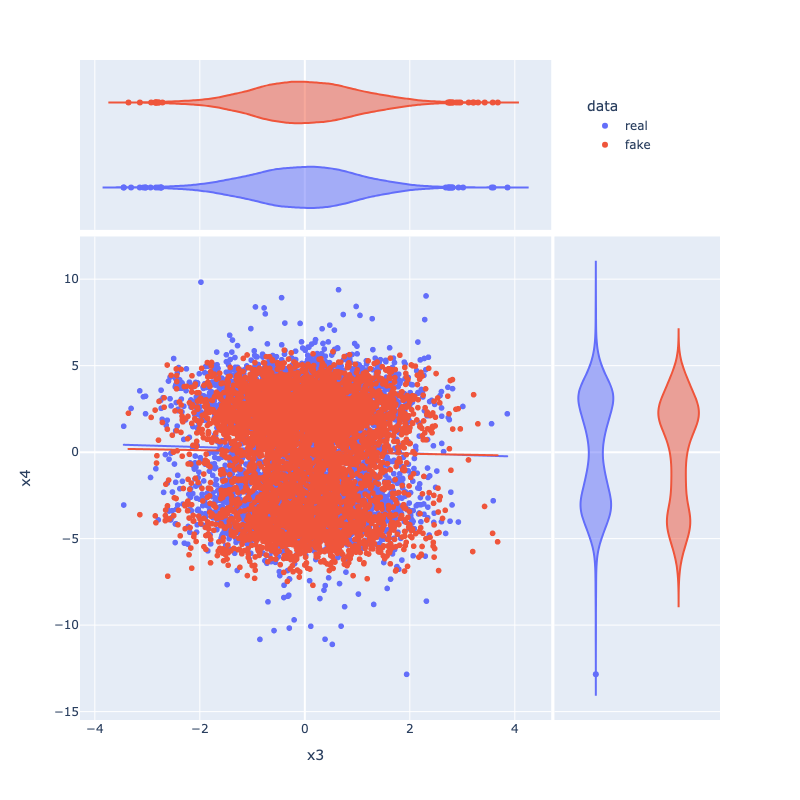}
    \end{subfigure}
    \begin{subfigure}[t]{0.49\textwidth}
        \includegraphics[width=\textwidth]{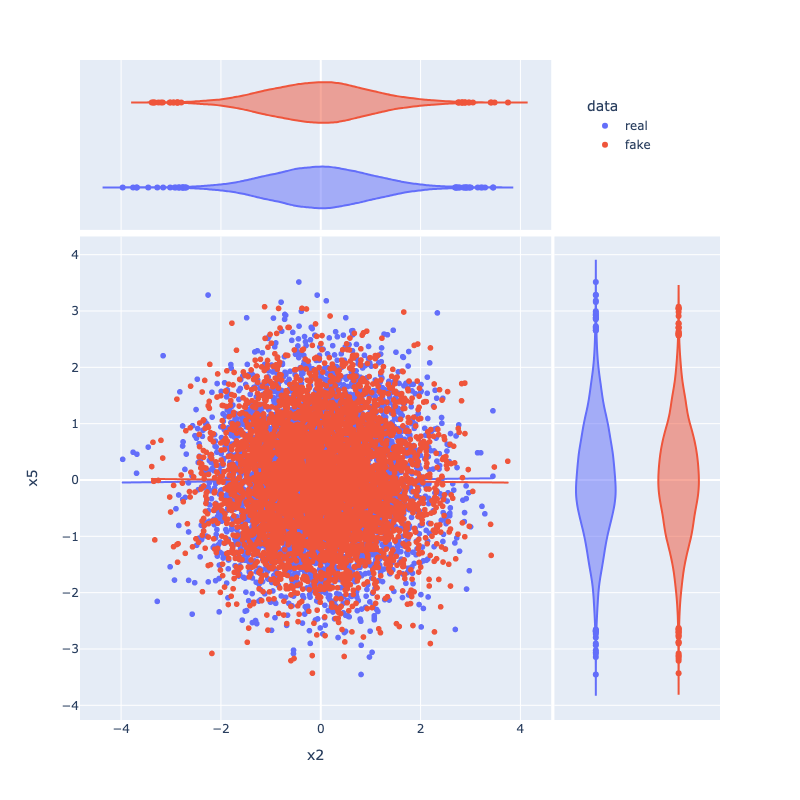}
    \end{subfigure}
     \begin{subfigure}[t]{0.49\textwidth}
        \includegraphics[width=\textwidth]{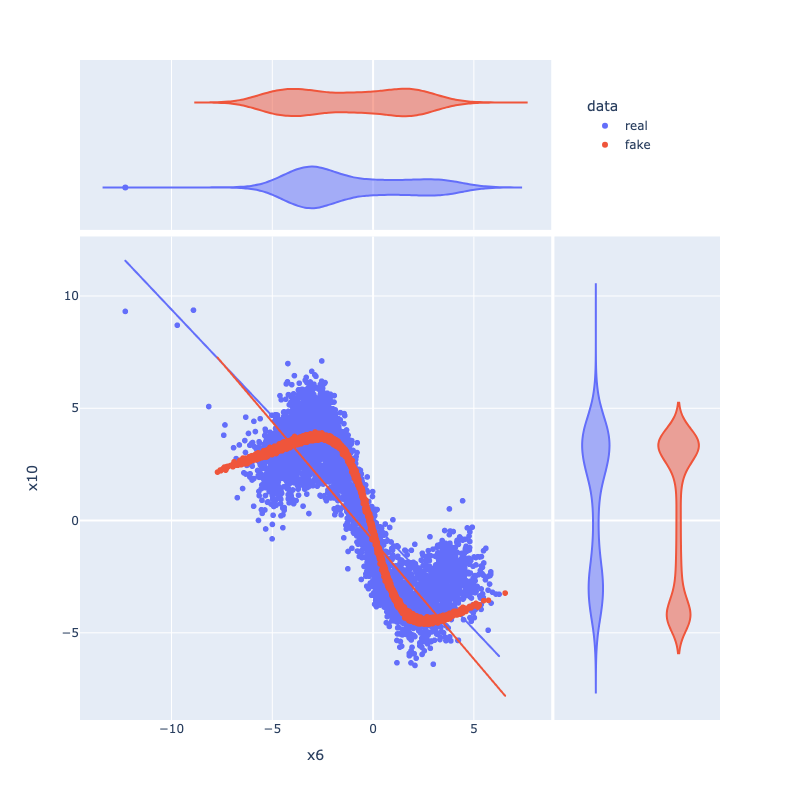}
    \end{subfigure}
\caption{Further examples of the synthetic joint and marginal distributions for our method on the dataset presented in Section \ref{ref:sec4.4}. We observe multiple cases with different distribution shapes. Additionally, we depict one case of severe mode collapse (bottom) in the produced data from DAGAF. 
}
\label{fig:ad-stat}
\vspace{-5mm}
\end{figure}

\begin{figure}[!htp]
\vspace{-5mm}
\centering	 
    \includegraphics[width=\textwidth]{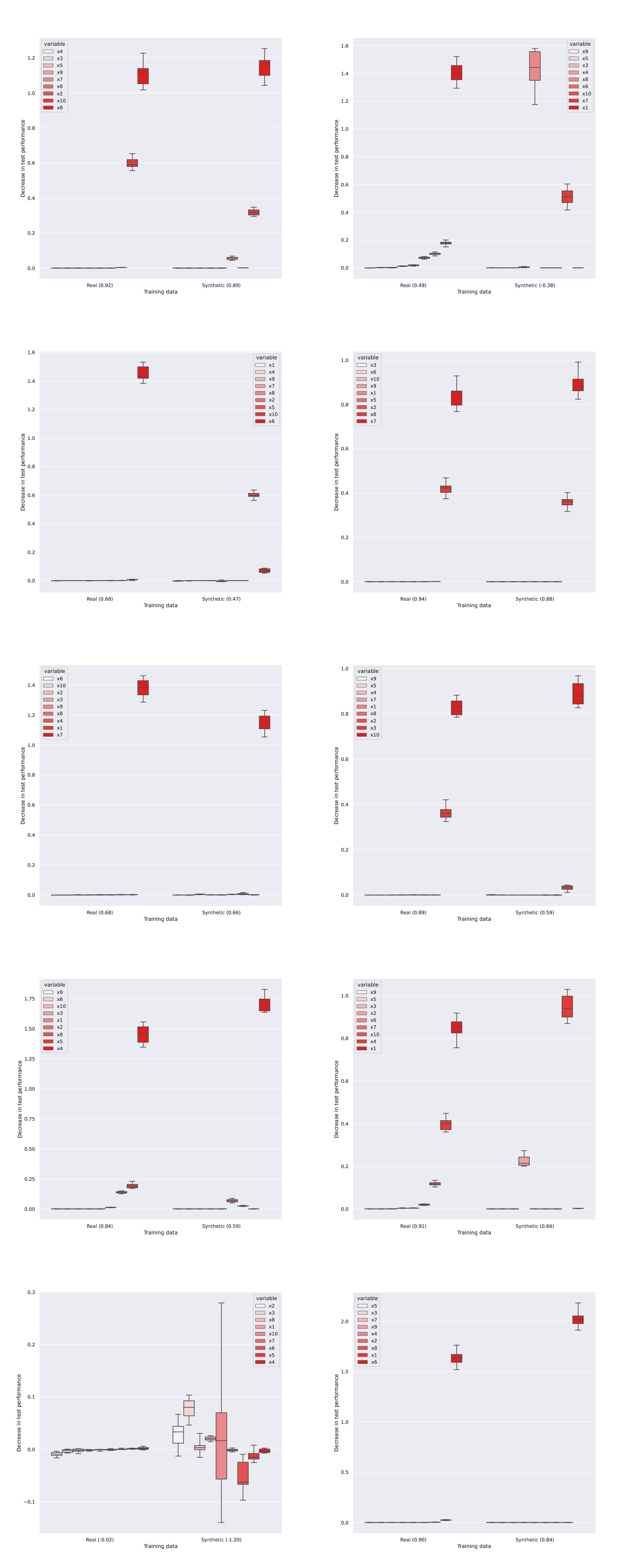} 
    %
\caption{Remaining examples of feature importances to supplement the results in Section \ref{ref:sec4.4}. We observe some failure cases, where the synthetic features differ significantly from their real counterparts. 
}
\label{fig:ad-featimp}
\vspace{-5mm}
\end{figure}

\begin{table}[!htp]
\centering
\caption{Mann-Whitney t-test results for all real and synthetic features to supplement Figure \ref{fig:2}. We observe some failure cases, where the real and synthetic features differ significantly ($p<0.05$).}
\label{tab:pvalue}
\begin{tabular*}{0.5\textwidth}{@{\extracolsep{\fill}}>{\centering\arraybackslash}p{0.25\textwidth} >{\centering\arraybackslash}p{0.25\textwidth}@{}}
\toprule
Feature & $p$-value \\
\midrule
x1  & 7.7952e-07            \\
x2  & 0.5004            \\
x3   & 0.1683            \\
x4  &  0.0020   \\
x5 & 0.8563            \\
x6  & 0.9127            \\
x7 & 0.0364            \\
x8  & 0.1747            \\
x9  & 0.2089            \\
x10  & 6.4502e-26            \\ 
\bottomrule
\end{tabular*}%
\end{table}

\section{DAGAF pseudo-code}
\label{ae}
\begin{algorithmic}[1]
\scriptsize
\State $\lambda \gets 0$, $c \gets 1$
\State $current\_h(A^{L_0}(f)) \gets \infty$, $h\_tol \gets 1e-8$ 
\State $k\_max\_iter \gets 100$, $epochs \gets 300$
\For{k $< k\_max\_iter$}
\While{$c < 1e+20$} 
\For{epoch $< epochs$}
\State \If{$pnl == True$} \Comment{The beginning of the Causal Discovery (CD) Step}
        \State $\tilde{X} := \{g_1(f_1({Pa}_1;W^1_1,...,W^L_1) + \mathcal{Z}_1),...,g_d(f_d({Pa}_d;W^1_d,...,W^L_d) + \mathcal{Z}_d)\}$ 
    \Else
        \State $\tilde{X} := \{f_1({Pa}_1;W^1_1,...,W^L_1) + \mathcal{Z}_1,...,f_d({Pa}_d;W^1_d,...,W^L_d) + \mathcal{Z}_d\}$ 
    \EndIf \\

    \State DiscLoss = $L_D(\mathbf{X}, \tilde{\mathbf{X}})$
    \State GenLoss = $L_G(\mathbf{X})$
    \State RecLoss = $L_{REC}(\mathbf{X}, \tilde{\mathbf{X}}) + \frac{c}{2}|h(A^{L_0})|^2 + \lambda h(A^{L_0})$
    \State PnlLoss = $L_{PNL}(X^{-1}, \tilde{\mathbf{X}})$ \Comment{if PNL is assumed} \\

    \State DiscGradients = DiscLoss.backward()
    \State GenGradients = GenLoss.backward()
    \State RecGradients = RecLoss.backward()
    \State PnlGradients = PnlLoss.backward() \Comment{if PNL is assumed} \\
    
    \State DiscParameters = DiscParameters - $1e-3$ * DiscGradients 
    \State GenParameters = GenParameters - $1e-3$ * GenGradients
    \State RecParameters = RecParameters - $1e-3$ * RecGradients
    \State PnlParameters = PnlParameters - $1e-3$ * PnlGradients \Comment{if PNL is assumed} \\

    \State $DS\{W^{L_0}\} \gets CD\{W^{L_0}\}$ \Comment{Parameter transfer between steps}
    
\State \If{$pnl == True$} \Comment{The beginning of the Data Synthesis (DS) Step}
        \State $\tilde{X} := \{g_1(G_1({Pa}_1;W^1_1,...,W^L_1) + Z_1),...,g_d(G_d({Pa}_d;W^1_d,...,W^L_d) + Z_d)\}$ 
    \Else
        \State $\tilde{X} := \{G_1({Pa}_1;W^1_1,...,W^L_1) + Z_1,...,G_d({Pa}_d;W^1_d,...,W^L_d) + Z_d\}$ 
    \EndIf \\

    \State DiscLoss = $L_D(\mathbf{X}, \tilde{\mathbf{X}})$
    \State GenLoss = $L_G(Z)$ \\

    \State DiscGradients = DiscLoss.backward()
    \State GenGradients = GenLoss.backward() \\

    \State DiscParameters = DiscParameters - $1e-3$ * DiscGradients 
    \State GenParameters = GenParameters - $1e-3$ * GenGradients \\
\EndFor
\If{$h(A^{L_0}(f)) > 0.25$}
\State $c \gets c * 10$
\Else
\State $break$
\EndIf 
\EndWhile 
\State $current\_h(A^{L_0}(f)) \gets h(A^{L_0}(f))$
\State $\lambda \gets c * current\_h(A^{L_0}(f))$
\If{$current\_h(A^{L_0}(f)) \leq h\_tol$}
\State $break$
\EndIf 
\EndFor
\end{algorithmic}

\end{appendices}


\section*{Declarations}

\textbf{Competing interests} The authors declare that they have no competing financial or non-financial interests in relation to this work. \\

\noindent \textbf{Ethical and informed consent for data used} Not applicable. \\

\noindent \textbf{Data availability} The authors confirm that all data (with their corresponding repository and citation links) relevant to the research carried out to support their work are included in this article. \\

\noindent \textbf{Authors contribution} Hristo Petkov (First Author) is responsible for software development, theoretical analysis, conducting causal experiments and draft preparation. Calum MacLellan (Second Author) is responsible for performing data synthesis experiments and draft revision. Feng Dong (Third Author) is responsible for overall draft proofreading and refactoring. \\

\noindent \textbf{Funding} The authors declare that their work has been funded by the United Kingdom Medical Research Council (Grant Reference: MR/X005925/1) throughout the duration of their associated research project (Virtual Clinical Trial Emulation with Generative AI Models, Duration: Sept 2022 – Feb 2023).

\bibliography{sn-bibliography.bib}

\end{document}